\newcommand\rot[1]{\rlap{\rotatebox{45}{#1}}}
\newcommand\Included{\textcolor{teal}{\checkmark}}
\newcommand\NotApplicable{n.a.}
\definecolor{customgreen}{RGB}{34,139,34} 
\definecolor{customyellow}{RGB}{255,215,0} 
\tikzset{
    pointerbox/.style={
        rectangle,
        rounded corners,
        fill=red!30,
        draw=black,
        align=center,
        minimum width=3.8cm,
        minimum height=1cm
    },
    greenbox/.style={
        rectangle,
        rounded corners,
        draw,
        fill=customgreen!30,
        align=center,
        minimum width=1.5cm,
        minimum height=0.8cm
    },
    yellowbox/.style={
        rectangle,
        rounded corners,
        draw,
        fill=yellow!30,
        align=center,
        minimum width=1.5cm,
        minimum height=0.8cm
    },
    yellowbox-marked/.style={
        rectangle,
        rounded corners,
        draw=cyan, 
        line width=1.5pt, 
        fill=yellow!30,
        align=center,
        minimum width=1.5cm,
        minimum height=0.8cm
    },
    yellowbox-pending/.style={
        rectangle,
        rounded corners,
        draw,
        fill=yellow!30,
        align=center,
        minimum width=1.5cm,
        minimum height=0.8cm,
        opacity=0.5
    },
    bluebox/.style={
        rectangle,
        rounded corners,
        draw,
        fill=blue!20,
        align=center,
        minimum width=1.5cm,
        minimum height=0.8cm
    },
    invisiblebox/.style={
        rectangle,
        rounded corners,
        align=center,
        minimum width=1.5cm,
        minimum height=0.8cm
    },
    arrow/.style={
        -{Stealth[length=3mm]},
        thick
    },
    arrow-marked/.style={
        -{Stealth[length=3mm, scale=1.5]},
        thick,
        draw=cyan, 
        line width=3pt,
        opacity=0.7 
    },
    dashedarrow/.style={
        -{Stealth[length=3mm]},
        thick,
        dashed
    },
    line/.style={
        thick,
        gray,
        dashed
    },
    doppelgangerarrow/.style={
        -{Stealth[length=3mm]},
        thick,
        draw=red,
        dashed,
    },
    remaining-green-arrow/.style={
        -{Stealth[length=3mm]},
        thick,
        draw=blue,
        dashed,
    },
}
\theoremstyle{plain}
\newtheorem{theorem}{Theorem}[section]
\newtheorem*{theorem*}{Theorem}
\newtheorem{corollary}[theorem]{Corollary}
\newtheorem*{corollary*}{Corollary}
\theoremstyle{definition}
\newtheorem{definition}[theorem]{Definition}
\newtheorem{assumption}[theorem]{Assumption}
\theoremstyle{remark}
\title{Limits of Transformer Language Models\\on Learning to Compose Algorithms}
\author{
Jonathan Thomm$^{1,2}$\thanks{Research conducted at IBM Research -- Zurich.}\\
\texttt{jthomm@ethz.ch} \\
\And 
Giacomo Camposampiero$^{1,2}$\\
{\tt\small giacomo.camposampiero1@ibm.com}
\And
Aleksandar Terzic$^{1,2}$\\
{\tt\small aleksandar.terzic1@ibm.com}
\And 
Michael Hersche$^{1}$\\
{\tt\small michael.hersche@ibm.com}
\And 
Bernhard Schölkopf$^{2,3}$\\
{\tt\small bs@tuebingen.mpg.de}
\And 
Abbas Rahimi$^{1}$\\
{\tt\small abr@zurich.ibm.com}
\And
{
\normalfont $^{1}$IBM Research -- Zurich, $^{2}$ETH Zurich, $^{3}$MPI Tübingen}
}
\begin{document}

\doparttoc 
\faketableofcontents

\maketitle

\begin{abstract}
    We analyze the capabilities of Transformer language models in learning compositional discrete tasks. To this end, we evaluate training LLaMA models and prompting GPT-4 and Gemini on four tasks demanding to learn a composition of several discrete sub-tasks.
    In particular, we measure how well these models can reuse primitives observable in the sub-tasks to learn the composition task. 
    Our results indicate that compositional learning in state-of-the-art Transformer language models is highly sample inefficient: LLaMA requires more data samples than relearning all sub-tasks from scratch to learn the compositional task; in-context prompting with few samples is unreliable and fails at executing the sub-tasks or correcting the errors in multi-round code generation.
    Further, by leveraging complexity theory, we support these findings with a theoretical analysis focused on the sample inefficiency of gradient descent in memorizing feedforward models. We open source our code at {\url{https://github.com/IBM/limitations-lm-algorithmic-compositional-learning}}.
\end{abstract}

\section{Introduction}
While Large Language Models (LLMs) are known to perform well on natural language generation tasks~\cite{LLM_formal_vs_functional, llama}, they exhibit failures on reasoning~\citep{concept-arc-reasoning-benchmark,abductive-commonsense-reasoning,CFQ,chatgpt-is-inexperienced-solver,VisualReasonVRF,kim2020cogs,measuring-compositionality-gap,Camposampiero_2023_CVPR}, mathematics~\citep{MATH,EmpiricalMath_LLM}, causal inference~\citep{can-llms-infer-causation-from-correlation,llm-can-causality-but-limited,Jinetal23}, and algorithmic tasks~\citep{faith-and-fate,hyper-universal-transformer-apple,transformers-learn-shortcuts-to-automata}. Many interesting algorithmic tasks rely on function composition, which is an act of combining simple functions (e.g., primitive sub-tasks) to build more complicated ones. In this paper, we dive into the question of how sample-efficient Transformer-based~\cite{vaswani2023attention} language models are when learning to compose as well as to decompose algorithmic procedures.
We empirically approach the aforementioned question by analyzing the performance of Transformer language models on a set of compositional algorithmic tasks.
Given that a Transformer language model has enough samples to learn all primitive sub-tasks, we define four hypotheses on how well it learns the composition task within the same training routine:
\begin{enumerate}[leftmargin=*,label=$\mathcal{H}$\textsubscript{\arabic*}.]
    \label{hypotheses}
    \item A Transformer language model learns the task composition with a constant number of samples (only used in the in-context learning setting).
    \item A Transformer language model learns a compositional task with fewer samples than those required to learn its most difficult sub-task.
    \item A Transformer language model learns a compositional task with fewer samples than the sum of the samples needed to learn every sub-task.
    \item A Transformer language model needs more data samples than in $\mathcal{H}_3$.
\end{enumerate}

This paper makes the following main contributions:
\begin{itemize}[leftmargin=*]
    \item In Section~\ref{sec:tasks-intro}, we introduce a family of two new algorithmic tasks with a compositional structure that is well-suited for creating systematic sub-tasks and testing compositionality. We specifically design these synthetic tasks with independently observable sub-tasks such that the composition is easily inferable from the sub-tasks but harder to learn from scratch.
    
    \item In Section~\ref{sec:llama}, we train LLaMA models~\citep{llama} on this family of the tasks as well as two tasks investigated by \citet{faith-and-fate}. We ensure that all necessary sub-tasks are learned and test how efficiently the models can learn their compositional re-combinations. We show that training LLaMA models from scratch fails to compose all learned sub-tasks under hypotheses $\mathcal{H}_2$ and $\mathcal{H}_3$, making $\mathcal{H}_4$ the most plausible hypothesis for all four tasks. Furthermore, we propose a formal bound, showing that current supervised gradient descent training fails in compositional learning in the limit.
    
    \item In Section~\ref{sec:gpt4}, we investigate GPT-4 and Gemini on all tasks and observe their failures to perform the tasks, or multi-round code generation, with task description and various chain-of-thought examples. 
    This shows that in-context learning with few samples is unreliable and fails to compose knowledge from sub-tasks and that $\mathcal{H}_1$ does not hold for the investigated tasks.
\end{itemize}

\section{Preliminaries}
\label{sec:compositionality}

\textbf{Computational graph} Let $A$ be a deterministic algorithm and let $\mathcal{F}_A$ be a set of deterministic primitive operations that can be used by $A$ during execution.
Given an input $x$, we define the computational graph for the algorithm $A$ as $G_{A(x)}=(V,E)$. 
$G_{A(x)}$ is a weakly connected, directed acyclic graph.
The nodes $v\in V$ represent all intermediate variables’ values during $A$’s execution, while the edges $e\in E$ represent the function arguments involved in the computation of their target nodes.
One source node $s$ describes the input $x$, while the (single) sink node $t$ represents the output $A(x)=t$.
For every non-source node $v$, $op(v)\in \mathcal{F}_A$ denotes the operation applied to compute $v$.
Figure \ref{fig:compgra} shows an example with a toy input for one of the tasks (PEN) later introduced in Section \ref{sec:tasks-intro}.

\begin{figure}[b!]
    \centering
    \includegraphics[width=\textwidth]{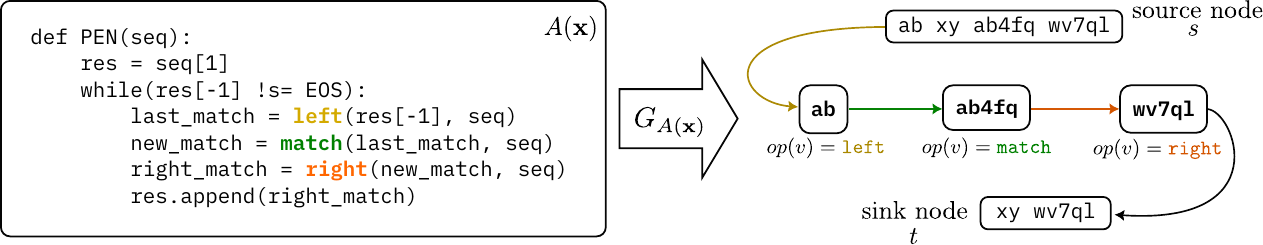}
    \vspace{-0.3cm}
    \caption{\small Translation of a compositional algorithmic task $A$, PEN (see Section \ref{sec:tasks-intro} for details), into its corresponding compositional graph $G_{A(x)}$, for the input $x=\text{``ab xy ab4fq wv7ql''}$. The operations (edges) are color-matched with the respective operations in the pseudo-code of the algorithmic task $A(x)$.}
    \label{fig:compgra}
\end{figure}

\textbf{Sub-task definition} While \citet{faith-and-fate} define one sub-task for each operation occurring in the nodes of $G_{A(x)}$,  we relax this constraint and also consider sub-tasks composed of multiple operations from $\mathcal{F}_A$.
However, we require every operation in the set of primitives $\mathcal{F}_A$ to be \textit{independently observable}.
Given the set $\mathcal{S}$ of all the defined sub-task graphs, we say that an operation $f=op(v)$ is {independently observable} in a sub-task graph $G'_{A(x)}$ if either:
\begin{itemize}
    \item $v$ is the only non-source node in $G'_{A(x)}$ (i.e., there is only one operation in the sub-graph).
    \item $v$ is not the only non-source node in $G'_{A(x)}$ and all other operations in the nodes of $G'_{A(x)}$ are independently observable in the other sub-task graphs in $\mathcal{S}\backslash\{G'_{A(x)}\}$.
\end{itemize}
By using this constraint on the definition of the sub-tasks, we aim to achieve a middle-ground between completely synthetic settings (where each primitive is presented in isolation) and real-world settings (where the primitives are often grouped and correlated).

\section{Probing compositionality with algorithmic tasks}
\label{sec:tasks-intro}
Pointer execution tasks are a family of algorithmic tasks initially proposed to benchmark the generalization capabilities of deep learning models~\cite{PVR-tasks}.
They are designed to limit the number of confounders in the data and force the model to learn an algorithmic (general) solution, making solving the task with statistical shortcuts impossible.
Furthermore, being algorithmic tasks, they are particularly suited for testing compositional learning, as they can be naturally decomposed into atomic operations.
Hence, the introduction of this family of tasks would allow operating in a fully controlled environment, limiting the impact of exogenous factors on the empirical observations while having the possibility to stress-test compositionality.

Motivated by these premises, we introduce two novel pointer execution tasks for testing compositional learning, Pointer Execution's neighbor (PEN) and Pointer Execution Reverse Multicount (PERM).
To ensure the validity of our empirical evaluation, we then further extend our experiments to two well-established tasks for testing compositionality in Transformer-based language models, Highest Subsequence Sum (HSS) and Multiplication (MUL)~\cite{faith-and-fate}.
We decompose each task in a set of sub-tasks $\mathcal{S}$, such that every primitive operation of the task is independently observable in at least one sub-task. A more detailed exposition of the properties of $\mathcal{S}$ is included in Appendix \ref{app:subtasks-properties}.

\subsection{Pointer execution's neighbor (PEN)}
We introduce the Pointer Execution’s neighbor (PEN) task, where the goal is to jump between different words in the sequence according to a matching criterion while outputting the right neighbors of the matched words.
This task is inspired by C-PVR, a task recently introduced by~\citet{hyper-universal-transformer-apple} and itself based on the Pointer Value Retrieval task~\citep{PVR-tasks}.
A sketch of the task is shown in Figure~\ref{fig:pe-tasks} (left).
We identify three primitive operations to be \texttt{left} (get the green left neighbor), \texttt{match} (get the matching green word), and \texttt{right} (get the right yellow neighbor).
We, therefore, split PEN into three sub-tasks that guarantee independent observability for each primitive: \textit{copy} (Cpy), where the solver has to copy an input sequence of words (making the \texttt{right} primitive observable); \textit{reverse copy} (RCpy), where the solver has to copy an input sequence of words in the reversed order (i.e. the last word first, making the \texttt{left} primitive observable); \textit{Pointer Execution} (PE), where the solver has to match words in a sequence (making the \texttt{match} primitive observable). Additionally, we define the sub-task \textit{Pointer Execution Verbose} (PEV) to facilitate the learning of the task. PEV requires solving the same problem as PEN with the addition of outputting both the matching words and their neighbors, making it less abstract (see Figure~\ref{fig:pe-tasks}).
More details on PEN can be found in Appendix \ref{appendix:pe-details}.

To make the task more challenging, we add ``attention traps'' to the input sequences.
These traps add spurious matches between neighboring (yellow) tokens (two spurious matches per yellow neighbor), such that the model could be tricked into matching the wrong sequence of tokens (yellow instead of green). 
An example of this kind of trapping mechanism is included in Figure~\ref{fig:pe-tasks} (left column), and in a more explicit visualization in Figure~\ref{fig-appendix:pen-details}.
The traps are not added to the main (green) tokens, where the sequence of matches is always deterministic.

\begin{figure}[b!]
    \centering
    \includegraphics[width=\textwidth]{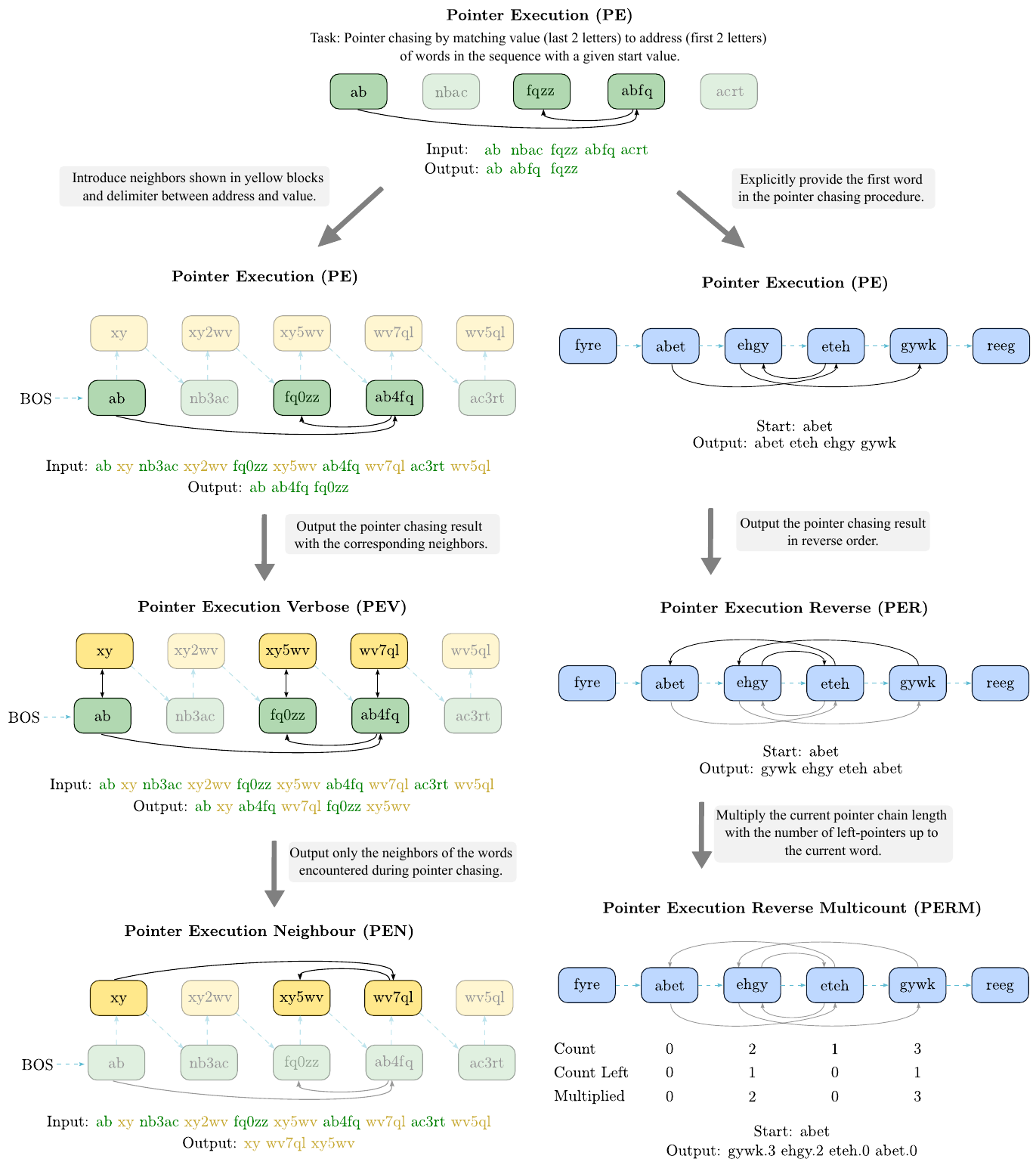}
    \vspace{-0.3cm}
    \caption{\small \textbf{Introduced compositional algorithmic tasks.} \textbf{Left:} The Pointer Execution (PE)'s neighbor (PEN), together with the Pointer Execution (PE) and Pointer Execution Verbose (PEV) sub-tasks. Starting left, the output is obtained by matching words and predicting the current word (in PE) or its neighbor (in PEV and PEN). Our matching criterion is that the two end characters of the current word are equal to the first two characters of the matched word. By ensuring that there are no ambiguities in the input string, an attention mechanism can find the match by retrieving the last two characters of the word and matching it with the (unique) word that starts with them. \textbf{Right:} The Pointer Execution Reverse Multicount (PERM), together with the Pointer Execution (PE)  and Pointer Execution Reverse (PER) sub-tasks. PERM first outputs the last word in the matching sequence and then goes backward. The number in the answer for each word is the count of matches times the count of left matches (i.e., arrow to the left in the forward matching sequence). }
    \label{fig:pe-tasks}
\end{figure}

\subsection{Pointer execution reverse multicount (PERM)}
\label{section:random-pointer-execution-reverse}
We introduce the Pointer Execution Reverse Multicount (PERM) task.
This task is conceptually similar to the PEN task. However, instead of matching forward and predicting the current word (or its neighbor), the solver has to output the \textit{reversed} matched sequence. 
To increase the difficulty of the task, we enrich it with additional operations on the indices.
In particular, for each element in the matched sequence, the solver needs to collect the number of matches and the number of left matches up to that point, multiply them, and output the result.
A visualization of the task is presented in Figure~\ref{fig:pe-tasks}.
We omit attention traps because there are no longer neighbor tokens in the input sequence. 

We identify the primitive operations of the PERM task to be \texttt{match} (common to PEN), \texttt{reverse} (reverse a sequence), and \texttt{multicount} (indexes operations, which includes counting the number of matches and left matches, as well as multiplying the two counts together).
The primitives \texttt{left} and \texttt{right} are no longer needed, as we drop the concept of neighbors in this task.
To make every primitive independently observable and learn PERM, we formulate three sub-tasks: \textit{Pointer Execution} (PE), inherited from PEN; \textit{Pointer Execution Reverse} (PER), where the solver has to match a sequence and reverse it, combining the primitives \texttt{match} and \texttt{reverse} (making the latter observable); \textit{Pointer Execution Multicount} (PEM), where the model has to match the sequence and compute the index operations, combining the primitives \texttt{match} and \texttt{multicount} (making the latter observable).

\subsection{Highest subsequence sum (HSS)}
Given a sequence of numbers, the Highest Subsequence Sum (HSS) task~\cite{faith-and-fate}  consists in finding the highest sum of a number subsequence where no two numbers are neighbors. For this task, there exists a simple linear-time, constant-space dynamic programming (DP) solution which we use to generate sub-tasks. 
The dynamic programming recurrence is 
\begin{align}
\label{eq:dp-step}
    dp(i+1) = \max(dp(i-1)+\text{number}_{i+1}), dp(i)),
\end{align}
where $\text{number}_{i+1}$ is the $i+1$-th number in the input sequence.
The final answer is $dp(n)$ with $n$ being the length of the input sequence. 
We identify one fundamental primitive used in this task inspired by this formulation of the problem, \texttt{dp\_step}, presented in Equation \ref{eq:dp-step}.
Hence, we define a single sub-task, the Subsequence Sum Execution (SSE) sub-task, to make the primitive \texttt{dp\_step} observable.
In this sub-task, the solver is required to execute the DP recurrence, explicitly computing $dp(i)$ for the position $i$ in the output (including if the new number was taken or not).

\subsection{Multiplication (MUL)}
The Multiplication (MUL) task~\cite{faith-and-fate} involves multiplying two multi-digit numbers in base 10. It is often used to assess the symbolic and compositional capabilities of LLMs~\cite{deepmind-math-dataset, math-local-intermediate-results}.
\citet{faith-and-fate} use this task to find that LLMs do not generalize well to out-of-domain computation graph depth and width.
We identify two main primitive operations: \texttt{digit\_mul} (digit multiplication between a number in base 10 and a digit) and \texttt{add} (addition between numbers).
We then formulate two corresponding sub-tasks to guarantee that each one of these operations is observable: digit-multiplication (MUL), where the solver has to solve multiplications between a number in base 10 and a digit (making \texttt{digit\_mul} observable) and addition (ADD), where the solver has to add numbers in base 10 (making \texttt{add} observable).
We do not explicitly train on shifting the numbers with zeros before adding them up.

\section{Sample efficiency on compositional learning}
\label{sec:llama}


\begin{figure}[b!]
    \centering
    \begin{subfigure}[b]{0.49\textwidth}
        \centering
        \includegraphics[width=\textwidth]{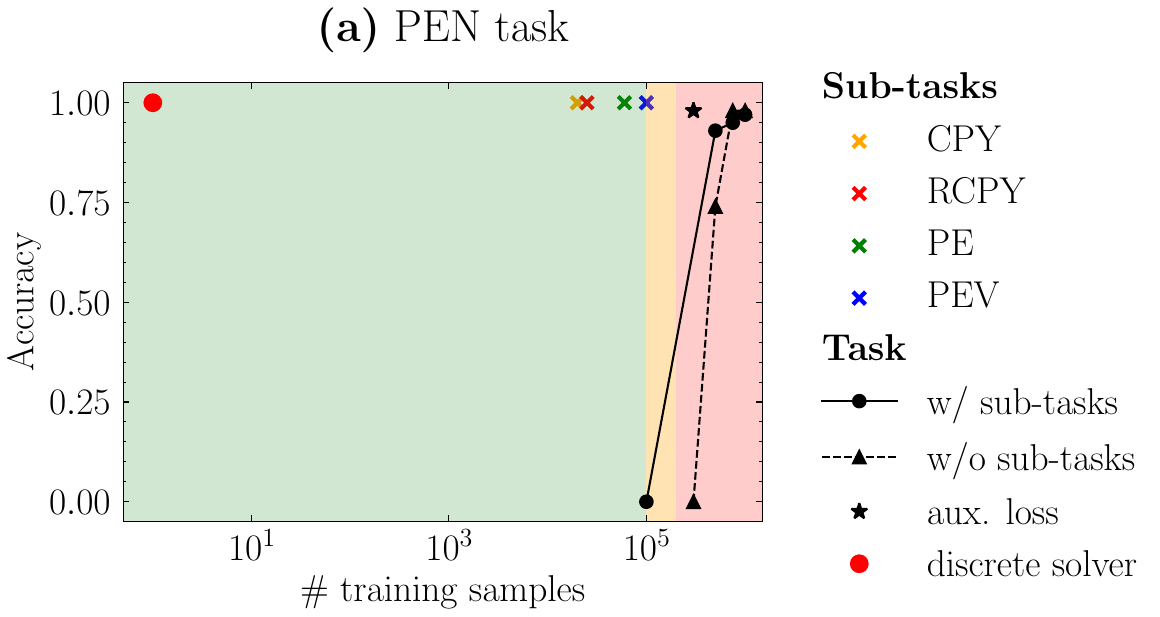}
    \end{subfigure}
    \hfill
    \begin{subfigure}[b]{0.49\textwidth}  
        \centering 
        \includegraphics[width=\textwidth]{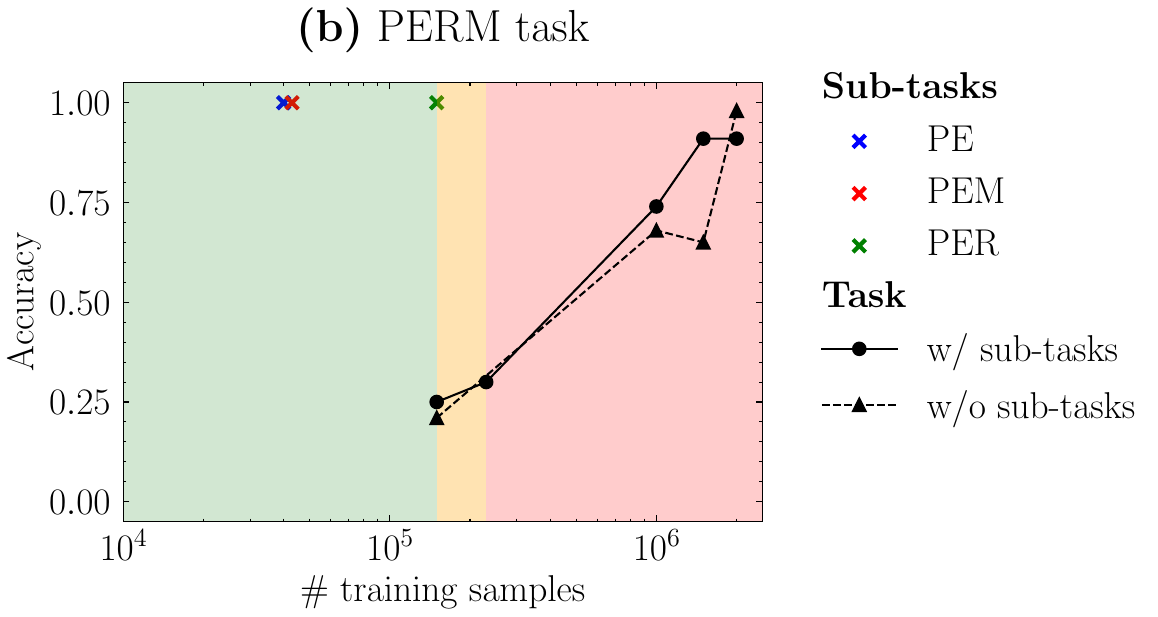}
    \end{subfigure}
    \vskip\baselineskip
    \begin{subfigure}[b]{0.49\textwidth}   
        \centering 
        \includegraphics[width=\textwidth]{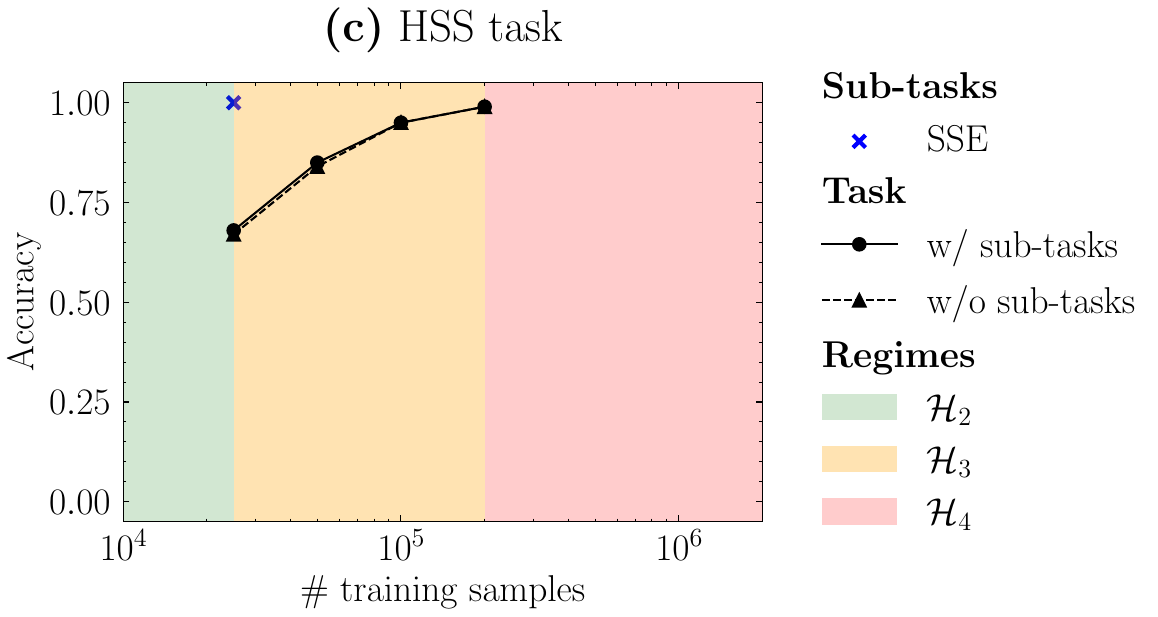}
    \end{subfigure}
    \hfill
    \begin{subfigure}[b]{0.49\textwidth}   
        \centering 
        \includegraphics[width=\textwidth]{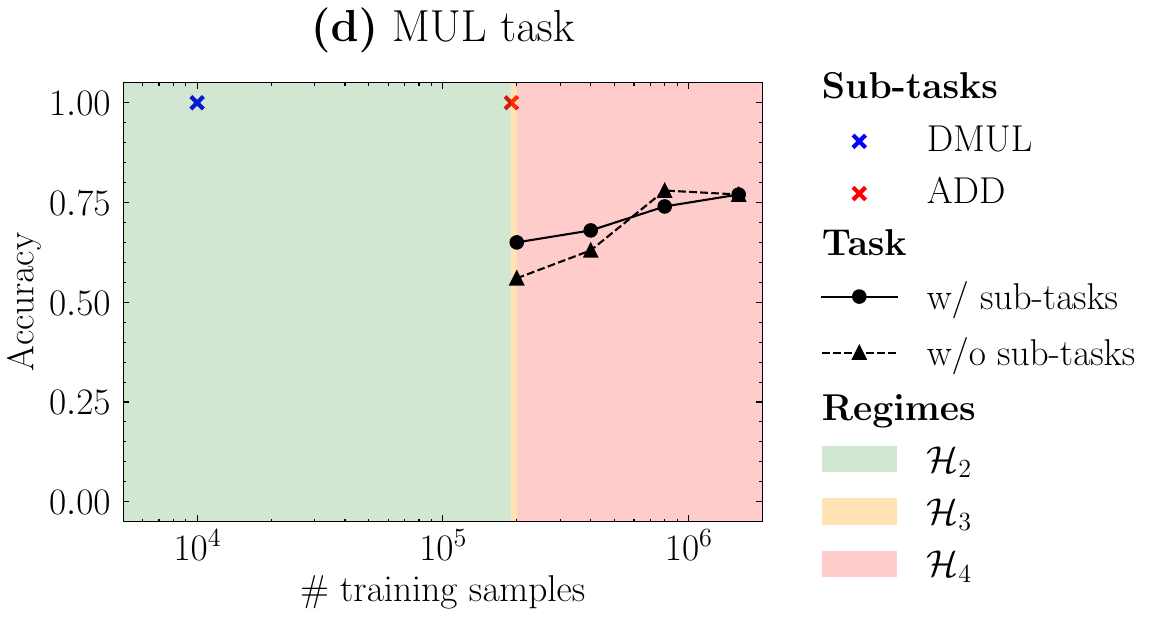}
    \end{subfigure}
    \caption{\small \textbf{Accuracy of LLaMA models on PEN, PERM, HSS, and MUL, and their respective sub-tasks.} While LLaMa achieves perfect accuracy on all the individual sub-tasks, it needs much larger amounts of training data to learn their composition. This observation makes hypothesis $\mathcal{H}_2$ (learning the composition requires less samples than the hardest sub-task, green) and $\mathcal{H}_3$ (learning the composition requires less samples than the sum of the sub-tasks, yellow) impossible to achieve on every task, where LLaMa seems to always fall into $\mathcal{H}_4$ (learning the composition requires more samples than the sum of the sub-tasks, red). Moreover, training together with the sub-tasks does not seem to perform sensibly better than training only on the main task (i.e., w/o sub-tasks). 
    }
    \label{fig:llama-result-visualization}
\end{figure}

\subsection{Training LLaMA models for testing compositionality}
In this section, we investigate compositional learning on the tasks presented in Section \ref{sec:tasks-intro}. For each one of them, we encode its multiple sub-tasks by adding unique identifiers at the beginning of the samples. We then train a LLaMA model from scratch on all sub-tasks concurrently (sampling from them uniformly at random). 
We use a character tokenizer and apply cross-entropy loss on all tokens, including both the input and the output sequences, as is usual in autoregressive language modeling. Computing the loss only on the answer part, as done in other works~\cite{study-with-output-only-loss,H3-paper}, led to worse results in our experiments. 
We consider an algorithmic task to be learned only when (almost) perfect in-distribution accuracy is achieved.
Any solution resulting in lower accuracy is, on the other hand, considered wrong, as it fails to fully capture the operations of the algorithm underlying the task.
We conducted additional ablations on the model architecture (e.g., a LLaMa architectural variation based on Universal Transformer~\cite{hyper-universal-transformer-apple}), included in Appendix \ref{appendix:ut-llama}. However, the unsatisfactory results deterred us from conducting further systematic analysis on it.

\textbf{Learning the PEN task.} We train LLaMA with all four sub-tasks (Cpy, RCpy, PE, PEV) and PEN. 
As shown in Figure \ref{fig:llama-result-visualization} (detailed numbers are in Table~\ref{tab:pen-results}), the model successfully learns all the sub-tasks containing the primitives needed for PEN.
However, it fails to properly compose them for solving the task under hypotheses $\mathcal{H}_2$ (no. of samples needed for the most difficult sub-task) and $\mathcal{H}_3$ (no. of samples needed for relearning all sub-tasks). 
Increasing the sample size to 1000\,K (roughly $10\times$ more compared to the sum of the samples needed to learn each sub-task independently) allows the model to learn the PEN task. This makes hypothesis $\mathcal{H}_4$ (a Transformer language model needs more data samples than the sum of samples needed for re-learning all sub-tasks separately) the most plausible.

Furthermore, by training the model on PEN only (without the sub-tasks), we observe that LLaMA obtains no significant benefit from learning the sub-tasks (see w/o sub-tasks vs. w/ sub-tasks in Figure \ref{fig:llama-result-visualization}). Based on these results, we speculate that the sub-tasks are not properly reused, and the tasks are rather learned every time from scratch. 
We interpret this as evidence that the model is not learning the task compositionally.

To set those results into context: our dataset of 1000\,K samples overall has roughly $6\times$ more tokens than the number of words a typical 13-year-old would have heard in their life~\cite{babylm}. Furthermore, we can train a discrete hill-climbing learning algorithm containing $121$ discrete parameters to perfectly learn the PEN task from a single sample, given the sub-tasks as primitives (see Appendix~\ref{appendix:discrete-solver}). 

We also propose an ablation with a smaller model (28\,M instead of 150\,M parameters, ``aux. loss'' in Figure \ref{fig:llama-result-visualization}, and more detailed Table~\ref{tab:pen-single-results}).
We add two additional prediction heads after a fraction of $\sfrac{1}{2}$ and $\sfrac{3}{4}$ of the layers, respectively, and train them using auxiliary losses.
The first loss is used to train the first head to predict, at step $i$, the $i$-th (yellow) element of the answer.
The second loss is used to train the second head to predict the (green) element to the left of the one predicted by the first head.
The auxiliary losses model can learn PEN with 300\,K samples, while the bigger LLaMA (150\,M) obtains 0\% accuracy with the same dataset size. However, this result also falls into the $\mathcal{H}_4$ hypothesis.

\textbf{Learning the PERM task.} We train LLaMA models on the PERM task together with its sub-tasks (PE, PER, and PEM). The training setting used to train the model on this task is identical to the one used for PEN. As reported in Figure \ref{fig:llama-result-visualization} (and with more detail in Table~\ref{tab:per-results}), LLaMA can learn with perfect accuracy all the individual sub-tasks but fails to learn their composition (PERM) within hypotheses $\mathcal{H}_2$, and $\mathcal{H}_3$.
To achieve perfect accuracy, we need to scale up the dataset size by almost one order of magnitude compared to the $\mathcal{H}_3$ setting (the required number of training samples is equal to the sum of the training samples of all the sub-task datasets).
This provides evidence that hypothesis $\mathcal{H}_4$ is the most plausible for PERM too.
As for PEN, we study the impact of providing the sub-tasks together with the main task during training.
Comparing the performance of LLaMA when given the sub-tasks and not (cf. Figure \ref{fig:llama-result-visualization} top right), we observe that LLaMA seems to not benefit significantly from the sub-tasks. 
We speculate that also in this case this is a sign that the model is not re-using the primitives learned from the sub-tasks to solve the compositional task.

\textbf{Learning the HSS task and MUL tasks.} We investigate HSS and MUL (related to \cite{faith-and-fate}). Figure \ref{fig:llama-result-visualization} (bottom row) visualizes the results. As with our PEN and PERM tasks above, we observe that LLaMA does not exhibit strong compositional learning and learns under hypothesis $\mathcal{H}_4$. Also for these tasks, it is possible to see that learning the sub-tasks along with the composition task did not significantly improve the performance.

\subsection{Testing decomposition by learning sub-tasks from a pre-trained LLaMA model}
\label{sec:decompositionality}
We additionally evaluate LLaMA's abilities on decompositionality, i.e., learning a sub-task like PEV after being trained only on the full compositional task (PEN). 
We observe that, when learning the decomposition task, the models benefit from being previously pre-trained on the composition task (see Table~\ref{tab-appendix:decompositionality-results-pen}). 
However, to get maximum accuracy on decomposing PEV from PEN, we need between 25\,K and 50\,K samples to fine-tune a PEN-pretrained model. This is a significant fraction of the data to learn the PEV task from scratch, making $\mathcal{H}_1$ also implausible for decompositionality on the PEN task.
Similar observations hold for decomposing PE from PERM (i.e., a PERM-pretrained model fine-tuned on the PE task; see Table~\ref{tab-appendix:decompositionality-results-perm}).
We speculate that this might be an indication that the model does learn operations relevant for the sub-tasks when trained only on the composite task (since learning the decomposition gets easier), and the learned solution might be to some extent compositional, even if the learning process is not compositional (and very inefficient).

\subsection{Asymptotic limitations on compositional learning}
In general, compositional learning can be split into two skills: (1) identifying and learning the primitives from the sub-tasks,  and (2) assembling the learned primitives in the correct way to construct the solution to the composition task. In Appendix \ref{appendix:theory} we focus on (2) and enrich our empirical results with a theoretical perspective; a gist of it is included in the following paragraphs.

\textbf{Pre-training LLMs is sample inefficient in compositional learning.} 
We prove that asymptotically, meaning from an unknown size onwards, feedforward models on gradient descent are data inefficient on combinatorial (including compositional) problems. We use a reduction over the required asymptotic compute time to prove the following result.

\begin{theorem}[Informal Theorem \ref{corollary:many-samples-needed}]
    There exist (many) series of train-test datasets, such that feedforward models on gradient descent that memorize samples will need $O(n^k)$ times more data than an optimal learner. Otherwise, they will not generalize to the test set. $k$ is any positive number and $n$ is the index in the series.
    \label{informal-theorem}
\end{theorem}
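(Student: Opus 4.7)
The plan is a time-hierarchy style reduction that trades computational depth for width, and then converts width into samples. I would first construct an explicit family of compositional tasks indexed by $n$ by fixing a constant-size primitive $f$ (chosen so that iterated application is provably sequentially hard, e.g.\ a step of a computation whose $n$-fold iterate lies strictly outside the bounded-depth class one is working in, or a permutation whose $n$-th power is hard to compute in bounded depth). The $n$-th task is then the $n$-fold composition of $f$, supplied with sub-tasks that each expose $f$ in isolation on its native input size. An optimal learner that has identified $f$ from its sub-task needs only $O(|F|)$ samples, independent of $n$, to solve the composed task, which is the baseline against which the feedforward model's sample count will be compared.

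Next I would bound the representational cost of solving the composition with a bounded-depth feedforward network. A depth-$d$, width-$w$ feedforward model performs inference in $O(dw)$ sequential arithmetic steps, so its expressible class is a bounded-depth circuit class whose limits are dictated by classical circuit lower bounds. If the composition of $f$ does not lie in this class at width $o(n^{k+1})$, the only way for the model to compute it correctly on all inputs is to blow up the width polynomially in $n$. This is the reduction over asymptotic compute time the paper alludes to: depth is forbidden by architecture, so width has to absorb the sequential hardness of the composed primitive.

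The final step converts parameter count into sample count via the theorem's ``memorizing'' assumption on gradient descent. Under memorization, a GD-trained model's behavior on a fresh input is governed by nearby training examples, so achieving test accuracy above any fixed threshold on the composed task requires the training set to cover a constant fraction of its input domain $ - $ which, by the counting argument for deeply composed functions, is $\Omega(n^k)$ larger than the support the sub-tasks need to cover. The dichotomy in the theorem then follows: either the feedforward model sees $\Omega(n^k)$ times more samples than the optimal learner, or, lacking that data, it must fail to generalize, since the expressible-class argument from step two rules out any succinct structural solution that could otherwise bridge the gap.

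The main obstacle, as expected, will be the final step: pinning down ``memorize'' precisely enough to yield a real lower bound rather than a tautology. I would discharge this either by (i) formulating the theorem as a conditional on the learned function lying in the expressible class of polynomial-width, depth-$d$ networks, so that step two supplies the structural contradiction and step three reduces to a standard $\Omega(|\text{domain}|)$ sample-complexity bound; or by (ii) restricting attention to a lazy / kernel training regime in which the network acts as a linear interpolator, so circuit-complexity barriers transfer directly into RKHS lower bounds on sample complexity. Either route isolates the combinatorial blow-up from the optimization dynamics, letting the complexity-theoretic separation do the work the theorem requires.
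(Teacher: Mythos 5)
Your proposal takes a genuinely different route from the paper, and as written it has two gaps that would be hard to close.

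The paper does not argue via circuit lower bounds or depth-vs-width trade-offs at all. Its Theorem~\ref{main-theorem} is a black-box \emph{time}-complexity reduction: take any decision problem $L$ (e.g.\ unique-SAT) whose solution requires $\Omega(k(n))$ time but is verifiable in $O(n)$. The concept for an instance $X$ has training set $\{(X,1)\}$ (a single sample encoding the problem instance) and test set $\{(i, y^X_i)\}$ (queries for individual bits of the certificate). The optimal learner spends $O(k(n))$ \emph{time} on that one sample, solves the instance, and hardcodes the certificate; inference is then $O(n)$. Any learner $\mathcal{L}$ that trains in $o(k(n))$ time and infers in $o(k(n)/n)$ per test sample could be simulated to recover the certificate in $o(k(n))$ time, a contradiction. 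Corollary~\ref{corollary:many-samples-needed} then turns this time bound into a sample bound through Definition~\ref{assumption:memorizing-learner} (``constant memorization''): training time is bounded by $B\cdot|T'|\cdot Z$ with $Z$ the inference time, so if $Z = o(k(n)/n)$ you need $|T'| = \Omega(k(n)/Z)$ samples just to accumulate enough training compute. Crucially, this makes no reference to expressibility of bounded-depth networks; the result holds for any learning algorithm satisfying the memorization timing bound, conditionally on the hardness assumption baked into $\mathcal{K}$.

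Your plan replaces the time-hierarchy reduction with a circuit-complexity argument: pick a primitive $f$ so that $f^{(n)}$ is outside polynomial-width bounded-depth circuits, then convert width into samples. Two concrete problems. First, you would need a provable lower bound that $f^{(n)}$ has no depth-$d$, width-$o(n^{k+1})$ circuit. Such lower bounds are essentially unknown for any explicit $f$ outside $\mathrm{AC}^0$-type barriers, and the paper deliberately avoids needing them by reducing from time hardness (which only needs $P \neq NP$ or ETH) rather than circuit hardness. Second, the step ``under memorization, test accuracy requires the training set to cover a constant fraction of the input domain'' does not follow from the memorization assumption you would be given; the paper's Definition~\ref{assumption:memorizing-learner} is a \emph{timing} constraint on the learner, not a locality constraint on the learned predictor, and the corollary's proof uses it purely arithmetically (training time $\leq B|T'|Z$, hence $|T'| \geq \Omega(k(n)/Z)$). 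A covering argument would need an entirely separate justification and would likely only hold against nearest-neighbor-type predictors, not general GD-trained networks. Your escape hatches (conditioning on polynomial-width expressibility, or restricting to the lazy/kernel regime) would prove a weaker, architecture-specific statement, whereas the paper's reduction applies uniformly to any fast-memorizing learner and gets the $\Omega(k(n)/z)$ sample bound directly.
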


\textbf{Proof sketch.} Many combinatorial problems are assumed to be hard to solve. $NP$ hard problems provide an extreme case, where under the $P\not = NP$ assumption the solution algorithm requires more than polynomial time. 
By constructing a train and test dataset where the problem instance of a combinatorial problem (e.g., SAT) is in the training set and the solution can easily be extracted from the labels of the test set, we can make the following reduction: if the learning algorithm finishes in polynomial time, then it cannot answer the test samples correctly, as this would build an algorithm which can solve the problem faster than possible. 
If our learning algorithm, however, tends to memorize the training set fast and, after memorization, training does not improve model test performance anymore, we can conclude that many samples will be needed to be able to generalize to the test set. At the same time, we can simulate a slow but data-efficient algorithm that extracts the problem instance from the training data, computes the answer, and naturally generalizes it to the test set. 
The same reduction applies to non-$NP$-hard problems, in which case the $k$ in Theorem \ref{informal-theorem} becomes bounded. 
While the classical reduction requires an out-of-distribution test set, the reduction can be made in the in-distribution case as well using hardness assumptions from public key cryptography (shown in Theorem \ref{iid-theorem}).

\section{Compositional learning via in-context prompting}
\label{sec:gpt4}

In this section, we investigate the performance of pre-trained LLMs, GPT-4 and Gemini-Pro, on the PEN and PERM tasks with various prompting methods. Concretely, we operate under the assumption that the models have learned to execute a sufficient set of operations during pre-training, and we test whether these latent abilities can be utilized to accurately solve the tasks given prompts of various complexitites. We experiment with a wide range of prompts, ranging from simple few-shot examples to state-of-the-art methods~\cite{wei2023chainofthought,gpt4-code-interpreter,new-with-context-prompting}.
Besides the task test accuracy, we also report two partial correctness metrics, \textit{match accuracy} and \textit{termination accuracy}. Match accuracy measures how many steps in the output were a correct \texttt{left}-\texttt{match}-\texttt{right} step, regardless of the last word being correct. Termination accuracy checks if the last output word's left neighbor in the sequence matches the input sequence.
We include ablations on possible confounding factors (such as tokenization), different task definitions, and number of in-context examples in Appendix~\ref{appendix:prompts}.

\subsection{GPT-4 and Gemini prompting on PEN and PERM}
\label{subsec:gpt-pen}

\begin{table*}[b!]
\setlength\tabcolsep{4.5pt} 
\centering
    \begin{tabular}{@{} *{9}{l} *{6}{l} }
        & \rot{Few-shot} 
        & \rot{Description} 
        & \rot{CoT}
        & \rot{Few-shot CoT}
        & \rot{Sub-task CoT}
        & \rot{Traps Removed}
        & \rot{Code Interpreter}
        & \rot{Analogical CoT}
        & \hspace{0.3cm}\rot{Termination Acc.}\hspace{0.2cm}
        & \rot{Match Acc.} 
        & \rot{\bf{Task Acc.}}
        & \rot{Termination Acc.}
        & \rot{Match Acc.} 
        & \rot{\bf{Task Acc.}}\\
        \toprule
        \textbf{Task} & \multicolumn{8}{c}{\textbf{Prompt Setting}} & \multicolumn{3}{c}{\textbf{GPT-4}} & \multicolumn{3}{c}{\textbf{Gemini-Pro}} \\
        \cmidrule(r){1-1} \cmidrule(lr){2-9} \cmidrule(lr){10-12}\cmidrule(l){13-15}
        \multirow{8}{*}{PEN}    & \Included & &  &     &  & &&&
        $0.12$&$0.04$&$0.00$ & $0.05$ & $0.03$ & $0.00$\\
        & \Included & \Included &     &  &     &  &&&
        $0.11$&$0.04$&$0.00$&$0.09$&$0.03$&$0.00$\\
        & \Included & \Included &   \Included  &  &     &  &&&
        $0.08$&$0.14$&$0.00$ & $0.05$&$0.01$&$0.00$\\
        & \Included & \Included &  \Included   & \Included &   n.a.  &  &&&
        $0.16$&$0.23$&$0.00$ & $0.19$ & $0.12$ & $0.00$\\
        & \Included & \Included &  \Included   & n.a. &   \Included  & &&&
        ${0.20}$&$0.14$&$0.00$ & $0.14$ & $\mathbf{0.33}$ & $0.00$\\
        & \Included & \Included &    \Included  &  \Included &  n.a. &  \Included & &&
        $0.30$&$0.39$&$0.00$ & $\mathbf{0.20}$ & $0.08$ & $0.00$\\
        & 1 & \Included &    \Included  &  n.a. & n.a. & & \Included &&
        $\mathbf{0.31}$&$\mathbf{0.41}$&$\mathbf{0.19}$ & - & - & - \\
        & n.a. & \Included & n.a.   & n.a. & n.a. &  & n.a. & \Included &
        $0.06$&$0.16$&$0.00$ & $0.10$ & $0.00$ & $0.00$ \\
        \cmidrule(r){1-1} \cmidrule(lr){2-9} \cmidrule(lr){10-12}\cmidrule(l){13-15}
        \multirow{7}{*}{PERM}    & \Included &  &     &  &    & \NotApplicable &&&
        $0.12$&$0.37$&${0.06}$ & $0.08$ & $0.04$ & $0.04$\\
        & \Included & \Included &     &  &     & \NotApplicable & &&
        $0.14$&$0.45$&${0.00}$ & $0.10$ & $0.10$ & $0.02$\\
        & \Included & \Included &   \Included  &  &     & \NotApplicable &&&
        $0.38$&$0.74$&${0.08}$ & $\mathbf{0.28}$ & $0.12$ & $0.02$\\
        & \Included & \Included &  \Included   & \Included &  n.a.   & \NotApplicable &&&
        $\mathbf{0.80}$&$\mathbf{0.96}$&${0.14}$ & $0.06$ & $\mathbf{0.82}$ & $0.00$ \\
        & \Included & \Included &  \Included   & n.a. & \Included     & \NotApplicable &&&
        $0.62$&$0.95$&$\mathbf{0.42}$ & $0.19$ & $0.77$ & $\mathbf{0.09}$\\
        & 1 & \Included &    \Included  &  n.a. & n.a. & n.a. & \Included &&
        $0.67$&$0.42$&$0.13$ & - & - & - \\
        & n.a. & \Included & n.a.   & n.a. & n.a. & n.a & n.a. & \Included &
        $0.16$&$0.41$&$0.02$ & $0.16$ & $0.06$ & $0.00$ \\
        \bottomrule
    \end{tabular}
    \caption{\small \textbf{Results of GPT-4 and Gemini-Pro with various prompt settings} (including chain-of-thought (CoT), analogical CoT~\cite{new-with-context-prompting}, and code interpreter as a multi-round prompting technique~\cite{gpt4-code-interpreter}). Despite providing strong hints, few-shot CoT, task description, and without double traps (single only), both GPT-4 and Gemini fail on PEN (0\% task accuracy). On PERM, the trend is similar and the main obstacle is multi-counting correctly (cf. the high match accuracy in the last two rows). GPT-4 reaches 42\% task accuracy with a hand-crafted sub-task CoT. Gemini-Pro reaches to 9\% in this setting.}
    \label{tab:gpt-performance}
\end{table*}

\textbf{Prompting to solve PEN.} 
We first provide the models with simple few-shot prompts demonstrating eight input-output examples for the task. The exact format is shown in Figure~\ref{fig:FS_prompt}. We find that this is not sufficient to obtain non-zero task accuracy. We then prepend a natural language description of the task to the few-shot prompt as shown in Figure~\ref{fig:FS_desc_prompt}, but we find that this does not change the scores in any significant way. 
We proceed further by introducing Chain-of-Thought (CoT) prompting~\cite{wei2023chainofthought}. The prompt now consists of a high-level description of the task, eight input-output examples, and a request for the model to perform the task step-by-step (Figure~\ref{fig:CoT_prompt}). With this prompting method, the models still exhibit zero task accuracy. We perform a similar experiment using a more recent and advanced CoT query, analogical CoT~\cite{new-with-context-prompting}, which prompts the model to generate its own few-shot examples by asking it to recall several examples of similar algorithmic problems it has been exposed to during training (Figure~\ref{fig:Analogical_CoT_prompt}). This still does not help us achieve non-zero task accuracy.

We then provide more explicit guidance in the few-shot examples using two different approaches. In the first approach (Few-shot CoT, Figure~\ref{fig:FS_CoT_prompt}), the few-shot examples demonstrate each step in the pointer matching procedure alongside with the corresponding outputs (right neighbors) at each step. In the second approach (Sub-task CoT, Figure~\ref{fig:ST_CoT_prompt}), the few-shot examples consist of two phases: first, we match the entire sequence given an initial input, and then we output the right neighbors of the elements of the previously matched sequence. While none of the two approaches achieve non-zero task accuracy, sub-task CoT prompts induce a higher match accuracy in the Gemini-Pro model, reaching 33\%.

We experiment with one further ablation, in which we remove the additional attention traps in the yellow tokens (see Appendix~\ref{appendix:pe-details}), such that every yellow token only matches once. Although this does not help GPT-4 or Gemini to obtain a correct answer (still 0\% task accuracy), it more often chooses the correct word to match within its answer sequence, reaching 41\% match accuracy. This showcases how susceptible GPT-4 is to adversarial or unfortunate task-irrelevant correlations, also within chain of thought.

We also ablate on the newly available \texttt{o1-preview} model from OpenAI. While it cannot infer the task only from examples, it
reaches 70\% accuracy given a description of how to perform the task and CoT; see Appendix~\ref{o1-prompting} for more details.

\textbf{Prompting to solve PERM.} The same progression of prompt refinements is also applied for PERM, and the results are consistent with the observations on the PEN task. The results are shown in Table~\ref{tab:gpt-performance}. When asked for CoT, GPT-4 does not find a good way to calculate the numbers for each word and tries to determine directly whether a match is to the left or not, something it seemingly has trouble doing.
The best performing method is ``Sub-task CoT'' (Figure~\ref{fig:PERM_ST_CoT}). In this method, we begin by explicitly enumerating all of the words. With this, a left match can be determined by comparing two numbers. We conclude the few-shot examples with an explicit computation of the multicount numbers. This format increases the GPT-4 task accuracy to 42\%, which is significantly higher (by 29\%) than all of the other results.
However, this setting deviates from identifying and leveraging the compositional structure of the tasks, since it rather corresponds to executing a hand-crafted algorithm.
Based on the presented results, we can conclude that on the investigated example compositions, in-context learning with few samples is unreliable and fails to compose knowledge from sub-tasks and solve the main compositional task.

\subsection{GPT-4 code interpreter on PEN and PERM}
We experiment with asking GPT-4 to write code, which is a suitable prompting method for our algorithmic tasks. We use the official multi-round code interpreter of GPT-4's assistant API. While it enables GPT-4 to sometimes output a correct solution, it is not systematic and often fails at finding its errors. An example prompt for PEN is shown in Figure~\ref{fig:PEN_code_prompt}.

To get a deeper understanding of how the code generation of GPT-4 fails, we investigate 20 random answers from our results of the multi-round code interpreter in Table~\ref{tab:gpt-performance}. We find that the model always outputs executable code and copies the task sequence correctly. While 10\% of the answers were correct, 65\% of the answers were a wrong answer sequence, and in the remaining 24\%, the model tried 10 attempts of code programs and messages in between before being killed. In 35\% of the cases, the model initially produced code which results in an infinite loop when executed on the sample. In those cases, the model can fix this problem once (13\%), three times (38\%) it tries to correct itself until shutdown, and three times (38\%) it catches the problem but still outputs a wrong answer. Once it catches the error and gives a correct answer (13\%). Further, it is interesting that the model never leverages the example question+answer given (it could verify its solution code with that before submitting an answer).

\section{Related work}
\citet{faith-and-fate} and others \cite{measuring-compositionality-gap, phenomenal-puzzling-inductive-reasoning, investigation-llms-compositionality} investigate limitations of Transformer language models on compositionality with prompting and finetuning. Differently from them, we train state-of-the-art language models from scratch and analyze how well the models can reuse and reorder sub-tasks learned. This gives new insights into compositional learning and provides a view into some limitations of current language model pre-training. Similarly, \citet{term-frequencies} find a correlation between performance and the frequency of how often task instances occur in the training data.

Apart from the tasks we use in this work, there exist many benchmarks on compositional learning for language models~\citep{concept-arc-reasoning-benchmark,CFQ,kim2020cogs,MATH,deepmind-math-dataset,winoground, plane-benchmark, GSM8K, SCAN-benchmark}. However, compared to them, our training setup and synthetic tasks specifically target compositional learning abilities subject to sample efficiency. 

Many works tackled the problem of improving models' compositionality already \citep{measuring-compositionality-gap,wikidate-improve-compositionality, chen2023skillsincontext-prompting, prompting-cfg-least-to-most-extension, training-routine-vision-llms-improve-comp}.
However, their focus is mostly on the improvement of the model's behavior in specific benchmarks, while model-inherent systematic compositional learning remains extremely sample inefficient.

\citet{chain-of-thought-mystery} point out that constant-size logarithmic precision Transformers can implement mathematical and dynamic programming problems. 
In order to learn a concept with CoT, the data has to lay out the respective step-by-step solution. Connecting to our bound, the concept needs to be made ``obvious'' to learn. 
Various works \cite{hyper-universal-transformer-apple,universal-transformers, deep-equilibrium-models,banino2021pondernet} present approaches to instead allow Transformers to change their depth. This could make the model computationally more powerful as it allows the model to take arbitrary computation time for difficult samples. Our theoretical result instead focuses on fixed-depth Transformers, as they currently are most present in LLMs, and focuses on tasks where the given depth of the Transformers is already sufficient to solve the task.

There has been extensive research on the expressive power of language models~\citep{chain-of-thought-mystery,rasp-paper,universal-transformers}. 
\citet{Judd-learning-weights-NP-hard} already showed that finding weights for feedforward neural networks such that the neural network correctly predicts at least $\sfrac{2}{3}$ of the training examples is NP-hard in general~\citep{blum-3-node-network-NP-hard}. Our theoretical bounds work focuses on overparameterized models that memorize. 
Other follow-ups construct concrete example network architectures being hard to train~\citep{blum-3-node-network-NP-hard,two-hidden-relu-learning-NP-hard, saturated-piecewise-lin-activation-2-hidden-NP-complete}. \citet{complexity-of-machine-learning-kearns} presents results on the complexity of learning, with a focus on learning from distributions, compared to us investigating data and model efficiency. \citet{poly-learnable-is-poly-GD-learnable} show that decision problems learnable in polynomial time are learnable by gradient descent in polynomial time. Different from us, they give their learning algorithm arbitrarily many samples and distinguish poly-time vs. non-poly-time.

\section{Conclusions, limitations, and future work}
\label{sec:conclusion-limitations-futurework}
\textbf{Limitations.} We do not propose practical ideas to tackle the issues identified with the investigated algorithmic tasks. Hence, it remains for future research to find effective means to incentivize compositional learning in Transformer-based language models, possibly by incorporating more inductive biases towards compositionality in the model architecture or the training procedure.
Additionally, while the investigated algorithmic tasks allow to work in a fully controlled environment with limited confounding factors, they still represent a rather limited collection of benchmarks. Its expansion with more natural tasks, as well as the ablation of different sub-task definitions, could strengthen the observations and conclusions proposed in this work.

\textbf{Conclusions and future work.} In this work, we analyzed the capabilities of Transformer language models in learning compositional algorithmic tasks.
To do so, we formulated a set of hypotheses to characterize the efficiency of a model when learning compositional tasks, from efficient ($\mathcal{H}_1$) to inefficient ($\mathcal{H}_4$) regimes.
We introduced a new set of algorithmic tasks based on pointer execution to benchmark compositional learning in large language models in a more controlled setting.
On these novel tasks, as well as other well-known benchmarks from previous works, we observe that Transformer-based language models struggle at compositional learning on tasks demanding to learn the composition of several discrete sub-tasks, making our hypothesis $\mathcal{H}_4$ most plausible (learning the compositional task requires more samples than the sum of those required to learn the individual sub-tasks). 
Further, we show (also theoretically) that learning certain (compositional) concepts with feedforward models on gradient descent is very data inefficient, adding further support to reject $\mathcal{H}_1$--$\mathcal{H}_3$. 
Finally, we empirically rule out also the possibility that these tasks can be learned in a few-shot fashion by state-of-the-art LLMs such as GPT-4 and Gemini-Pro.
This evaluation of current state-of-the-art Transformer language models may provide some directions to improvements in compositional capabilities which would help models to be more reliable and improve on complex algorithmic structures like reasoning or mathematics.
Addressing the current limitations of this work might also be an interesting avenue for future research.

\vfill
\pagebreak

\bibliography{neurips_2024}
\bibliographystyle{unsrtnat} 



\newpage
\appendix
\renewcommand\thefigure{\thesection.\arabic{figure}}
\renewcommand\thetable{\thesection.\arabic{table}}   
\onecolumn
\vspace{-3mm}
\part{Appendix}
\parttoc

\newpage
\section[Feedforward models on gradient descent can only learn the obvious]{Feedforward models on gradient descent can only learn the obvious\protect\footnote{Obvious here means: either present in many samples, or simple to extract}}
\label{appendix:theory}

As in our empirical results, we focus on model size and dataset size, not compute, which e.g. is the focus of~\cite{chinchilla-training-laws, openai-scaling-laws, poly-learnable-is-poly-GD-learnable}. This has the rationale, that concepts or tasks which do not have unlimited amounts of data and are hard to learn, can in theory still be learned while gradient descent on feedforward models will be limited. In particular, in this part we show classes of tasks that are learnable from a single sample each (i.e., learnable within hypothesis $\mathcal{H}_1$), but on which feedforward models trained with gradient descent will learn within hypothesis $\mathcal{H}_4$, needing up to exponentially many samples. See also Figure \ref{fig-appendix:intuition-learning-inference-gap} for an intuition.

From complexity theory, we know that finding the proof of membership of a string in a language is in general less efficient than verifying its membership given a proof.
We use this structure:

\begin{definition}[$\mathcal{K}$]
\label{def-K}
    Let $L$ be any recursive decision problem with solution in space $O(n)$ and being verifiable in time $O(n)$. Let us look at all algorithms solving $L$ and let $k(n)$ be a (high) lower bound of them.
    We denote $\mathcal K$ to be the set of choosing one upper bound $k(n)$ per such decision problem.
\end{definition}
Some examples: Assuming the exponential time hypothesis, SAT solution algorithms need exponential time (i.e. $k(n)$ can be an arbitrarily high polynomial), and verification is in $O(n)$. Under the same assumption, the edit distance algorithm needs $\Omega(n^{2-\epsilon})$ for any $\epsilon>0$ while being verifiable in $O(n)$ \cite{edit-distant-quadratic}. 
Usually, no lower bounds are proven for such problems, but the fact that we can not find faster algorithms makes assumptions reasonable.

\begin{definition}[Concept, Concept Class]
    A concept $\mathcal{C}=(U_\mathcal{C},f_\mathcal{C},n_\mathcal{C})$ consists of an algorithm $f$ defined on a set of inputs $U$ and a natural number $n_\mathcal{C}$. We call $n_\mathcal{C}$ the ``difficulty" of $\mathcal{C}$. A concept class $\mathbb C$ is an infinite set of concepts where each $n\in\mathbb N$ occurs only finitely many times.
\end{definition}

The intuition: A concept class is a set of concepts that are connected by some common basis, e.g. they all pose certain challenges (e.g. finding a formula assignment) to an overarching problem to learn (e.g. the SAT problem). 
We need a class of such problems because complexity theory makes asymptotic statements.

Similar definitions are used by~\citet{complexity-of-machine-learning-kearns}, here we use a slightly different definition of concepts as functions instead of subsets of an object set to better suit today's applications.

\begin{definition}[Application of a Concept]
     Given a concept $\mathcal C=(U,f,n)$, an application of $\mathcal C$ is a pair $(T,E)$ of nonempty sets of samples (i.e. input-output pairs) of $f$. We call $T$ the training set and $E$ the test set or examination set.
\end{definition}

Here we deviate from classical learning theory in that we do not define a distribution. Instead, we define the more general case of having a training regime and testing regime since we focus on systematic generalization. One can additionally require $E$ and $T$ to be statistically dependent to recover classical train-test distributions, as we, for example, show in Appendix~\ref{appendix:proof-iid-case}.

\begin{definition}[Learning Algorithm]
    Let $\mathbb S$ denote the set of all finite datasets (i.e., sets of input-output pairs) and $\Theta$ the set of all input-bounded and runtime-bounded algorithms. A learning algorithm is an algorithm $\mathcal L: \mathbb S\rightarrow \Theta$. \\
    We say a $\mathcal L$ learns a concept $\mathcal C=(U_\mathcal{C},f_\mathcal{C},n_\mathcal{C})$ on a dataset $T_\mathcal{C}$, if $\mathcal M(x)=f(x)\,\forall x\in U$ with $\mathcal M:=\mathcal L(T_\mathcal{C})$.
\end{definition}

Let $\textsc{time}(\mathcal A,x)$ denote the runtime of an algorithm $\mathcal A$ on an input $x$. For a concept class $\mathbb C$ and a learning algorithm $\mathcal L$, we denote: \begin{equation}
    z_{\mathbb C}^{\mathcal L}(n_0)=\max_{\substack{\mathcal (U,f,n_0)\in \mathbb C\\(x,y)\in E_{(U,f,n_0)}}} \textsc{time}(\mathcal L(T_{(U,f,n_0)}),x)
\end{equation} 
I.e. the maximum inference runtime on a test sample for concepts of difficulty $n_0$.

\begin{theorem}
    \label{main-theorem}
    For each $k\in \mathcal K$ there is a (different) concept class $\mathbb C$, with applications $(T_{\mathcal{C}},E_{\mathcal{C}})$ (i.e., train and test datasets) for each $\mathcal C\in \mathbb C$, with the following properties: 
    \begin{enumerate}[leftmargin=*]
        \vspace{-0.15cm}
        \item There is a learning algorithm $\mathcal L^*$ that can learn each concept from a single sample in $O(k(n_\mathcal{C}))$ time and with $z^{\mathcal L^*}_{\mathbb C}(n_\mathcal{C})=O(n_\mathcal{C})$
        \item For any learning algorithm $\mathcal L$, $\mathcal L$ one of the following will hold: 
        \begin{itemize}[leftmargin=*]
            \item ``$\mathcal{L}$ \textnormal{does not learn}'': $\mathcal{L}$ is not able to learn infinitely many concepts in $\mathbb C$.
            \item ``$\mathcal{L}$ \textnormal{learns during inference}'': With $\mathcal M=\mathcal L(T_{\mathcal C})$, $\sum_{(x,y)\in E_\mathcal{C}}\textsc{time}(\mathcal M,x)=\Omega(k(n_\mathcal{C}))$ 
            \item ``$\mathcal{L}$ \textnormal{does learn}'': $\textsc{time}(\mathcal L,T_\mathcal{C})=\Omega(k(n_\mathcal{C}))$
        \end{itemize}
         
    \end{enumerate}
 
\end{theorem}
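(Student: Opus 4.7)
The plan is to prove Theorem~\ref{main-theorem} by reducing the intrinsic hardness of the decision problem $L$ underlying $k \in \mathcal{K}$ to a time trade-off on learning. For each instance $w$ of $L$, I will construct a concept $\mathcal{C}_w$ whose single training sample literally encodes $w$ and whose test sample's label encodes the answer $L(w)$. Then any learner $\mathcal{L}$ that correctly learns $\mathcal{C}_w$ automatically yields a solver for $L$ (run $\mathcal{L}$ on the training set, then invoke the resulting model on the test input). Because $k$ is a lower bound for $L$, the combined runtime of training plus inference must be $\Omega(k(n))$, forcing at least one of the two to be slow.

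\paragraph{Construction of $\mathbb{C}$ and the ideal learner $\mathcal{L}^*$.}
I would fix $L$ together with an optimal solver $A_L$ of runtime $O(k(n))$ (which exists since $k$ is only a lower bound). For each instance $w$ I would set $\mathcal{C}_w = (U_w, f_w, n_w)$ with $U_w = \{x_T, x_E\}$, $f_w(x_T) = w$, $f_w(x_E) = L(w)$, training set $T_{\mathcal{C}_w} = \{(x_T, w)\}$, and test set $E_{\mathcal{C}_w} = \{(x_E, L(w))\}$; the answer $L(w)$ fits in $O(n)$ bits by the assumption on $L$. Property~(1) then follows directly by defining $\mathcal{L}^*$ to (i)~read $w$ from the training sample, (ii)~compute $y^\star = A_L(w)$ in time $O(k(n))$, and (iii)~return the two-entry lookup model $\mathcal{M}$ with $\mathcal{M}(x_T) = w$ and $\mathcal{M}(x_E) = y^\star$. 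Learning time is $O(k(n))$, and inference on either input is $O(n)$, so $z_{\mathbb{C}}^{\mathcal{L}^*}(n) = O(n)$.

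\paragraph{Reduction for property (2).}
For any $\mathcal{L}$, I would argue contrapositively: if $\mathcal{L}$ learns $\mathcal{C}_w$ with both $\textsc{time}(\mathcal{L}, T_{\mathcal{C}_w}) = o(k(n_w))$ and $\sum_{(x,y) \in E_{\mathcal{C}_w}} \textsc{time}(\mathcal{M}, x) = o(k(n_w))$, then the composite algorithm ``on input $w$, build $T_{\mathcal{C}_w}$ in $O(n)$ time, run $\mathcal{L}$, and evaluate $\mathcal{M}(x_E)$'' decides $L$ on $w$ in time $o(k(n))$, contradicting the definition of $k$. Hence for every concept $\mathcal{L}$ does learn, at least one of the two time quantities is $\Omega(k(n_w))$, matching the second or third bullet. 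The remaining possibility --- $\mathcal{L}$ learning only finitely many concepts --- coincides with the first bullet.

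\paragraph{Main obstacle and proposed resolution.}
The delicate step is reconciling the reduction, which naturally yields $\Theta(2^n)$ concepts per input size, with the concept-class requirement that each difficulty occur only finitely often. My plan is to select for each $n$ a single representative instance $w_n$ and set $\mathbb{C} = \{\mathcal{C}_{w_n} : n \in \mathbb{N}\}$, but this forces me to argue that the lower bound $k(n)$ still applies to a specific sequence $\{w_n\}$, not merely in the worst case. I expect to handle this in one of three ways: (a)~appeal to a uniform hardness assumption on $L$ (e.g., ETH for SAT, which yields lower bounds on almost all large instances); (b)~perform a diagonalization in which $w_n$ is chosen so that any fixed algorithm fails on cofinitely many $n$, turning the per-instance claim into an infinitely-often claim as in the first bullet; or (c)~redefine each concept $\mathcal{C}_n$ so that its training sample encodes a \emph{description} of a family of hard instances (e.g., a short seed from which a hard instance is unambiguously recovered) rather than a single $w$, keeping one concept per $n$ while preserving the reduction. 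All remaining steps are routine bookkeeping once the instance-selection mechanism is fixed.
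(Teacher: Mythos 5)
Your proposal takes essentially the same route as the paper's proof: one concept per hard instance whose training sample reveals the instance, a test set that reveals the certificate bit by bit, an ideal learner that solves the instance offline and stores the answer as a constant in a lookup model, and a reduction showing that a fast learner combined with a fast model would yield a solver that is too fast, contradicting $k$. The only cosmetic difference is that you package the $O(n)$-bit certificate into a single test query, whereas the paper spreads it over $n$ index queries so that each test sample has a constant-size label; both variants are fine under the paper's definitions. The obstacle you flag in your last paragraph is genuine and the paper's own write-up also glosses over it --- it builds $\mathbb{C}$ from \emph{all} positive instances (too many per $n$ for the concept-class definition), and restricting to one $X_n$ per $n$ requires the lower bound to hold on that specific sequence; your option (b), which converts the argument into an infinitely-often claim, aligns best with the ``does not learn infinitely many concepts'' bullet in the theorem statement and is the cleanest repair.
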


The intuitive claim of this theorem is, that there are concepts, which require a $\Omega(k(n_C))$ time (i.e. a lot) to learn while being fast at inference.

A special case for SAT is carried out in Appendix~\ref{subsec:special-case-proof} and the general case in Appendix~\ref{subsec:full-proof}.
The proof focuses on training on a minimal dataset (one sample) and testing on an out-of-distribution ``exam" set. For completeness, in Appendix~\ref{appendix:proof-iid-case} we provide a proof for similar claims in an in-distribution training-testing setting.

We now introduce a memorization assumption which assumes that our learning algorithm memorizes fast.

\begin{definition}[Constant Memorization]
    \label{assumption:memorizing-learner}
    Given a concept class $\mathbb C$ with applications $T_\mathcal{C},E_\mathcal{C}$ for each $\mathcal C\in\mathbb C$ and a learning algorithm $\mathcal L$. We say $\mathcal L$ memorizes constantly under $\mathbb C$, if there is a constant $B$, such that for any $T_\mathcal{C},E_\mathcal{C}$, $\mathcal L$ runs in time $B*|T_\mathcal{C}|*Z$ with $Z$ being the longest runtime of $\mathcal M:=\mathcal L(T_\mathcal{C})$ on a sample in $E$.
\end{definition}

With this, we obtain:

\begin{corollary}
    \label{corollary:many-samples-needed}
    Let $k\in\mathcal K$. Then there exists a concept class $\mathbb C$ with applications $T_\mathcal{C},E_\mathcal{C}$ such that: \begin{enumerate}[leftmargin=*]
        \vspace{-0.15cm}
        \item Any constantly memorizing learning algorithm $\mathcal L$ learning on an (arbitrary) augmented dataset $T_\mathcal{C}'$ generated by a generator $T_\mathcal{C}'=G(T_\mathcal{C})$ with $G$ running in time $o(k(n_\mathcal{C}))$:
        If $z^\mathcal{L}_\mathbb{C}(n_\mathcal{C})=o(\frac{k(n_\mathcal{C})}{n_\mathcal{C}})$, then $\mathcal L$ needs $\Omega(\frac{k(n_\mathcal{C})}{z^{\mathcal L}_{\mathbb C}(n_\mathcal{C})})$ many samples to learn each concept $\mathcal C=(U_\mathcal{C},f_\mathcal{C},n_\mathcal{C})\in\mathbb C$.
        \item There exists a learning algorithm $\mathcal L^*$ learning each concept from a single sample with $z^{\mathcal L^*}_{\mathbb C}(n_\mathcal{C})=O(n_\mathcal{C})$.
    \end{enumerate}
\end{corollary}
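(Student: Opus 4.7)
\textbf{Proof plan for Corollary~\ref{corollary:many-samples-needed}.}
The plan is to invoke Theorem~\ref{main-theorem} for the given $k\in\mathcal{K}$, obtaining a concept class $\mathbb{C}$ with applications $(T_\mathcal{C},E_\mathcal{C})$, and then combine its trichotomy with the constant-memorization assumption. Part~2 of the corollary is immediate: it is precisely item~1 of Theorem~\ref{main-theorem}, which furnishes the single-sample learner $\mathcal{L}^*$ with inference cost $O(n_\mathcal{C})$.

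For part~1, I would fix a constantly memorizing learner $\mathcal{L}$ together with an augmenter $G$ running in time $o(k(n_\mathcal{C}))$, and consider the composed learner $\mathcal{L}':=\mathcal{L}\circ G$ that maps $T_\mathcal{C}\mapsto \mathcal{L}(G(T_\mathcal{C}))$. Because $\mathcal{L}'$ produces exactly the same models as $\mathcal{L}$, it inherits the inference bound $z^{\mathcal{L}'}_\mathbb{C}(n_\mathcal{C})=z^\mathcal{L}_\mathbb{C}(n_\mathcal{C})$, so Theorem~\ref{main-theorem}(2) applies and forces $\mathcal{L}'$ into exactly one of its three alternatives. The ``does not learn'' alternative contradicts the working hypothesis that $\mathcal{L}$ learns $\mathcal{C}$ and is discarded. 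The ``learns during inference'' alternative is ruled out using the test set built inside Theorem~\ref{main-theorem}: the underlying complexity reduction (illustrated for SAT in the sketch, analogously for each problem in $\mathcal{K}$) reads the combinatorial solution off a test set with $|E_\mathcal{C}|=O(n_\mathcal{C})$ entries, so the total inference cost is at most $|E_\mathcal{C}|\cdot z^\mathcal{L}_\mathbb{C}(n_\mathcal{C})=O(n_\mathcal{C})\cdot o(k(n_\mathcal{C})/n_\mathcal{C})=o(k(n_\mathcal{C}))$, contradicting the $\Omega(k(n_\mathcal{C}))$ lower bound that alternative demands.

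Only the third alternative therefore survives, giving $\textsc{time}(\mathcal{L}',T_\mathcal{C})=\Omega(k(n_\mathcal{C}))$. I would then expand this runtime using the definition of constant memorization: $\textsc{time}(\mathcal{L}',T_\mathcal{C}) \leq \textsc{time}(G,T_\mathcal{C}) + B\cdot|T'_\mathcal{C}|\cdot z^\mathcal{L}_\mathbb{C}(n_\mathcal{C})$. Since $\textsc{time}(G,T_\mathcal{C})=o(k(n_\mathcal{C}))$, the $\Omega(k(n_\mathcal{C}))$ contribution must come from the memorization term, yielding $|T'_\mathcal{C}|=\Omega(k(n_\mathcal{C})/z^\mathcal{L}_\mathbb{C}(n_\mathcal{C}))$, which is exactly the claimed sample lower bound. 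The step I expect to be most delicate is the bookkeeping around the test set: the threshold $z^\mathcal{L}_\mathbb{C}(n_\mathcal{C})=o(k(n_\mathcal{C})/n_\mathcal{C})$ is precisely calibrated to a linearly-sized $E_\mathcal{C}$, so the argument goes through only if Theorem~\ref{main-theorem} is instantiated with such an $E_\mathcal{C}$ (which its proof sketch indicates), and a larger polynomial test-set size in the construction would require a correspondingly tighter threshold in the corollary.
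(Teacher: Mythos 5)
Your proof is correct and follows essentially the same route as the paper's: invoke Theorem~\ref{main-theorem} for the given $k$, take part~2 directly from it, and for part~1 combine Theorem~\ref{main-theorem}(2) with Definition~\ref{assumption:memorizing-learner} to extract the sample lower bound, using exactly the $|E_\mathcal{C}|=O(n_\mathcal{C})$ bookkeeping you flag as the delicate step. The only cosmetic difference is that the paper assumes $|T'_\mathcal{C}|=o(k(n_\mathcal{C})/z^\mathcal{L}_\mathbb{C}(n_\mathcal{C}))$ directly and derives a contradiction in one step, whereas you walk explicitly through eliminating the first two alternatives of the trichotomy for the composed learner $\mathcal{L}\circ G$ before reading the bound off the third.
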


We use the notion of $G$ (which is an algorithm free to choose) to make sure that bigger datasets do not make things ``obvious'', that is, leak too much information about the exam $E_\mathcal{C}$.

If we think about the PEN task, it is a combination of sub-tasks, i.e., $PEN(x) = Cpy(PE(RCpy(x)))$. There are combinations for three of the sub-tasks chained together. Already in such a restricted setting, learning the task from scratch is extremely inefficient, requiring a multiplicative factor of one million samples more than the discrete optimization algorithm described in Appendix B.2, which in this case corresponds to the optimal learner mentioned in Corollary 5.7. This confirms the outcome that our theoretical framework would predict, i.e. LLaMA fails at learning from few examples, although the existence of the optimal learner shows that it would be possible.

The theory allows us to extend this result beyond the limit of the empirically verifiable. When more primitives are available (e.g. in general-purpose datasets like The Pile \cite{pile-dataset}, finding the right functional composition of the sub-tasks quickly becomes a computationally hard combinatorial task. With such problems, the theory says that plain gradient descent will tend to just memorize the few samples of the compositional task, instead of (combinatorially) finding the right composition.

\citet{scalable-extraction-of-training-data} show that one can extract more memorized data from larger models. \citet{memorization-in-transformers} have shown that larger language models memorize faster than smaller models.  Based on this, we can conservatively assume a constant number of steps for memorizing the fixed-size answer to a sample. This fits definition \ref{assumption:memorizing-learner} for a learning algorithm $\mathcal L^F$ learning with (growing) constant-depth feedforward models, depending on the input size in the training data, as one gradient step has the time complexity of a forward pass. 

Therefore, Corollary~\ref{corollary:many-samples-needed} can be interpreted as follows: There are classes of concepts, such that to learn them reliably, either the feedforward deep learning model needs to be unfavorably larger than needed for inference ($z^{\mathcal L^F}_{\mathbb C}(n)>>n$), or the data inefficiency is very high (by a factor $\frac{k(n)}{z^{\mathcal L^F}_{\mathbb C}(n)}$). Note that in practice, memorization does not necessarily mean, we stop optimizing, here we implicitly assume that after memorization, nothing useful happens anymore, e.g., the gradient signal magnitude is $0$ and the model does not change further.

Much research has been done on the complexity of various problems. Here we want to point out a few further concept classes that need significant computation to learn and therefore underlie the above statements:

A fundamental capability for safe, explainable, and intelligent systems, is to be able to infer causal relationships. It has been shown, however, that learning such structural causal models is NP-hard in the number of graph nodes~\citep{learning-bayesian-nns-NP-hard, learning-bayesian-nns-NP-hard2}. Therefore: Let our complexity class contain the functions applying all structural causal models and generate observations for each concept. Now let us test those functions by an exam after training (i.e. by just asking for the structural causal model).
Then, for this specific concept class, Theorem~\ref{main-theorem} and Corollary~\ref{corollary:many-samples-needed} hold for $k(n)$ being a super-polynomial function. 

Further, it has been shown~\citep{complexity-results-for-structure-based-causality} that given a structural causal model, it is NP-hard to infer whether one particular event always leads to another. Such inference might be important for a few selected cases. Again, by constructing a concept class containing functions such that the structural model is given or inferable and the always-cause is asked or needed for predictions in the inference phase, it follows by Theorem~\ref{main-theorem} and Corollary~\ref{corollary:many-samples-needed} that learning such concepts requires a lot of computation - and many samples for feedforward neural networks although theoretically unnecessary.

\subsection{Details on the theory}\label{sec:appendix}

For brevity ``algorithm'' refers to always terminating algorithms. 
Let us fix a standard random access memory computational model. We also fix a universal encoding and decoding function $\textsc{Enc}$, $\textsc{Dec}$ that can encode/decode any algorithm or object we define below (imagine, for example, a digital processor model where everything can be encoded in $0$s and $1$s). From now on we omit all encodings and decodings and instead use defined objects and their encodings interchangeably for the sake of simple notation.

\subsection{Intuition: Learning-inference gap}

\begin{figure}[b!]
    \centering
    \includegraphics[height=0.5\textwidth]{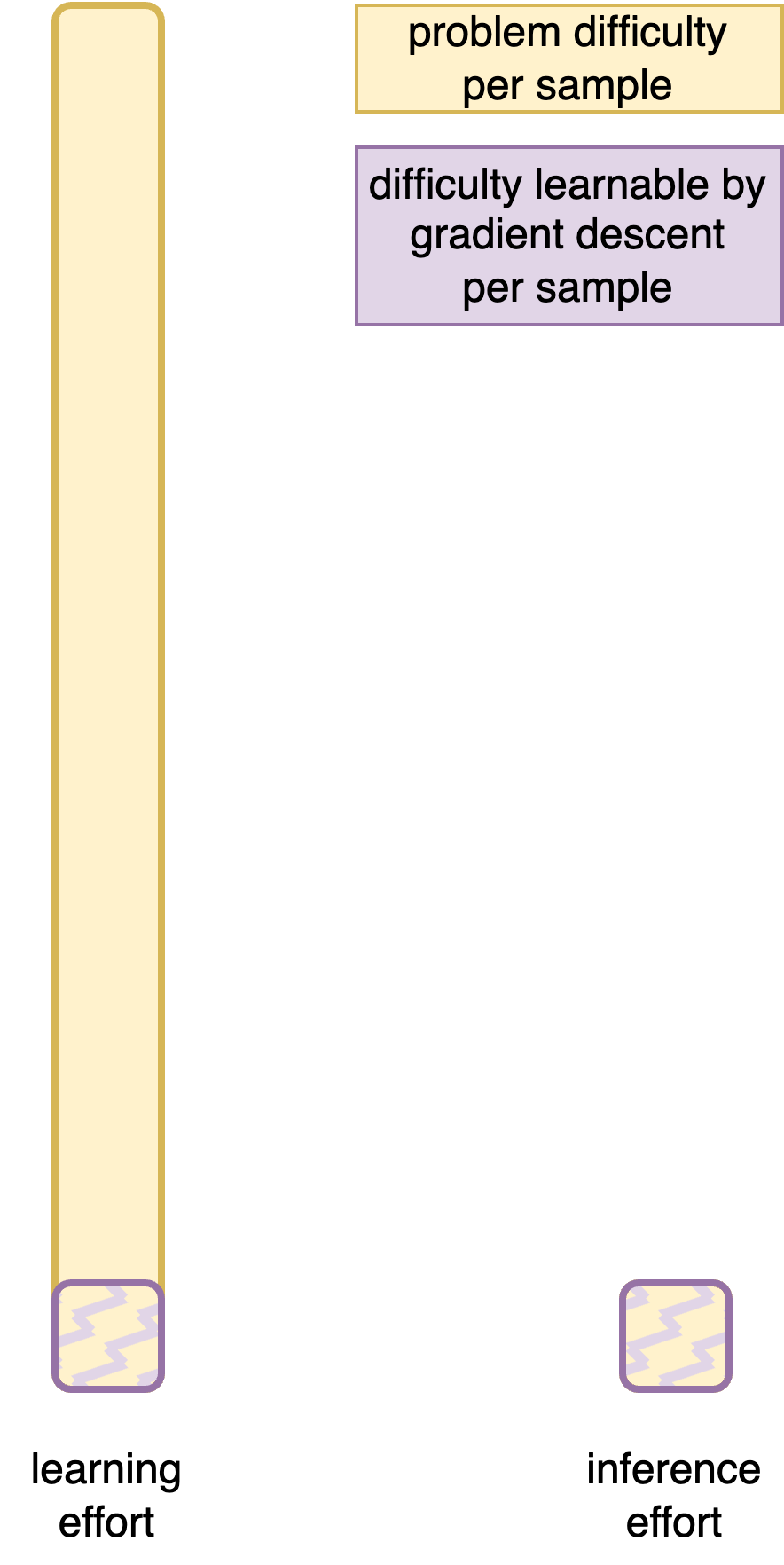}
    \caption{Intuition for our statement: Based on widely accepted assumptions, there are problems requiring much more learning computation time than usage computation time (here: per sample available). Gradient Descent on Feedforward Networks under the assumption of constant-step-memorization however, will only be able to learn problems as hard to learn as they are during inference. Therefore, the model size needs to be much larger than necessary, or the training samples need to be much larger than necessary, or gradient descent will have trouble learning.}
    \label{fig-appendix:intuition-learning-inference-gap}
\end{figure}

See Figure~\ref{fig-appendix:intuition-learning-inference-gap}.

\subsection{Special case: SAT proof on algorithmic learning}
\label{subsec:special-case-proof}

For a better reader's experience, let us restate the theorem:

\begin{theorem*}
    \label{restate-main-theorem}
    For each $k\in \mathcal K$ there is a (different) concept class $\mathbb C$, with applications $(T_{\mathcal{C}},E_{\mathcal{C}})$ (i.e., train and test datasets) for each $\mathcal C\in \mathbb C$, with the following properties: 
    \begin{enumerate}[leftmargin=*]
        \vspace{-0.15cm}
        \item There is a learning algorithm $\mathcal L^*$ that can learn each concept from a single sample in $O(k(n_\mathcal{C}))$ time and with $z^{\mathcal L^*}_{\mathbb C}(n_\mathcal{C})=O(n_\mathcal{C})$
        \item For any learning algorithm $\mathcal L$, $\mathcal L$ one of the following will hold: 
        \begin{itemize}[leftmargin=*]
            \item ``$\mathcal{L}$ \textnormal{does not learn}'': $\mathcal{L}$ is not able to learn infinitely many concepts in $\mathbb C$.
            \item ``$\mathcal{L}$ \textnormal{learns during inference}'': With $\mathcal M=\mathcal L(T_{\mathcal C})$, $\sum_{(x,y)\in E_\mathcal{C}}\textsc{time}(\mathcal M,x)=\Omega(k(n_\mathcal{C}))$ 
            \item ``$\mathcal{L}$ \textnormal{does learn}'': $\textsc{time}(\mathcal L,T_\mathcal{C})=\Omega(k(n_\mathcal{C}))$
        \end{itemize}
         
    \end{enumerate}
 
\end{theorem*}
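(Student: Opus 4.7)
The plan is to reduce the claimed separation directly to the hardness of the decision problem $L$ whose lower bound $k = k(n)$ is supplied by the choice of $k \in \mathcal K$. The core idea is to embed each instance of $L$ into the training set of a concept so that the instance itself is trivially readable from that single training sample, while correctly answering the test set necessarily reveals the solution to that instance. An optimal learner then pays the $O(k(n))$ solving cost exactly once, during training; any learner that avoids paying this cost both at training and at inference would, via the composition of its learning and inference routines, give a decision procedure for $L$ running in $o(k(n))$ time, contradicting the lower bound.

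First I would fix a decision problem $L$ witnessing $k$, and for every instance $I \in \{0,1\}^n$ of $L$ define a concept $\mathcal C_I = (U_I, f_I, n)$ whose input space contains two distinguished queries: a ``read-the-instance'' query $q_T$ with $f_I(q_T) = I$, and a ``reveal-the-solution'' query $q_E$ with $f_I(q_E) = \text{sol}(I)$ (with $f_I$ extended arbitrarily elsewhere; well-definedness is fine since $\text{sol}(I)$ has $O(n)$ bits by the definition of $\mathcal K$ and can be hard-wired into a lookup-style $f_I$). Take $T_{\mathcal C_I} = \{(q_T, I)\}$ and $E_{\mathcal C_I} = \{(q_E, \text{sol}(I))\}$, and set $\mathbb C = \{\mathcal C_I : I \in \{0,1\}^*\}$. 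The concept-class axioms are immediate since only $2^n$ concepts have difficulty $n$. To establish part~1, the optimal learner $\mathcal L^*$ (a) reads $I$ from the single training label in $O(n)$, (b) invokes the best $O(k(n))$-time algorithm for $L$ to compute $\text{sol}(I)$, and (c) emits a lookup model that answers either query in $O(n)$, giving $z_{\mathbb C}^{\mathcal L^*}(n) = O(n)$.

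For part~2 I would argue by reduction. Fix any learning algorithm $\mathcal L$ and restrict attention to those $\mathcal C_I \in \mathbb C$ that $\mathcal L$ actually learns (if only finitely many, we are already in the ``does not learn'' case). For each such $\mathcal C_I$, compose a decision procedure $\mathcal A_{\mathcal L}$ for $L$ on input $I$: assemble $T_{\mathcal C_I}$ in $O(n)$, run $\mathcal L$ to obtain $\mathcal M := \mathcal L(T_{\mathcal C_I})$, then query $\mathcal M(q_E)$, which by the learning assumption equals $\text{sol}(I)$. The total runtime is $\textsc{time}(\mathcal L, T_{\mathcal C_I}) + \textsc{time}(\mathcal M, q_E) + O(n)$, and by the hardness of $L$ this sum must be $\Omega(k(n))$ on the worst-case size-$n$ instance. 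Because $\mathbb C$ contains every such instance, for infinitely many sizes at least one of the two dominant terms is $\Omega(k(n))$: either training time (case 3), or the inference time whose sum over $E_{\mathcal C_I}$ witnesses case 2.

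The step I expect to be the main obstacle is the bookkeeping that makes the reduction genuinely uniform over every possible learner $\mathcal L$. Because $k$ is only a worst-case lower bound, the ``hard'' instance at size $n$ depends on the algorithm being refuted, which is why the construction must include every instance in $\mathbb C$ (at the cost of $2^n$ concepts per size, still finite as required by the concept-class definition); the trichotomy is then inherited by at least those concepts that are worst-case for the composite $\mathcal A_{\mathcal L}$. A secondary concern is ensuring that the $O(n)$ overheads of forming $T_{\mathcal C_I}$ and decoding $\mathcal M$'s output do not swallow the $k(n)$ lower bound, which is harmless whenever $k(n) = \omega(n)$---precisely the regime of interest in $\mathcal K$ per the guiding examples (SAT, edit distance). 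Everything else is routine accounting against the definitions of $\textsc{time}(\cdot)$, $z_{\mathbb C}^{\mathcal L}(\cdot)$, and ``learning.''
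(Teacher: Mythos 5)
Your proposal is correct and follows essentially the same reduction as the paper's proof: embed an instance of the hard problem into the single training sample, have the exam set reveal its solution, and observe that a fast learn‑then‑infer pipeline would decide (or rather solve) $L$ in $o(k(n))$ time. The only differences are cosmetic repackaging: the paper places the instance $X$ as the training \emph{input} with label $1$ (so $\mathcal M$ is a verifier running $\textsc{check}(X,s)$) and queries the witness bit‑by‑bit as $|y^X|$ exam samples, whereas you place $I$ as the training \emph{label} of an artificial query $q_T$, use a single exam query returning the whole witness, and emit a lookup model; both give the same $O(n)$ inference bound and the same contradiction.
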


For simplicity, we first prove a special case where $k$ is the minimal runtime of a SAT (3-satisfiability) problem (i.e. given $P\not=NP$ exponential in $n$).

\begin{proof}
    \label{proof-SAT-case}
    \citet{unique-solution-sat-is-hard} have shown, that given a formula which is either uniquely satisfiable or not satisfiable, deciding this is NP-hard. From this follows, that given a uniquely satisfiable formula, it is (assuming $P\not=NP$) not possible in polynomial time, to find a proof for it, i.e. the unique satisfying variable assignment (this holds, because checking such a proof is fast).
    
    We choose $\mathbb C$ to be a set constructed from the set of unambiguously satisfiable SAT formulas: Given a uniquely satisfiable formula $A$ and the unique solution $y^A$, let $U^A=\{A,1,\hdots,|y^A|\}$ be the set containing the formula and the variable names. Let $f^A(A)=1$ and $f^A(i)=y^A_i$. Our concept is now $\mathcal{C}^A = (U^A,f^A,|A|)$. We choose the training dataset of the concepts to be $\{(A,1)\}$ and the examination dataset to be $\{(i,y^A_i)|i\in 1,\hdots,|y^A|\}$. This means we train on a uniquely satisfiable formula, and the test set tests whether the model learned the unique solution to it.
    
    We now construct the learning algorithm $\mathcal L^*$ learning $\mathcal C\in \mathbb C$ from one sample. Let us fix an arbitrary $\mathcal (U_\mathcal{C},f_\mathcal{C},n_\mathcal{C})\in \mathbb C$.

    Given the training dataset $T_\mathcal{C}=\{(A_\mathcal{C},1)\}$ corresponding to $\mathcal C$, $\mathcal L^*$ builds an algorithm $\mathcal M$ of the form: \begin{itemize}
        \item $\mathcal M(A)=\textsc{check}(A,x)$ for a formula $A$ of length $n_{\mathcal C}$ where $x, |x|\leq n_{\mathcal C}$ is a stored (i.e. hardcoded) constant. $\textsc{check}$ returns $1$ iff $x$ is a satisfying variable assignment for the variables of $A$. Note that $n_\mathcal{C}$ can be inferred by $\mathcal L^*$ as it is the length of the single sample in $T_\mathcal{C}$.
        \item $\mathcal{M}(i) = x_i$, i.e. the $i$-th variable assignment.
        \item On all other inputs, $\mathcal{M}$ can implement any algorithm (e.g. some pre-trained knowledge).
    \end{itemize}

    Note that $\mathcal M$ has a linear running time in $n_\mathcal{C}$ on all samples in $T_{\mathcal C}$ and $E_{\mathcal C}$. Now, $\mathcal L^*$ finds the satisfying variable assignment $x$ (using the minimal-runtime algorithm) for $A_\mathcal{C}$ and stores it in $x$. Note that this procedure runs in $k_{SAT}(n_\mathcal{C})$ for $k_{SAT}(n_\mathcal{C})$ being the minimal runtime for finding a solution to uniquely satisfiable formulas (which is assumed to be exponential under the exponential time hypothesis).

    Now to the second step: Assume there is a learning algorithm $\mathcal L$ and assume that it can learn all concepts except finitely many and do so in $o(k_{SAT}(n_\mathcal{C}))$ (i.e. strictly less) time and with $\mathcal M=\mathcal L(T_\mathcal{C})$ needing $o(k_{SAT}(n_\mathcal{C})/n_\mathcal{C})$ time on each sample in $E_\mathcal{C}$. Then one can simulate the learning setting to find a solution to a formula $A$, $n:=|A|$ (except finitely many times): First, simulate $\mathcal{L}$ on $\{(A,1)\}$ to obtain $\mathcal M=\mathcal{L}(\{(A,1)\})$. Then, simulate $\mathcal M$ on all variable indices $i$ as inputs to obtain the satisfying assignment $y^A$. This algorithm finds $y^A$ in $o(k_{SAT}(n))+o(n*\frac{k_{SAT}(n)}{n})=o(k_{SAT}(n))$ time, contradiction.

\end{proof}

\subsection{General proof on algorithmic learning}
\label{subsec:full-proof}
This proof is analogous to the special SAT case. We recommend the reader to read the special case first.

\begin{proof}
    Let $k\in\mathcal K$. Let $\mathcal{P}$ be the problem $k$ corresponds to (e.g. SAT). Let $\mathcal{A}_s$ denote the solving algorithm for $\mathcal P$ running in time $k(n)$, i.e. an algorithm that computes a solution to each instance of a class of problem cases, where the solution is in $O(n)$ space. Let $\mathcal{A}_c$ be the checking algorithm that can verify if such a solution is correct, running in $O(n)$. We choose the concept class $\mathbb C$ to be a set constructed from the set of positive instances in $\mathcal P$ (e.g. in the SAT special case this would be a uniquely satisfiable CNF formula):
    Given such a positive input instance $X$, let $U^X=\{X,1,\hdots,|\mathcal{A}_s(X)|\}$. Let $f^X(X)=1$ and $f^X(i)=\mathcal{A}_s(X)_i$. Our concept is now $\mathcal{C}^X=(U^X,f^X,|X|)$. We choose the training dataset of the concepts to be $\{(X,1)\}$ and the examination dataset to be $\{(i,y^X_i|i\in 1,\hdots,|y^X|\}$.

    We now construct the learning algorithm $\mathcal L^*$ learning $\mathcal C\in \mathbb C$ from one sample. Let us fix an arbitrary $\mathcal (U_\mathcal{C},f_\mathcal{C},n_\mathcal{C})\in \mathbb C$.

    Given the training dataset $T_\mathcal{C}=\{(X,1)\}$ corresponding to $\mathcal C$, $\mathcal L^*$ builds an algorithm $\mathcal M$ of the form: \begin{itemize}
        \item $\mathcal M(X)=\textsc{check}(X,s)$ for a problem instance $X$ of the problem $\mathcal P$ with $|X|\leq n_C$. $|s|= O(n_{\mathcal C})$ is a stored (i.e. hardcoded) constant. $\textsc{check}$ returns $1$ iff $s$ is a proof for the problem instance $X$ (which exists by Definition \ref{def-K}). Values for $s$ will be candidate solutions for $X$, and $\mathcal{L}^*$ will search for the solution.
        \item $\mathcal{M}(i) = s_i$, i.e. the $i$-th value (e.g. bit) of $s$.
        \item On all other inputs, $\mathcal{M}$ can implement any algorithm (e.g. some pre-trained knowledge).
    \end{itemize}

    Note that $\mathcal M$ has a linear runtime in $n_\mathcal{C}$ on all samples in $T_\mathcal{C}$ and $E_\mathcal{C}$. Now, $\mathcal L^*$ computes $s^*=\mathcal{A}_s(X)$ (i.e. the solution to $\mathcal P$) and stores it in $s$, i.e. $s=s^*$. Note that this procedure runs in $\textsc{time}(A_s,X)$ time (which can be $k(n_C)$ if $k(n_C)$ is tight).

    Now to the second step: Assume there is a learning algorithm $\mathcal L$ and assume that it can learn all concepts except finitely many and do so in $o(k(n_\mathcal{C}))$ (i.e. strictly less) time and with $\mathcal M=\mathcal L(T_\mathcal{C})$ needing $o(k(n_\mathcal{C})/n_\mathcal{C})$ time on each sample in $E_\mathcal{C}$. Then one can simulate the learning setting to find a solution to a problem case $X\in \mathcal P$, $n:=|X|$ (except finitely many times): First, simulate $\mathcal{L}$ on $\{(X,1)\}$ to obtain $M=\mathcal{L}(\{(X,1)\})$. Then, simulate $\mathcal M$ on all solution indices as inputs to get the proof $y^X$ for $X$. This algorithm finds $y^X$ in $o(k(n))+o(n*\frac{k(n)}{n})=o(k(n))$ time, contradiction.

\end{proof}

\newpage
\subsection{Algorithmic learning in the in-distribution setting}
\label{appendix:proof-iid-case}

The underlying idea of this proof is to use one-way functions that are assumed to be hard even when one sees many examples. Such functions are known from cryptography. By using public-key cryptography we enable an attacker to simulate our learning setting which then allows us to show that this learning setting cannot be successful easily while we make sure, the setting is principally learnable~\citep{complexity-of-machine-learning-kearns}. For more elaborate motivations, we refer to this work.

Assumption $\ref{assumption:rsa}$ is a standard assumption about cryptography being safe.

\begin{assumption}[Randomized RSA is CPA Secure]
    \label{assumption:rsa}
    Let $pk,pk^{-1}$ denote an RSA encryption scheme with padding the plaintext with zeros and random bits before encryption. Assume that this encryption algorithm is CPA secure.
\end{assumption}

\begin{theorem}
    \label{iid-theorem}
    Let $k(n)$ be any polynomial function. Assuming Assumption~\ref{assumption:rsa}, there exists a randomized concept class $\mathbb C$ containing exactly one concept $\mathcal C_n$ per $n\in \mathbb N^+$, with applications $(T_{\mathcal{C}},E_{\mathcal{C}})$ (i.e. train and test datasets) where $T_{\mathcal{C}}$ and $E_{\mathcal{C}}$ are drawn from the same distribution, with the following properties: 
    \begin{itemize}
        \item There is a learning algorithm that can learn each concept from a single sample.
        \item For any learning algorithm $\mathcal L$ and any $n$, $\mathcal L$ will either have negligible (in $n$) probability of learning concept $\mathcal C_n$. Otherwise, $\mathcal M=\mathcal L(T_{\mathcal C_n})$ needs $\Omega(\frac{k(n)}{n})$ computation time on a sample in $E_{\mathcal C_n}$, or $\mathcal L$ needs $\Omega(k(n))$ computation time to learn.
    \end{itemize}
 
\end{theorem}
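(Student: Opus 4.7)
The plan is to build the concept class so that correct learning coincides with breaking randomized RSA, so that Assumption~\ref{assumption:rsa} forces any efficient learner to fail, while a tailored learner that simply ``remembers'' the secret keys bypasses the hardness. Concretely, I would let $\mathbb{C}$ contain exactly one concept $\mathcal{C}_n$ per security parameter $n$, obtained by sampling a randomized RSA key pair $(pk_n, pk_n^{-1})$ of length $n$. Take $U_{\mathcal{C}_n}$ to be the set of valid ciphertexts under $pk_n$ and $f_{\mathcal{C}_n}(c) := pk_n^{-1}(c)$; note that decryption is deterministic in $c$ even though encryption is randomized. Both $T_{\mathcal{C}_n}$ and $E_{\mathcal{C}_n}$ are drawn i.i.d.\ from the distribution that picks a uniform plaintext $m$, computes $c = pk_n(m)$ with fresh padding randomness, and emits $(c, m)$; set $|T_{\mathcal{C}_n}| = 1$.

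For the first claim, the smart learner $\mathcal{L}^*$ hardcodes the full sequence $\{pk_n^{-1}\}_n$, which is well-defined because $\mathbb{C}$ has exactly one (random but fixed) concept per $n$. Given its single training sample, $\mathcal{L}^*$ reads off $n$ from the ciphertext length, looks up $pk_n^{-1}$, and outputs the polynomial-time model $\mathcal{M}(c) = pk_n^{-1}(c)$, which agrees with $f_{\mathcal{C}_n}$ on the whole support. Hence the concept is learned from a single sample.

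The core of the argument is the contrapositive of the second claim, via a standard CPA reduction. Suppose $\mathcal{L}$ trains in time $o(k(n))$, yields a model of per-sample inference time $o(k(n)/n)$, and is correct on a random test sample with non-negligible probability $\epsilon(n)$. Build a CPA distinguisher as follows: pick two uniform plaintexts $m_0, m_1$, submit them to the CPA challenger, and receive $c^* = pk_n(m_b)$ for unknown $b$. Simulate the training distribution by sampling a fresh uniform $m$, forming $T = \{(pk_n(m), m)\}$, running $\mathcal{L}$ to obtain $\mathcal{M}$, and evaluating $\mathcal{M}(c^*)$. Output $b' = 0$ if $\mathcal{M}(c^*) = m_0$, $b' = 1$ if $\mathcal{M}(c^*) = m_1$, and a uniform bit otherwise. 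Because $c^*$ has the same marginal distribution as a test ciphertext conditional on $b$, the probability that $\mathcal{M}(c^*) = m_b$ equals $\epsilon(n)$ up to negligible terms, yielding distinguishing advantage $\Omega(\epsilon(n))$. The total runtime is $o(k(n)) + o(k(n)/n) + \mathrm{poly}(n) = o(k(n))$, which is polynomial in $n$ and contradicts Assumption~\ref{assumption:rsa}.

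The main obstacle is conceptual rather than computational: because the training and test sets are genuinely i.i.d., a generic learner cannot exploit any structural gap between them beyond whatever computational shortcut the secret key would provide, and CPA security rules out extracting that shortcut from polynomially many ciphertext/plaintext pairs. This is exactly why $\mathcal{L}^*$ must be allowed to encode the keys as nonuniform advice, which is precisely what having one-concept-per-$n$ in a fixed (random) concept class permits. Secondary care is needed to bound negligible collisions such as $m_0 = m_1$ or $m \in \{m_0, m_1\}$, but these contribute only negligibly to the distinguishing advantage and do not affect the asymptotic lower bound.
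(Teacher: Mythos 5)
Your overall strategy mirrors the paper's: build a concept class tied to randomized RSA, exhibit a slow-but-correct learner, and convert any fast, frequently-correct learner into a CPA distinguisher. The CPA reduction you give is essentially the paper's, and your framing of the i.i.d.\ train/test distributions is fine. However, there is a concrete gap in the first bullet, and it is not cosmetic.

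You claim $\mathcal{L}^*$ can ``hardcode the full sequence $\{pk_n^{-1}\}_n$.'' But $\mathbb{C}$ fixes one independently sampled RSA key pair per $n \in \mathbb{N}^+$, so this sequence is infinite and (with probability one) not computable. Under the paper's definitions, a learning algorithm is an ordinary (always-terminating) algorithm on a standard RAM model, i.e., a finite uniform object; it is not a circuit family and it does not receive non-uniform advice. A single Turing machine cannot store infinitely many independent random secret keys, so your $\mathcal{L}^*$ is not a valid learning algorithm, and the first existential clause of the theorem is not established. Note also that your training samples $(c,m)$ carry no public-key information, so $\mathcal{L}^*$ cannot even recover $pk_n$ from its input, let alone brute-force $pk_n^{-1}$: a single $(c,m)$ pair does not pin down the key.

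The paper sidesteps exactly this by putting the public key inside every sample: each datum is a triple $(x, pk(x), pk)$ or $(y, pk(x), pk)$, and the concept is the membership predicate ``is this a valid plaintext/ciphertext pair for this $pk$.'' Then $\mathcal{L}^*$ reads $pk$ off the one training sample and brute-forces the secret key; this is a slow but perfectly uniform, computable procedure, and yields $z^{\mathcal{L}^*}$ linear in $n$ at inference time. Your construction can be repaired by making each sample a triple $(c, pk_n, m)$ and having $\mathcal{L}^*$ brute-force $pk_n^{-1}$ from $pk_n$; with that change, your decryption-function formulation works and the CPA argument in your second paragraph goes through essentially as written (modulo the usual negligible collision terms you already flag). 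As stated, though, the smart learner is not an algorithm, so the proof does not close.
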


\begin{proof}
    Let us denote it by $pk,pk^{-1}$ our randomized RSA encryption scheme. For each $n\in\mathbb N$, we now build one concept: 

    First let $s\in\{0,1\}^n$ be the randomly chosen secret for $pk,pk^{-1}$. We define:\begin{align}
        U_+&=\{(x,pk(x),pk)|x\in\{0,1\}^n\}\\
        U_-&=\{(y,pk(x),pk)|y\in\{0,1\}^m,x\in\{0,1\}^n\}\setminus U_+
    \end{align} 
    where $m$ denotes the ciphertext size of $pk$. Note that the encryption is randomized and $U_+$ contains all combinations $pk(x)$ for the random bits.
    We define $f_n(e)=1$ if $e\in U_+$ and $f_n(e)=0$ if $e\in U_-$. The concept is now $(U_+\cup U_-,f_n,n)$. Let us fix some constant size for $T_n, E_n$ each (one is sufficient). Each sample in $T_n$ and $E_n$ is drawn with probability $\frac{1}{2}$ from $U_+$ u.a.r. and with probability $\frac{1}{2}$ from $U_-$ u.a.r.

    With $\mathbb C$ being defined, let us construct the learning algorithm $\mathcal L^*$ learning from one sample. Let $n$ be fixed and sample $T_n, E_n$. $\mathcal L^*$ chooses an arbitrary sample $(x,pk(x),pk)$ and brute-forces the secret key $d$ of the RSA encryption scheme. Note that this key is unique given the public key information $pk$. $\mathcal L^*$ now builds an algorithm $\mathcal M$ for the form:\begin{itemize}
        \item $\mathcal{M}(x,c,pk)=1$ iff the decoding of $c$ is $x$ and $0$ otherwise
        \item On all other inputs, $\mathcal M$ can implement any algorithm (e.g. some pre-trained knowledge).
    \end{itemize} 

    Note that those operations are efficiently doable, much faster than breaking the RSA secret key.

    Now to the second part: Let $k$ be any polynomial function and fix some $n$. Assume there is a learning algorithm $\mathcal{L}$ and assume that it learns the concept $C_n$ with non-negligible probability in $n$. Further assume that $\mathcal L$ needs $o(k(n))$ time and $\mathcal M=\mathcal L(T_{\mathcal{C}_n})$ spends $o(\frac{k(n)}{n})$ for each sample $s\in E_\mathcal{C}$. Then, we can break RSA encryption in polynomial time: Let our encryption-breaking algorithm $\mathcal{E}$ be given a public key configuration (including $n$). It samples a training dataset $T$ with the public key. Then it simulates $\mathcal L$ on it and obtains $\mathcal{M}=\mathcal{L}(T)$. For the IND-CPA game, it chooses words $m_0\not=m_1$ randomly and gets the (randomized) encryption $c$ of one of them back. $\mathcal E$ sets $\{(m_0,c,pk)\}$ as the test set. $\mathcal{E}$ now computes the answer by simulating the trained model and choosing $m_0$ if $\mathcal{M}((m_0,c,pk))=1$. This will give $\mathcal E$ non-negligible success probability for the IND-CPA game in polynomial time, contradiction.

\end{proof}

\subsection{Proof of the corollary \ref{corollary:many-samples-needed}}

\begin{corollary*}
    \label{corollary-re-state:many-samples-needed}
    Let $k\in\mathcal K$. Then there exists a concept class $\mathbb C$ with applications $T_\mathcal{C},E_\mathcal{C}$ such that: \begin{itemize}
        \vspace{-0.15cm}
        \item Any constantly memorizing learning algorithm $\mathcal L$ learning on an (arbitrary) augmented dataset $T_\mathcal{C}'$ generated by a generator $T_\mathcal{C}'=G(T_\mathcal{C})$ with $G$ running in time $o(k(n_\mathcal{C}))$:
        If $z^\mathcal{L}_\mathbb{C}(n_\mathcal{C})=o(\frac{k(n_\mathcal{C})}{n_\mathcal{C}})$, then $\mathcal L$ needs $\Omega(\frac{k(n_\mathcal{C})}{z^{\mathcal L}_{\mathbb C}(n_\mathcal{C})})$ many samples to learn each concept $\mathcal C=(U_\mathcal{C},f_\mathcal{C},n_\mathcal{C})\in\mathbb C$.
        \item There exists a learning algorithm $\mathcal L^*$ learning each concept from a single sample with $z^{\mathcal L^*}_{\mathbb C}(n_\mathcal{C})=O(n_\mathcal{C})$.
    \end{itemize}
\end{corollary*}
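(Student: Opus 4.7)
The plan is to take the concept class $\mathbb{C}$ and applications $(T_\mathcal{C}, E_\mathcal{C})$ produced by Theorem~\ref{main-theorem} for the same choice of $k\in\mathcal{K}$, and reduce the corollary to that theorem. The second bullet is then immediate: the learning algorithm $\mathcal{L}^*$ guaranteed by Theorem~\ref{main-theorem} already learns each concept from a single sample, and its inference runtime satisfies $z^{\mathcal{L}^*}_{\mathbb{C}}(n_\mathcal{C}) = O(n_\mathcal{C})$. I would also note, once and for all, that in the construction behind Theorem~\ref{main-theorem} the examination set has size $|E_\mathcal{C}| = O(n_\mathcal{C})$ (one query per index of the proof/assignment plus the instance query), a fact I will reuse below.

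\textbf{Reduction for the first bullet.} Fix a constantly memorizing learner $\mathcal{L}$ (with memorization constant $B$) and a generator $G$ running in time $o(k(n_\mathcal{C}))$. Define the composed learner $\mathcal{L}'(T_\mathcal{C}) := \mathcal{L}(G(T_\mathcal{C}))$; note $\mathcal{L}'$ produces exactly the same inference model as $\mathcal{L}$ on $T'_\mathcal{C}$, so $z^{\mathcal{L}'}_{\mathbb{C}} = z^{\mathcal{L}}_{\mathbb{C}}$. Now apply Theorem~\ref{main-theorem} to $\mathcal{L}'$. Because we want $\mathcal{L}$ to learn every concept in $\mathbb{C}$, the ``does not learn'' branch is ruled out, so $\mathcal{L}'$ must fall into either ``learns during inference'' or ``does learn''. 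The hypothesis $z^{\mathcal{L}}_{\mathbb{C}}(n_\mathcal{C}) = o(k(n_\mathcal{C})/n_\mathcal{C})$ combined with $|E_\mathcal{C}| = O(n_\mathcal{C})$ yields
\begin{equation*}
\sum_{(x,y)\in E_\mathcal{C}} \textsc{time}(\mathcal{M}, x) \;\leq\; |E_\mathcal{C}|\cdot z^{\mathcal{L}}_{\mathbb{C}}(n_\mathcal{C}) \;=\; o(k(n_\mathcal{C})),
\end{equation*}
which rules out ``learns during inference''. Hence the third branch must hold: $\textsc{time}(\mathcal{L}', T_\mathcal{C}) = \Omega(k(n_\mathcal{C}))$.

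\textbf{Extracting the sample lower bound.} By constant memorization, $\textsc{time}(\mathcal{L}, T'_\mathcal{C}) \leq B\cdot|T'_\mathcal{C}|\cdot z^{\mathcal{L}}_{\mathbb{C}}(n_\mathcal{C})$, and by assumption $\textsc{time}(G, T_\mathcal{C}) = o(k(n_\mathcal{C}))$, so
\begin{equation*}
\textsc{time}(\mathcal{L}', T_\mathcal{C}) \;\leq\; o(k(n_\mathcal{C})) + B\cdot|T'_\mathcal{C}|\cdot z^{\mathcal{L}}_{\mathbb{C}}(n_\mathcal{C}).
\end{equation*}
Combining with the $\Omega(k(n_\mathcal{C}))$ lower bound from the previous step and absorbing the $o(k(n_\mathcal{C}))$ term gives $|T'_\mathcal{C}|\cdot z^{\mathcal{L}}_{\mathbb{C}}(n_\mathcal{C}) = \Omega(k(n_\mathcal{C}))$, i.e.\ $|T'_\mathcal{C}| = \Omega(k(n_\mathcal{C})/z^{\mathcal{L}}_{\mathbb{C}}(n_\mathcal{C}))$, which is exactly the claimed sample-complexity bound.

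\textbf{Expected obstacles.} The only subtle point I anticipate is bookkeeping the role of $G$: one must ensure that the $o(k(n_\mathcal{C}))$ budget for the generator cannot smuggle in information that lets Theorem~\ref{main-theorem} be bypassed. This is handled cleanly by folding $G$ into $\mathcal{L}'$ above and applying the theorem to the composition, because Theorem~\ref{main-theorem} quantifies over \emph{any} learning algorithm. A secondary detail is verifying $|E_\mathcal{C}| = O(n_\mathcal{C})$ in the construction of Theorem~\ref{main-theorem}; inspecting the proof there (one exam query per proof index plus the instance query) confirms this, and no further modification of the concept class is required.
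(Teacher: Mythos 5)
Your proposal is correct and takes essentially the same route as the paper: instantiate $\mathbb C$ from Theorem~\ref{main-theorem}, note the second bullet is immediate, and use constant memorization together with the $o(k(n_\mathcal{C}))$ budget for $G$ to conflict with the theorem's runtime trichotomy. The only cosmetic difference is that you argue directly (ruling out the first two branches, concluding the third, and reading off $|T'_\mathcal{C}|=\Omega(k(n_\mathcal{C})/z^{\mathcal L}_{\mathbb C}(n_\mathcal{C}))$), whereas the paper phrases it as a contradiction against an assumed $o(k(n_\mathcal{C})/z^{\mathcal L}_{\mathbb C}(n_\mathcal{C}))$ bound; you also make explicit the $|E_\mathcal{C}|=O(n_\mathcal{C})$ fact that the paper uses implicitly, and your device of folding $G$ into $\mathcal L'$ tidies a step the paper leaves informal.
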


\begin{proof}
    Let us fix $k\in\mathcal K$ and define our concept class $\mathbb C$ with applications $E_\mathcal{C},T_\mathcal{C}$ as given by Theorem \ref{main-theorem}. 

    The second part follows immediately by Theorem \ref{main-theorem}.

    The first part: Let $\mathcal L$ be a constantly memorizing learning algorithm $\mathcal L$ with a generator $G$. Assume that $z^\mathcal{L}_\mathbb{C}(n_\mathcal{C})=o(\frac{k(n_\mathcal{C})}{n_\mathcal{C}})$ and, $\mathcal L$ learns with $|T'_\mathcal{C}|=o(\frac{k(n_\mathcal{C})}{z^\mathcal{L}_\mathbb{C}(n_\mathcal{C})})$. Then by Definition \ref{assumption:memorizing-learner}, there exists a constant $B$ such that for every $\mathcal C\in \mathbb C$, $\mathcal{L}$ runs in time:
    
    $B*|T_\mathcal{C}'|*Z_\mathcal{C}\leq B*|T_\mathcal{C}'|*z^{\mathcal L}_{\mathbb C}(n_\mathcal{C}) = o(\frac{k(n_\mathcal{C})}{z^\mathcal{L}_\mathbb{C}(n_\mathcal{C})}*z^{\mathcal L}_{\mathbb C}(n_\mathcal{C}))=o(k(n_\mathcal{C}))$. 

    Therefore, we can summarize that $G$ runs in $o(k(n_\mathcal{C}))$, $\mathcal{L}$ runs in $o(k(n_\mathcal{C}))$, and $\mathcal{M}=\mathcal{L}(T_\mathcal{C}')$ runs in $z^\mathcal{L}_\mathbb{C}(n_\mathcal{C})=o(\frac{k(n_\mathcal{C})}{n_\mathcal{C}})$. This can't be, as Theorem \ref{main-theorem} states that we need more computation time.
\end{proof}


\clearpage
\section{Specifications of PEN, PERM, HSS, and MUL}
\label{appendix:tasks}

\subsection{Properties of the sub-task definition}
\label{app:subtasks-properties}
In this Appendix section, we expose more details about our definition of the set of sub-tasks $\mathcal{S}$ and the properties that it shows.
In particular, we characterize it through different angles, namely:
\begin{itemize}
    \item \textbf{atomicity}: each sub-task is an atomic unit roughly corresponding to a basic block of operations in the equivalent procedural programming language.
    \item \textbf{minimality}: among all the possible definitions of sub-tasks, we define  $\mathcal{S}$ to be the minimal set of sub-tasks, where every primitive operation is made observable in \textit{exactly} one sub-task.
    \item \textbf{usefulness}: the defined set of primitives is useful to learn the compositional task. We validate this in our experiments on task de-composition (Section \ref{sec:decompositionality}), which shows that the model learns and compose the given set of primitives when it is trained on the full compositional task. 
    \item \textbf{uniqueness}: by imposing independent observability on the different primitives, we can relax the constraint on the bijection between primitives and sub-tasks, allowing for sub-tasks that contain more than a single primitive. As a result, the definition of $\mathcal{S}$ is not unique. However, this does not undermine the validity of the investigation, since we are measuring a \textit{relative} phenomenon. The inefficiency of Transformer-based language models that we observe is always relative to the considered set of chosen sub-tasks. However, considering all the properties above, we speculate that similar results could be also measured for alternative definitions of the sub-tasks set $\mathcal{S}$.
\end{itemize}

\subsection{Details on the PEN task}
\label{appendix:pe-details}
In addition to the task description in Section~\ref{sec:tasks-intro}, there are some additional constraints, visualized in Figure~\ref{fig-appendix:pen-details}:
\begin{enumerate}
    \item The (yellow) neighbors of the green chain build a chain themselves, starting at the second word in the sequence (i.e. the first yellow word).
    \item The full sequence is at least double the length of the green matching sequence. With all other 'free' green words, we build another matching sequence starting at a random ``free'' green position (we call it the ``free'' matching sequence, see the blue arrows in Figure~\ref{fig-appendix:pen-details}). This is important because then the model cannot easily tell apart the green positions of the correct matching sequence with the green positions of the ``free'' matching sequence without either matching all green tokens from the start or looking at the answer words and starting matching from their left neighbors. 
    \item Each yellow word next to a true green matching word (except the terminating word in the yellow matching sequence) has a doppelganger (which we also call attention trap) next to a ``free'' green word. We add this to ``confuse'' the model: since the model learns to match words to words, it could be tempted to match yellow words, which would give it the wrong order of neighbors. To make them distinguishable in the answer, we define the doppelganger to match equally to the previous and next word but have a different ``data'' number in the middle.
\end{enumerate}

\begin{figure}[h!]
    \centering
    \begin{adjustbox}{width=0.9\textwidth}
        \begin{tikzpicture}[node distance=0.5cm]
            \begin{scope}

                \node (start) [] {Start};
                \node (ab) [greenbox, right=of start] {ab};
                \node (xy) [yellowbox, right=of ab] {xy};
                \node (nb3ac) [greenbox, right=of xy] {nb3ac};
                \node (xy2wv) [yellowbox, right=of nb3ac] {xy2wv};
                \node (fq0zz) [greenbox, right=of xy2wv] {fq0zz};
                \node (xy5wv) [yellowbox, right=of fq0zz] {xy5wv};
                \node (ab4fq) [greenbox, right=of xy5wv] {ab4fq};
                \node (wv7ql) [yellowbox, right=of ab4fq] {wv7ql};
                \node (rt8gt) [greenbox, right=of wv7ql] {rt8gt};
                \node (ry4up) [yellowbox, right=of rt8gt] {ry4up};
                \node (ac3rt) [greenbox, right=of ry4up] {ac3rt};
                \node (wv5ql) [yellowbox, right=of ac3rt] {wv5ql};

                \draw[arrow] (start) -- (ab);
                \draw[arrow] (ab.north) .. controls +(45:1cm) and +(135:1cm) .. (ab4fq.north);
                \draw[arrow] (ab4fq.south) .. controls +(-135:1cm) and +(-45:1cm) .. (fq0zz.south);
                \draw[doppelgangerarrow](xy.north) .. controls +(45:1cm) and +(135:1cm) .. (xy5wv.north);
                \draw[doppelgangerarrow](xy.north) .. controls +(45:1cm) and +(135:1cm) .. (xy2wv.north);
                \draw[doppelgangerarrow](xy5wv.north) .. controls +(45:1cm) and +(135:1cm) .. (wv7ql.north);
                \draw[doppelgangerarrow](xy5wv.north) .. controls +(45:1cm) and +(135:1cm) .. (wv5ql.north);

                \draw[remaining-green-arrow](nb3ac.south) .. controls +(-45:1cm) and +(-135:1cm) .. (ac3rt.south);
                \draw[remaining-green-arrow](ac3rt.north) .. controls +(135:1cm) and +(45:1cm) .. (rt8gt.north);

                \draw[line] (ab) -- (xy);
                \draw[line] (xy) -- (nb3ac);
                \draw[line] (nb3ac) -- (xy2wv);
                \draw[line] (xy2wv) -- (fq0zz);
                \draw[line] (fq0zz) -- (xy5wv);
                \draw[line] (xy5wv) -- (ab4fq);
                \draw[line] (ab4fq) -- (wv7ql);
                \draw[line] (wv7ql) -- (rt8gt);
                \draw[line] (rt8gt) -- (ry4up);
                \draw[line] (ry4up) -- (ac3rt);
                \draw[line] (ac3rt) -- (wv5ql);
            \end{scope}
        
        \end{tikzpicture}
    \end{adjustbox}
    \caption{Matchings of the yellow words: The (yellow) neighbors of matched green words build a matching sequence themselves (in their own order) and have exactly one matching outside the neighbors each (except the last yellow word in the order of the yellow sequence, which has no matching). Therefore, each yellow neighbor except the last matches to two words in the sequence. The blue arrows are over the remaining green positions (which are not part of any answer), which also build a matching sequence. Those additional constraints remove shortcut solutions.}
    \label{fig-appendix:pen-details}
\end{figure}
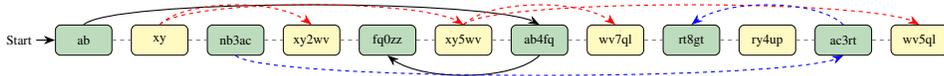

\begin{figure}[h!]
    \centering
    \begin{adjustbox}{width=0.9\textwidth}
        \begin{tikzpicture}[node distance=0.5cm]
            \begin{scope}
                \node (start) [] {Start};
                \node (ab) [greenbox, right=of start] {ab};
                \node (xy) [yellowbox, right=of ab] {xy};
                \node (nb3ac) [greenbox, right=of xy] {nb3ac};
                \node (xy2wv) [yellowbox, right=of nb3ac] {xy2wv};
                \node (fq0zz) [greenbox, right=of xy2wv] {fq0zz};
                \node (xy5wv) [yellowbox, right=of fq0zz] {xy5wv};
                \node (ab4fq) [greenbox, right=of xy5wv] {ab4fq};
                \node (wv7ql) [yellowbox-marked, right=of ab4fq] {wv7ql};
                \node (rt8gt) [greenbox, right=of wv7ql] {rt8gt};
                \node (ry4up) [yellowbox, right=of rt8gt] {ry4up};
                \node (ac3rt) [greenbox, right=of ry4up] {ac3rt};
                \node (wv5ql) [yellowbox, right=of ac3rt] {wv5ql};

                \draw[arrow] (start) -- (ab);
                \draw[arrow] (ab.north) .. controls +(45:1cm) and +(135:1cm) .. (ab4fq.north);
                \draw[dashedarrow] (ab.north) .. controls +(45:0.5cm) and +(135:0.5cm) .. (xy.north);
                \draw[dashedarrow] (ab4fq.north) .. controls +(45:0.5cm) and +(135:0.5cm) .. (wv7ql.north);
                \draw[arrow] (ab4fq.south) .. controls +(-135:1cm) and +(-45:1cm) .. (fq0zz.south);
                \draw[dashedarrow] (fq0zz.north) .. controls +(45:0.5cm) and +(135:0.5cm) .. (xy5wv.north);
        
                \draw[arrow-marked] (wv7ql.north) .. controls +(135:0.5cm) and +(45:0.5cm) .. (ab4fq.north);
                \draw[arrow-marked] (ab4fq.south) .. controls +(-135:1cm) and +(-45:1cm) .. (fq0zz.south);
                \draw[arrow-marked] (fq0zz.north) .. controls +(45:0.5cm) and +(135:0.5cm) .. (xy5wv.north);
            
                \draw[line] (ab) -- (xy);
                \draw[line] (xy) -- (nb3ac);
                \draw[line] (nb3ac) -- (xy2wv);
                \draw[line] (xy2wv) -- (fq0zz);
                \draw[line] (fq0zz) -- (xy5wv);
                \draw[line] (xy5wv) -- (ab4fq);
                \draw[line] (ab4fq) -- (wv7ql);
                \draw[line] (wv7ql) -- (rt8gt);
                \draw[line] (rt8gt) -- (ry4up);
                \draw[line] (ry4up) -- (ac3rt);
                \draw[line] (ac3rt) -- (wv5ql);
            \end{scope}

            \begin{scope}[yshift=-1.5cm]
                
                \node (pointer) [pointerbox] {PE's neighbor (PEN)};
                \node (answer) [right=of pointer] {Answer:};
                \node (xy) [yellowbox, right=of answer] {xy};
                \node (wv7ql) [yellowbox-marked, right=of xy] {wv7ql};
                \node (xy5wv) [yellowbox-pending, right=of wv7ql] {xy5wv};
            
                \draw[line] (xy) -- (wv7ql);
                \draw[line] (wv7ql) -- (xy5wv);
            \end{scope}
        
        \end{tikzpicture}
    \end{adjustbox}
    \caption{{How a language model with next-token-prediction can solve the PEN task}. Take the most recent word in the answer, find its left neighbor (``left''), find the match (``match''), and retrieve its right neighbor (``right''). This corresponds to the sub-tasks RCpy (e.g., in this task, the model has to find left neighbors), PE, and Cpy. The data efficiency in de-compositionality (see Tables~\ref{tab-appendix:decompositionality-results-pen}, \ref{tab-appendix:decompositionality-results-perm}) provide some evidence that this is the natural way a model solves the task when being close to $100\%$ performance and without being trained on the sub-tasks.}
    \label{fig-appendix:pen-left-match-right}
\end{figure}
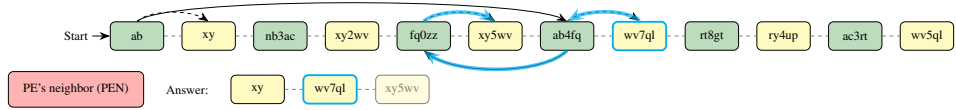

\section{Training experiments details}
\label{appendix:llama-training}

\subsection{LLaMA training hyperparameters}
\label{appendix:llama-training-details}
We use a batch size of $384$ samples, which corresponds to ca. $250\,K$ tokens per batch for the PEN task and $50\,K$ for the PERM task. Our $150\,M$ parameter model contains of 12 layers and a hidden size of $1024$. The learning rate is $10^{-4}$, as in the original paper~\citep{llama}.

\subsection{Details on the discrete solver of the PEN Task}
\label{appendix:discrete-solver}
Our discrete learning algorithm which can learn the PEN task from a single example is a 2-gram model over $11$ words resulting in 121 parameters with $8$ values each. This results in a search space of size $1.9*10^{109}$. Each value is an action function that manipulates the current word in the forward pass of the model  (at the beginning of the forward pass it is set to ``error'') and gives back one of the $11$ action outcomes. The last two action outcomes (initially (\textsc{bos},\textsc{bos})) result in the next 2-tuple determining the next discrete parameter. As this execution could loop or run over many parameters, we limit the depth to $12$. The actions and their outcomes are: \begin{itemize}
    \item \textsc{eos}: retrieves the eos constant (outcome: \textsc{eos})
    \item \textsc{last-output}: retrieves the last output word (or the start word if it is the beginning of the answer) (outcome: \textsc{last-output})
    \item \textsc{match}: matches, if there is no next match, this will return error (outcome: \textsc{match})
    \item \textsc{left}: get the left neighbor of the current word (outcome: \textsc{left})
    \item \textsc{right}: get the right neighbor of the current word (outcome: \textsc{right})
    \item \textsc{is-start}: is it the beginning of the answer (outcomes: \textsc{is-start-true}, \textsc{is-start-false})
    \item \textsc{is-error}: is the current state error (outcomes: \textsc{is-error-true},\textsc{is-error-false})
    \item \textsc{output}: output the current word (outcome: \textsc{output})
    \item For an entry point we define an additional action outcome: \textsc{bos}
\end{itemize}

The search algorithm uses a simple hill-climbing heuristic with restarts towards getting as many answer words in the sample right as possible. To search more effectively, we add a penalty to changing the same parameter over and over. The most challenging part of our novel pointer execution tasks is to learn the \texttt{left-match-right} structure, which is easily found by this algorithm.

\clearpage

\subsection{Accuracy tables}
The numbers are visualized in Section \ref{sec:llama}. Each row in the tables \ref{tab:pen-results}, \ref{tab:pen-single-results} \ref{tab:per-results}, \ref{tab:per-single-results}, \ref{tab:hss-results}, and \ref{tab:mul-results} corresponds to one run with the respective data mix.

\begin{table*}[h]
\setlength\tabcolsep{7pt} 
\centering
    \begin{tabular}{ *{11}{l} } 
        & \rot{Cpy} 
        & \rot{RCpy} 
        & \rot{PE}
        & \rot{PEV}
        & \rot{PEN}

        & \rot{Cpy}
        & \rot{RCpy} 
        & \rot{PE}
        & \rot{PEV}
        & \rot{\bf{PEN}} \\
        
        \midrule
        
        Model & \multicolumn{5}{c}{Number of samples (in thousand)} & \multicolumn{5}{c}{Accuracy at convergence} \\
        \cmidrule(r){1-1} \cmidrule(lr){2-6} \cmidrule(l){7-11}
        \multirow{4}{*}{\parbox{2cm}{LLaMA 150\,M}} & $20$ & $20$ & $60$ & $100$ & $100$ & 
        $1.00$ & $1.00$ & $0.99$ & $0.91$ & $\mathbf{0.00}$\\
        & $20$ & $20$ & $60$ & $100$ & $500$ & 
        $1.00$ & $1.00$ & $1.00$ & $0.99$ & $\mathbf{0.93}$\\
        & $20$ & $20$ & $60$ & $100$ & $750$ & 
        $1.00$&$1.00$&$1.00$&$0.98$&$\mathbf{0.95}$\\
        & $20$ & $20$ & $60$ & $100$ & $1000$ &  
        $1.00$&$1.00$&$0.99$&$0.99$&$\mathbf{0.97}$\\
        \bottomrule
    \end{tabular}
    \caption{Performance of LLaMA on in-distribution test samples of the Pointer Execution's neighbor (PEN) task and its sub-tasks. Training is until convergence. 
    We used the standard language modeling loss on every next token in the input, which performed best. The best validation performance is taken for each task separately. As shown, while models learn all sub-tasks Cpy, RCpy, and PE very well, on the PEN task (which is a composition of the sub-tasks Cpy, RCpy, and PE) they need much larger amounts of data than on any of the sub-tasks. This observation makes hypothesis $\mathcal{H}_4$ most plausible for PEN.}
    \label{tab:pen-results}
\end{table*}

\begin{table}[h]
\setlength\tabcolsep{9pt} 
\centering
    \begin{tabular}{ *{3}{l} } 
        & PEN
        & PEN \\
        \midrule
        Model & \multicolumn{1}{c}{\parbox{1.8cm}{\# samples\\(in thousand)}} & \multicolumn{1}{c}{\parbox{2cm}{Accuracy at convergence}} \\
        \cmidrule(r){1-1} \cmidrule(lr){2-2} \cmidrule(lr){3-3}
        \multirow{4}{*}{\parbox{2cm}{LLaMA 150\,M}}
        & $300$ & $\mathbf{0.00}$\\
        & $500$ & $\mathbf{0.74}$\\
        & $750$ & $\mathbf{0.98}$\\
        & $1000$ &$\mathbf{0.98}$\\
        \cmidrule(lr){1-1}\cmidrule(lr){2-2}\cmidrule(l){3-3}
        \multirow{2}{*}{\parbox{2cm}{LLaMA 28\,M\\auxiliary loss}}  & $300$ & $\mathbf{0.98}$\\
         &  &\\
        \bottomrule
    \end{tabular}
    \vspace{0.2cm}
    \caption{Results of LLaMA models trained only on the Pointer Execution's neighbor (PEN) task. Compared to Table~\ref{tab:pen-results}, we observe that the model does not seem to systematically profit from the compositionality given there.}
    \label{tab:pen-single-results}
\end{table}

\begin{table*}[h]
\setlength\tabcolsep{9pt} 
\centering
    \begin{tabular}{ *{9}{l} } 
        & \rot{PE} 
        & \rot{PEM} 
        & \rot{PER}
        & \rot{PERM}
        & \rot{PE}
        & \rot{PEM} 
        & \rot{PER}
        & \rot{PERM} \\
        \midrule
        Model & \multicolumn{4}{c}{Number of samples (in thousand)} & \multicolumn{4}{c}{Accuracy at convergence} \\
        \cmidrule(r){1-1} \cmidrule(lr){2-5} \cmidrule(lr){6-9}
        \multirow{5}{*}{\parbox{2cm}{LLaMA 150\,M}} & $40$ & $40$ & $150$ & $150$ &
        $1.00$ & $1.00$ & $0.99$ & $\mathbf{0.25}$\\
        & $40$ & $40$ & $150$ & $230$ &
        $1.00$ & $1.00$ & $0.99$ & $\mathbf{0.30}$\\
        & $40$ & $40$ & $150$ & $1000$ &
        $1.00$ & $1.00$ & $1.00$ & $\mathbf{0.74}$\\
        & $40$ & $40$ & $150$ & $1500$ &
        $1.00$ & $1.00$ & $1.00$ & $\mathbf{0.91}$\\
        & $40$ & $40$ & $150$ & $2000$ &
        $1.00$ & $1.00$ & $1.00$ & $\mathbf{0.91}$\\
        \bottomrule
    \end{tabular}
    \caption{Performance of LLaMA models on in-distribution test samples of the Pointer Execution Reverse Multicount (PERM) task and its sub-tasks. Similar to Table~\ref{tab:pen-results}, we observe that although all sub-tasks are learned perfectly by the models, their composition does not learn fully until including an almost 10-fold increase of the data needed for all sub-tasks together, making hypothesis $\mathcal{H}_4$ most plausible for this task too.}
    \label{tab:per-results}
\end{table*}

\newpage
\begin{table}[h]
\setlength\tabcolsep{9pt} 
\centering
    \begin{tabular}{ *{3}{l} } 
        & PERM
        & PERM \\
        \midrule
        Model & \multicolumn{1}{c}{\parbox{1.8cm}{\# samples\\(in thousand)}} & \multicolumn{1}{c}{\parbox{2cm}{Accuracy at convergence}} \\
        \cmidrule(r){1-1} \cmidrule(lr){2-2} \cmidrule(lr){3-3}
        \multirow{4}{*}{\parbox{2cm}{LLaMA 150\,M}} &  $500$ & $\mathbf{0.21}$\\
        &$1000$ & $\mathbf{0.68}$\\
        &$1500$ & $\mathbf{0.65}$\\
        &$2000$ & $\mathbf{0.98}$\\
        \bottomrule
    \end{tabular}
    \vspace{0.2cm}
    \caption{Results of LLaMA models trained only on the PERM task. Compared to Table~\ref{tab:per-results}, we observe that the model does not seem to systematically profit from the compositionality given there, similar to PEN.}
    \label{tab:per-single-results}
\end{table}

\begin{table*}[h]
\setlength\tabcolsep{9pt} 
\centering
    \begin{tabular}{ *{9}{l} } 
        & SSE
        & HSS
        
        & SSE
        & HSS \\
        
        \midrule
        
        Model & \multicolumn{2}{c}{samples (in thousand)} & \multicolumn{2}{c}{Accuracy at convergence} \\
        \cmidrule(r){1-1} \cmidrule(lr){2-3} \cmidrule(lr){4-5}
        \multirow{4}{*}{\parbox{2cm}{LLaMA 150\,M}} & $25$ & $25$ &
        $1.00$ & $\mathbf{0.68}$\\
        & $25$ & $50$ & $0.99$ & $\mathbf{0.85}$\\
        & $25$ & $100$ & $0.99$ & $\mathbf{0.95}$\\
        & $25$ & $200$ & $0.99$ & $\mathbf{0.99}$\\
        \bottomrule
    \end{tabular}
    \caption{Performance of LLaMA models on Highest Subsequence Sum and Highest Subsequence Execution. Hypothesis $\mathcal{H}_4$ is the most plausible.}
    \label{tab:hss-results}
\end{table*}

\begin{table*}[h]
\setlength\tabcolsep{9pt} 
\centering
    \begin{tabular}{ *{3}{l} } 
        & HSS
        
        & HSS \\
        
        \midrule
        
        Model & \multicolumn{1}{c}{\parbox{1.8cm}{\# samples\\(in thousand)}} & \multicolumn{1}{c}{\parbox{2cm}{Accuracy at convergence}} \\
        \cmidrule(r){1-1} \cmidrule(lr){2-2} \cmidrule(lr){3-3}
        \multirow{4}{*}{\parbox{2cm}{LLaMA 150\,M}} & $25$ & $\mathbf{0.67}$\\
        & $50$ & $\mathbf{0.84}$\\
        & $100$ & $\mathbf{0.95}$\\
        & $200$ & $\mathbf{0.99}$\\
        \bottomrule
    \end{tabular}
    \caption{Performance of LLaMA models only on the Highest Subsequence Sum task.}
    \label{tab:hss-single-results}
\end{table*}
\newpage

\begin{table*}[h]
\setlength\tabcolsep{9pt} 
\centering
    \begin{tabular}{ *{9}{l} } 
        & DMUL
        & ADD
        & MUL
        & DMUL
        & ADD
        & MUL \\
        \midrule
        Model & \multicolumn{3}{c}{samples (in thousand)} & \multicolumn{3}{c}{Accuracy at convergence} \\
        \cmidrule(r){1-1} \cmidrule(lr){2-4} \cmidrule(lr){5-7}
        \multirow{4}{*}{\parbox{2cm}{LLaMA 150\,M}} & $10$ & $190$ &
        $200$ & $1.00$ & $0.99$ & $\mathbf{0.65}$\\
        & $10$ & $190$ & $400$ & $1.00$ & $0.99$ & $\mathbf{0.68}$\\
        & $10$ & $190$ & $800$ & $1.00$ & $0.99$ & $\mathbf{0.74}$\\
        & $10$ & $190$ & $1600$ & $1.00$ & $0.99$ & $\mathbf{0.77}$\\
        \bottomrule
    \end{tabular}
    \caption{Performance of LLaMA models on Digit Multiplication (DMUL), addition (ADD), and multiplication (MUL). Hypothesis $\mathcal{H}_4$ is the most plausible.}
    \label{tab:mul-results}
\end{table*}

\begin{table*}[h]
\setlength\tabcolsep{9pt} 
\centering
    \begin{tabular}{ *{3}{l} } 
        & MUL
        
        & MUL \\
        
        \midrule
        
        Model & \multicolumn{1}{c}{\parbox{1.8cm}{\# samples\\(in thousand)}} & \multicolumn{1}{c}{\parbox{2cm}{Accuracy at convergence}} \\
        \cmidrule(r){1-1} \cmidrule(lr){2-2} \cmidrule(lr){3-3}
        \multirow{4}{*}{\parbox{2cm}{LLaMA 150\,M}} & $200$ & $\mathbf{0.56}$\\
        & $400$ & $\mathbf{0.63}$\\
        & $800$ & $\mathbf{0.78}$\\
        & $1600$ & $\mathbf{0.77}$\\
        \bottomrule
    \end{tabular}
    \caption{Performance of LLaMA models only on multiplication (MUL).}
    \label{tab:mul-single-results}
\end{table*}

\clearpage
\subsection{Performance of LLaMA on decompositionality}
\label{appendix:decompositionality-results}
In Tables \ref{tab-appendix:decompositionality-results-pen} and \ref{tab-appendix:decompositionality-results-perm} we include more details results about experiments on decompositionality with LLaMa.

\begin{table*}[!h]
\setlength\tabcolsep{9pt} 
\centering
    \begin{tabular}{ *{5}{l} } 
        & \multicolumn{2}{c}{PEN}
        & \multicolumn{2}{c}{\textbf{PEV}}
        \\
        \midrule
        Model & \multicolumn{1}{c}{\parbox{2cm}{Pretraining samples\\(in thousand)}}& \multicolumn{1}{c}{\parbox{2cm}{Pretraining Accuracy}} & \multicolumn{1}{c}{\parbox{2cm}{Finetuning samples\\(in thousand)}}& \multicolumn{1}{c}{\parbox{2cm}{Finetuning Accuracy}}\\
        \cmidrule(r){1-1} \cmidrule(lr){2-2} \cmidrule(lr){3-3} \cmidrule(lr){4-4} \cmidrule(l){5-5}
        \multirow{5}{*}{\parbox{2cm}{LLaMA 150\,M}} & $1000$ & $0.98$ & $2$ & $\mathbf{0.48}$\\
        & $1000$ & $0.98$ & $8$ & $\mathbf{0.92}$\\
        & $1000$ & $0.98$ & $12$ & $\mathbf{0.93}$\\
        & $1000$ & $0.98$ & $25$ & $\mathbf{0.97}$\\
        & $1000$ & $0.98$ & $50$ & $\mathbf{0.99}$\\
        \bottomrule
    \end{tabular}
    \caption{Performance of LLaMA models pre-trained on Pointer Execution's neighbor (PEN), and after finetuning on Pointer Execution Verbose (PEV) until convergence (i.e., PEN$\rightarrow$PEV). We observe that decompositionality significantly improves data efficiency compared to training a model on the finetuning task from scratch. At the same time, however, for learning the task to full performance, it still seems to need more than a low constant number of demonstrations ($\geq$\,50\,K), making the $\mathcal{H}_1$-equivalent (i.e. ``A Transformer language model learns a decomposition with a constant number of samples") rather implausible for decompositionality as well.}
    \label{tab-appendix:decompositionality-results-pen}
\end{table*}

\begin{table*}[!h]
\setlength\tabcolsep{9pt} 
\centering
    \begin{tabular}{ *{5}{l} } 
        & \multicolumn{2}{c}{PERM}
        & \multicolumn{2}{c}{\textbf{PE}}
        \\
        \midrule
        
        Model & \multicolumn{1}{c}{\parbox{2cm}{Pretraining samples\\(in thousand)}}& \multicolumn{1}{c}{\parbox{2cm}{Pretraining Accuracy}} & \multicolumn{1}{c}{\parbox{2cm}{Finetuning samples\\(in thousand)}}& \multicolumn{1}{c}{\parbox{2cm}{Finetuning Accuracy}}\\
        \cmidrule(r){1-1} \cmidrule(lr){2-2} \cmidrule(lr){3-3} \cmidrule(lr){4-4} \cmidrule(l){5-5}
        \multirow{5}{*}{\parbox{2cm}{LLaMA 150\,M}} & $2000$ & $0.98$ & $1$ & $\mathbf{0.80}$\\
        & $2000$ & $0.98$ & $3$ & $\mathbf{0.96}$\\
        & $2000$ & $0.98$ & $5$ & $\mathbf{0.97}$\\
        & $2000$ & $0.98$ & $10$ & $\mathbf{0.98}$\\
        & $2000$ & $0.98$ & $20$ & $\mathbf{0.99}$\\
        \bottomrule
    \end{tabular}
    \caption{Performance of LLaMA models pre-trained on Pointer Execution Reverse Multicount (PERM), and after finetuning on Pointer Execution (PE) until convergence (i.e., PERM$\rightarrow$PE). As shown similarly in Table~\ref{tab-appendix:decompositionality-results-pen}, decompositionality significantly improves data efficiency compared to training a model on the finetuning task from scratch. For learning the task to full performance, however, it still seems to need more than a low constant number of demonstrations (5--20\,K), making the $\mathcal{H}_1$-equivalent ``A Transformer language model learns a decomposition with a constant number of samples" rather implausible for decompositionality.}
    \label{tab-appendix:decompositionality-results-perm}
\end{table*}

\clearpage
\subsection{Model ablation: Performance of UT-style LLaMA on PERM}
\label{appendix:ut-llama}
We ablate a variation of our model architecture based on the Universal Transformer (UT) \citet{devil-is-in-the-detail} and the more recent Hyper-UT \cite{hyper-universal-transformer-apple}, for both of which there is evidence of better compositional generalization than the original Transformer.
For instance, a UT-style transformer displayed good performance on compositional tasks like SCAN \citet{devil-is-in-the-detail}. The model based on UT reached a similar performance as the original LLaMA model, i.e. a behavior that could be classified in $\mathcal{H}_4$. 
We then also experimented with a Hyper-UT-style LLaMA, based on the aforementioned Hyper-UT work \cite{hyper-universal-transformer-apple}, in which each layer selects its weights from a common weight embedding pool to realize a parameter-efficient model. This model consistently achieved a lower accuracy compared to LLaMA on a large set of small algorithmic tasks. 
Thus, we chose not to systematically explore its performance in a more challenging compositional algorithmic setting.
The results are shown in table \ref{tab:perm-ut-llama}.
The hyperparameters used to train the model are the same as the ones used for the standard models (see Appendix \ref{appendix:llama-training-details}).

\begin{table}[h!]
    \centering
    \begin{tabular}{ *{9}{l} } 
        & \rot{PE} 
        & \rot{PE Multi} 
        & \rot{PER}
        & \rot{PERM}

        & \rot{PE}
        & \rot{PE Multi} 
        & \rot{PER}
        & \rot{PERM} \\
        
        \midrule
        
        Model & \multicolumn{4}{c}{Number of samples (in thousand)} & \multicolumn{4}{c}{Accuracy at convergence} \\
        \cmidrule(r){1-1} \cmidrule(lr){2-5} \cmidrule(lr){6-9}
        \multirow{5}{*}{\parbox{2cm}{LLaMA 150\,M}} & $40$ & $40$ & $150$ & $150$ &
        $1.00$ & $0.99$ & $0.93$ & $\mathbf{0.26}$\\
        & $40$ & $40$ & $150$ & $230$ &
        $1.00$ & $0.99$ & $0.98$ & $\mathbf{0.29}$\\
        & $40$ & $40$ & $150$ & $1500$ &
        $1.00$ & $1.00$ & $1.00$ & $\mathbf{0.98}$\\
        & $40$ & $40$ & $150$ & $2000$ &
        $1.00$ & $1.00$ & $1.00$ & $\mathbf{0.99}$\\
        \bottomrule
    \end{tabular}
    \vspace{0.2cm}
    \caption{Performance of LLaMA with weight sharing across all transformer layers on the PERM task and its sub-tasks. 
    We see similar results compared to Table \ref{tab:per-results}, i.e., hypothesis $\mathcal{H}_4$ is most plausible for the PERM task on this model.}
    \label{tab:perm-ut-llama}
\end{table}

\clearpage
\section{Prompting experiments details}
\label{appendix:prompts}

\subsection{Input sequences tokenization}
Unlike open-source models, such as the LLaMa, closed-source LLMs do not allow to customize the tokenization process.
Hence, tokenization could potentially represent a major confounder for our prompting experiments.
If fact, an erroneous mapping between words and tokens would inevitably undermine the ability of the model to correctly perform matching operations in the input sequence, making it impossible to successfully reconstruct the correct path in pointer execution tasks.
In this Appendix section, we provide more details on a set of control experiments that were designed to ensure that this phenomenon did not play a role in our empirical evaluation.

As a preliminary step, we ablated in our experiments different strategies to structure our input components. However, we found that alternative designs for the tasks structure (e.g. \texttt{LL-N-LL}, where \texttt{L} represents a lower-case Latin letter and \texttt{N} a one-digit number) were always detrimental for the task accuracy compared to the current sequence design (i.e. \texttt{LLNLL}).
To ensure that the input sequences of our pointer execution tasks were effectively tokenized as expected, we run some quantitative experiments with GPT-4's open-source tokenizer (\texttt{tiktoken}). 
Unfortunately, we could not find an open-source tokenizer for Gemini-Pro; the available token counter only provides limited information about the actual tokenization.

Analyzing the GPT-4 tokenizer results, we found that the digit delimiter ensures safe splitting within a word in $100\%$ of the samples, i.e. \texttt{LLNLL} always yields \texttt{[LL, N, LL]}.
Additionally, we observed that $13.2\%$ of the 2-grams were split into two different tokens (e.g., \texttt{bq} is tokenized to \texttt{b} and \texttt{q}), which may affect the attention mechanism. As an additional experiment, we removed these 2-grams from the dataset. However, removing the ``splittable'' 2-grams did not result in an improvement of the performance (the accuracy decreased from 0.19 to 0.05 on PEN) when using the best-performing prompting technique with GPT-4 (Code Interpreter).

\subsection{Natural-PEN}
We additionally designed a natural variation of the PEN task, dubbed Natural-PEN, where we replaced the synthetic 2-gram sequences used for the matching in the original tasks with 3-gram, in-distribution, natural English words. 
Natural-PEN inherits a similar structure of the components from the original PEN task, \texttt{WNW}, where \texttt{W} in this case are 3-characters English words. 
In practice, we filtered all valid 3-gram words from Scrabble\footnote{\url{https://scrabble.collinsdictionary.com/word-lists/three-letter-words-in-scrabble}} that were translated to a single token when using GPT-4's tokenizer. 
This gave us a set of 707 words (out of the original 1338 words). Similar to the experiment in the previous paragraph, we find that the GPT-4's performance using the best prompting techniques does not improve on this in-distribution, natural English words, yielding even a lower accuracy on Natural-PEN (Table \ref{tab:naturalpen}).

\begin{table}[h!]
\setlength\tabcolsep{9pt} 
\centering
    \begin{tabular}{ *{4}{l} } 
        \toprule
        Setup & Termination acc. & Match acc. & Task acc. \\
        \midrule
        Few-shot CoT& 0.50 & 0.27 & 0.00 \\
        Few-shot CoT, no traps& 0.20 & 0.29 & 0.00 \\
        Code Interpreter& 0.05& 0.10 & 0.05 \\
        \bottomrule
    \end{tabular}
    \vspace{0.2cm}
    \caption{Result of GPT-4 on the Natural-PEN task.}
    \label{tab:naturalpen}
\end{table}

\subsection{Many-shot prompting}
We additionally ablated the number of shot provided to the LLMs to perform in-context learning.
We increased the number of shots from 8 examples up to 32 examples (64 did not fit into the 8k context window of the old GPT-4 version we are benchmarking). 
However, we did not observe any performance improvement, as reported in Table \ref{tab:manyshots}.

\begin{table}[ht!]
\setlength\tabcolsep{9pt} 
\centering
    \begin{tabular}{ *{7}{l} }
        &  \rot{Termination acc.} & \rot{Match acc.} & \rot{Task acc.} & \rot{Termination acc.} & \rot{Match acc.} & \rot{Task acc.} \\
        \midrule
        Model &  \multicolumn{3}{c}{PEN} &  \multicolumn{3}{c}{PERM} \\
        \cmidrule(r){1-1} \cmidrule(r){2-4} \cmidrule(r){5-7} 
        GPT-4$_\text{8 shots}$ & 0.12 & 0.04 & 0.00 & 0.12 & 0.37 & 0.06 \\
        GPT-4$_\text{32 shots}$ & 0.16 & 0.06 & 0.00 & 0.36 & 0.59 & 0.00 \\
        \midrule
        Gemini$_\text{8 shots}$ & 0.05 & 0.03 & 0.00 & 0.08 & 0.04 & 0.04\\
        Gemini$_\text{32 shots}$ & 0.15 & 0.20 & 0.00 & 0.32 & 0.05 & 0.00\\
        \bottomrule
    \end{tabular}
    \vspace{0.2cm}
    \caption{Ablation on the number of shots used to do in-context learning in PEN and PERM, for both GPT-4 and Gemini. In particular, the number of shots is increase from 8 examples to 32 examples. Despite increasing the partial metrics, providing more shots does not improve the overall task accuracy in both PEN (same, $0\%$) and PERM (the accuracy goes to $0\%$ for both models).}
    \label{tab:manyshots}
\end{table}

\subsection{Prompting OpenAI o1-preview}
\label{o1-prompting}
In this section, we include further results with the newly-released model from OpenAI, o1-preview\footnote{\url{https://openai.com/index/introducing-openai-o1-preview}}, specifically trained to tackle complex tasks in science, coding, and math.
The results are reported in Table \ref{tab:o1-performance}.

\begin{table*}[h!]
\setlength\tabcolsep{4.5pt} 
\centering
    \begin{tabular}{*{9}{l}}
        & \rot{Few-shot} 
        & \rot{Description} 
        & \rot{CoT}
        & \rot{Sub-task CoT}
        & \rot{Avg. solution time (s)}
        & \rot{Termination acc.}
        & \rot{Match acc.}
        & \rot{\bf{Task Acc.}}\\
        \toprule
        \textbf{Task} & \multicolumn{4}{c}{\textbf{Prompt Setting}} & \multicolumn{4}{c}{\textbf{o1-preview}} \\
        \cmidrule(r){1-1} \cmidrule(lr){2-5} \cmidrule(lr){6-9}
        \multirow{3}{*}{PEN}    & \Included & &     &  & 
        $83$ & $0.14$ & $0.05$ & $0.00$  \\
        & \Included & \Included &   \Included  &  &     
        $77$ & $0.86$ & $0.86$ & $0.70$\\
        & \Included & \Included &  \Included   &   \Included  & 
        $41$ & $0.98$ & $1.00$ & $\mathbf{0.85}$\\
        \midrule
        \multirow{3}{*}{PERM}    & \Included &  &  &    &
        $86$ & $0.24$ & $0.22$ & $0.10$\\
        & \Included & \Included &   \Included  &   &  
        $64$ & $0.82$ & $0.94$  & $0.71$\\
        & \Included & \Included &  \Included   & \Included  & 
        $47$ & $0.94$ & $0.90$ & $\mathbf{0.90}$\\
        \bottomrule
    \end{tabular}
    \caption{\small Results of o1 with various prompt settings. While o1-preview cannot infer the task from examples only, it achieves much higher accuracy when given a description of how to perform the task. Our hand-crafted step-by-step solutions achieve higher accuracy than the solutions that o1 finds when asked for Chain of Thought (CoT) only.}
    \label{tab:o1-performance}
\end{table*}

\clearpage
\subsection{Prompt examples}
In this section, we include some exemplary prompts that were used in the PEN and PERM experiments with pre-trained LLMs.
We start with the vanilla few-shot (8) prompt, shown in Table \ref{fig:FS_prompt}.

\definecolor{darkGreen}{rgb}{0.2,0.5,0.2}
\begingroup
\begin{table}[h!]
    \centering
    \tiny
    \begin{tabular}{p{\linewidth}}
        \toprule
        \vspace{-2mm}
        \textbf{\textsc{Example}}: xv ke vu7bh sb0fz xy5ih eo7sf ay7of xd3nj zs7bt eo1sf jn6yc xd5nj od3nk br2ny yc2pr ls5sg nv1zs sb5fz uy7vu sf1zv bh6ia sg5dg ux6oc zv4xd ya1yk br5ny wc4xy ke5fm jw1dx ny7sb wq2mm fz6eo nk2nv sf5zv pr3ya fz4eo yk0dk fm4br oc4wc nj0ls ih1uy di7fw mm2pq zv7xd of7wq nj4ls xv7gn ls6sg dx0ux vz7uc ah7od sg4dg sn2jw ae5ce ia7jn zw4ed bt5ay fm6br pq6kw ny3sb gn4ah ke0fm\\
        \vspace{-1mm}
        \textbf{\textsc{Answer}}: ke ls6sg ke0fm sg4dg br2ny sf5zv sb5fz eo1sf fm6br xd3nj nj4ls fz6eo zv7xd ny3sb\\
        \vspace{1mm}
        \textbf{\textsc{Example}}: wk qp ld2ki lt1ui te3ob et4mm ss1hi lt0ui ki3ay ad2dh zk1cc cl6lt zz3zs cl1lt hi3ct ap4ys sf1sz bc4yh ct4gf gn5ap mt5zd mm5gn sr1ne yh3et up0vl lu2cl no1ps ys3lu vm6us et6mm us4ha qp1ad zd6sf dh7bc ps3ld bc6yh sv5ch ad4dh qe4hd mm6gn ha0qe dh3bc vl3mt ys4lu ob2zk qp0ad wk7zz lu0cl ch7up ac2hh ne1te qo4qz cc2rp gx2yq xv3vm gn4ap rp2ss sd5gc zs6no yh4et tl7sv xs5jm sz0sr ww1rb ay6xv ap2ys\\
        \vspace{-1mm}
        \textbf{\textsc{Answer}}: qp lu0cl cl1lt yh4et ys3lu bc6yh lt1ui ad2dh ap2ys gn4ap et6mm qp1ad dh3bc mm6gn\\
        \vspace{1mm}
        \textbf{\textsc{Example}}: fy kn fz2jy st6zr wk4hz zo3st gz0nt ri2ru dt0rh ri4ru kq0lt st0zr pp3da ru2io lw0vk zo1st sx5xe ej2sj er1fz kn2ej fd2pp ej6sj mc7gs io0zo si4lp ns1jr xe6nb zn5ns vk4fd sj0im lt7lw kn4ej nb0lg sj2im lg0qi ru6io gs7yf zr3zn jy1wk im1ri fy5er ns7jr qi5dt io1zo yf7gz zn4ns nt6do im6ri lp6mc mk2ym do2kq zh3vh ox5si fk0sk hz3sx zr2zn\\
        \vspace{-1mm}
        \textbf{\textsc{Answer}}: kn ns7jr kn2ej st6zr im1ri zo3st zr2zn ej2sj zn5ns sj2im ru6io io1zo ri4ru\\
        \vspace{1mm}
        \textbf{\textsc{Example}}: xr ql hq6za co0gt yv6md sk5zi xr3kq zi0co zl4om mc7ha gy3ej xj0qp ud2hq ha0km yu7xa gt6nb rb4mi xj4qp li3ib km1cf lj2ek qp6sk ib4dv ha5km fc5gy gx1xd om7hb tx1mc al0lw gt4nb lw5fc xd6nj ek1tr nb5tx rz3yv ql1xj xu4it km0cf mi5vo gx5xd xa1in xd7nj vo6xz tx6mc dv4al co6gt kq2li sk6zi tr1zl cf7gx md0je mc4ha hh7rz cf2gx gp7yu zi2co je0gp nb0tx in2rb qp0sk it7hh lq4cl ej6lj ql0xj xz0ud sa2sc\\
        \vspace{-1mm}
        \textbf{\textsc{Answer}}: ql zi0co sk6zi km1cf ha5km co6gt gt4nb xd6nj gx1xd xj0qp ql0xj qp6sk nb5tx cf7gx mc7ha tx1mc\\
        \vspace{1mm}
        \textbf{\textsc{Example}}: bn xy fv6wc tn0jp sk2qb vy7mw od1hf ju6vs xf1kc xy6zn ue0dr zn7vy ya5fv ls4tn qb2km tn6jp yw6yn zn6vy pc5yw jp4vc af6rc mw7ls hf4om mw5ls kc4wt vv7ju hk3hn vy4mw hn3af vc3vv wt6pc vc6vv yn1sm ls2tn pe4pq jk3gb km7ep zv5gv om4vk mb7mz dr7hk jp0vc ep2pe is5ie sm2sk th7ba vz3ya ju1vs wc2ue xy2zn at5xd wq2wv bn5vz vv4ju vk3at gr0xr xd2xf up2vq\\
        \vspace{-1mm}
        \textbf{\textsc{Answer}}: xy vv4ju ju1vs ls4tn tn0jp xy2zn zn7vy jp0vc vy4mw vc3vv mw7ls\\
        \vspace{1mm}
        \textbf{\textsc{Example}}: kt nn fz2kv rv6wb kh6vu js4wk et7lx zk0sb ie3zb wb6mx vu5et nn7zx zb6zn pp3js kv4hk gt6rv bf4vr mx3zk sv5ok zk6sb gk1rx pp1js vg5kh wk5gt vr3fz zx2pp zu4ac nn3zx qe6rl gt4rv ac2tp rv5wb kt6zu sb4ic hk2xe sb1ic rx1xm mx6zk zn6vg az3vp xm4qe zx0pp dm1lt hl4li ok0ja wk3gt bc6bf tl2eb jn7dm ve3jl lt1bc tg5gp tp6sv wb4mx xe6ie ot2pj ja6gk js7wk\\
        \vspace{-1mm}
        \textbf{\textsc{Answer}}: nn sb4ic nn3zx rv5wb wb4mx zk6sb wk3gt js7wk pp1js mx6zk zx0pp gt4rv\\
        \vspace{1mm}
        \textbf{\textsc{Example}}: pq gg vs1dy cr4yf xt5kz gg1gn tg0fn gn4cr pq4jj tc1ts qh0ek yf2tc sk2al tc4ts jl3xi cr3yf ub1zh ts7lh tp2vb ex4px kz2kn gn3cr ez7up gg4gn dn2ez jm4ex ek1sz lh0eu kn5cg jm6ex jj5fp yf3tc il5wn eu3jm bd5fc ex7px wn5yw ir0xf fc3vs lh1eu fp0bd eu2jm fn6jl pa4zf zh4ku ep6jf dy4xt ts6lh vc5sk dv5nw sz2ub di7yp vb2dn tk0as ku0km mw7mx km0tp hi3pl bg4il nl1hv xi0bg kk2ao al1tg dx2ca up4vc ru0dc\\
        \vspace{-1mm}
        \textbf{\textsc{Answer}}: gg tc1ts yf3tc eu2jm ex7px lh1eu cr4yf ts6lh gg1gn gn3cr jm6ex\\
        \vspace{1mm}
        \textbf{\textsc{Example}}: if xe gw0is zl5jz if2mv wa5xs vs1st nz2wc ox0qs ko5xq xj4ue ko3xq hx7ym ca6zl is0bf xq6ed gs7ga xs0yg mv5ox ca5zl bg0bk xq1ed ry2gw rt7ca bf7lx xe1wa ym3vs xs7yg qs1ee wc6cb pe6ag ed3nz xn4dt zl7jz on6oc xe7wa ps0xj yg7ko st5ps cb2rt ag3xn wc2cb bk7hx rt2ca ee1ry cb4rt ra4on wa2xs lx7az ed0nz dt1ra be3hb rc4gs nz5wc az7rc yg0ko ue2pe or4fg\\
        \vspace{-1mm}
        \textbf{\textsc{Answer}}: xe wa5xs ca5zl ko5xq wc6cb cb4rt rt7ca zl5jz xq6ed xe1wa ed0nz yg0ko nz5wc xs0yg\\
        \vspace{1mm}
        \textbf{\textsc{Your question}}: vq sk zg5gr ne7vc co7os hc6cv pq3ss dp4je vq3sx pu0dp wn2co vc6hc dw5vy ne6vc vy0oe gp6zm ka1rw vc1hc tz5mx je3hy ab2pq pu7dp oe3ka on5is lf1ju hy6gp gr5jp gp5zm os2dq sk6ne dq7cn on0is ju0qq cv2on mx1mb is0pu sx1wn cv1on rw4sz je6hy jp1tz dp0je io1lf is4pu ss5dw sk2ne sz4ij hc1cv qq6ab uk0af cn4zg hy2gp\\
        \vspace{-1mm}
        Clearly mark your answer by writing 'Answer: <your answer>' as last line.\\
        \bottomrule
    \end{tabular}
    \caption{GPT4 and Gemini prompt for Pointer Execution's Neighbor (PEN) task with only few-shot examples of the task. The tasks are encoded so that every word start and word end of a word in the input are encoded as separate tokens, so a model can pattern-match the respective token to do the matching operation.}
    \label{fig:FS_prompt}
\end{table}
\endgroup

\clearpage

Then, we introduce a natural language description of the task, prepended before the examples of the task, as shown in Table \ref{fig:FS_desc_prompt}.
\begingroup
\begin{table}[h!]
    \centering
    \scriptsize
    \begin{tabular}{p{\linewidth}}
        \toprule
        \vspace{-2mm}
       {\color{darkGreen}I give you a sequence of words. Each word has four characters plus a middle, words are separated by spaces. Start with the leftmost word. Output its neighbor. 
        Then, match the last two characters of the current word (i.e. not the neighbor) to the word starting with those two characters. Again, output the neighbor. Do this until your current word (not the neighbor) has no match anymore.} \\
        \vspace{1mm}
        \textbf{\textsc{Example}}: xv ke vu7bh sb0fz xy5ih eo7sf ay7of xd3nj zs7bt eo1sf jn6yc xd5nj od3nk br2ny yc2pr ls5sg nv1zs sb5fz uy7vu sf1zv bh6ia sg5dg ux6oc zv4xd ya1yk br5ny wc4xy ke5fm jw1dx ny7sb wq2mm fz6eo nk2nv sf5zv pr3ya fz4eo yk0dk fm4br oc4wc nj0ls ih1uy di7fw mm2pq zv7xd of7wq nj4ls xv7gn ls6sg dx0ux vz7uc ah7od sg4dg sn2jw ae5ce ia7jn zw4ed bt5ay fm6br pq6kw ny3sb gn4ah ke0fm
        Answer: ke ls6sg ke0fm sg4dg br2ny sf5zv sb5fz eo1sf fm6br xd3nj nj4ls fz6eo zv7xd ny3sb \\
        \vspace{1mm}
        \textsc{<7 MORE EXAMPLES>}\\
        \vspace{1mm}
        \textbf{\textsc{Your question}}: vq sk zg5gr ne7vc co7os hc6cv pq3ss dp4je vq3sx pu0dp wn2co vc6hc dw5vy ne6vc vy0oe gp6zm ka1rw vc1hc tz5mx je3hy ab2pq pu7dp oe3ka on5is lf1ju hy6gp gr5jp gp5zm os2dq sk6ne dq7cn on0is ju0qq cv2on mx1mb is0pu sx1wn cv1on rw4sz je6hy jp1tz dp0je io1lf is4pu ss5dw sk2ne sz4ij hc1cv qq6ab uk0af cn4zg hy2gp\\
        \vspace{-1mm}
        Clearly mark your answer by writing 'Answer: <your answer>' as last line.\\
        \bottomrule
    \end{tabular}
    \caption{GPT4 and Gemini prompt for Pointer Execution's Neighbor (PEN) task with few-shot examples of the task and a task description.}
    \label{fig:FS_desc_prompt} 
\end{table}
\endgroup

On top of this, we also add a request to reason step-by-step (CoT), as shown in Table \ref{fig:CoT_prompt}.
\begingroup
\begin{table}[h!]
    \centering
    \scriptsize
    \begin{tabular}{p{\linewidth}}
        \toprule
        \vspace{-2mm}
        I give you a sequence of words. Each word has four characters plus a middle, words are separated by spaces. Start with the leftmost word. Output its neighbor. 
        Then, match the last two characters of the current word (i.e. not the neighbor) to the word starting with those two characters. Again, output the neighbor. Do this until your current word (not the neighbor) has no match anymore. \\
        \vspace{1mm}
        \textbf{\textsc{Example}}: xv ke vu7bh sb0fz xy5ih eo7sf ay7of xd3nj zs7bt eo1sf jn6yc xd5nj od3nk br2ny yc2pr ls5sg nv1zs sb5fz uy7vu sf1zv bh6ia sg5dg ux6oc zv4xd ya1yk br5ny wc4xy ke5fm jw1dx ny7sb wq2mm fz6eo nk2nv sf5zv pr3ya fz4eo yk0dk fm4br oc4wc nj0ls ih1uy di7fw mm2pq zv7xd of7wq nj4ls xv7gn ls6sg dx0ux vz7uc ah7od sg4dg sn2jw ae5ce ia7jn zw4ed bt5ay fm6br pq6kw ny3sb gn4ah ke0fm
        Answer: ke ls6sg ke0fm sg4dg br2ny sf5zv sb5fz eo1sf fm6br xd3nj nj4ls fz6eo zv7xd ny3sb \\
        \vspace{1mm}
        \textsc{<7 MORE EXAMPLES>}\\
        \vspace{1mm}
        \textbf{\textsc{Your question}}: vq sk zg5gr ne7vc co7os hc6cv pq3ss dp4je vq3sx pu0dp wn2co vc6hc dw5vy ne6vc vy0oe gp6zm ka1rw vc1hc tz5mx je3hy ab2pq pu7dp oe3ka on5is lf1ju hy6gp gr5jp gp5zm os2dq sk6ne dq7cn on0is ju0qq cv2on mx1mb is0pu sx1wn cv1on rw4sz je6hy jp1tz dp0je io1lf is4pu ss5dw sk2ne sz4ij hc1cv qq6ab uk0af cn4zg hy2gp\\
        \vspace{-1mm}
        {\color{darkGreen}Reason step by step.}
        \vspace{-1mm}
         Clearly mark your answer by writing 'Answer: <your answer>' as last line.\\
        \bottomrule
    \end{tabular}
    \caption{GPT4 and Gemini prompt for Pointer Execution's neighbor task with a task description, few-shot examples of the task, and a request to reason step-by-step.}
    \label{fig:CoT_prompt}
\end{table}
\endgroup

We ablate the latter with a variation from \citet{new-with-context-prompting}.
\begingroup
\begin{table}[h!]
    \centering
    \scriptsize
    \begin{tabular}{p{\linewidth}}
        \toprule
        \vspace{-2mm}
        Your task is to tackle algorithmic problems. When presented with an algorithmic problem, recall relevant problems as examples. Afterward, proceed to solve the initial problem. \\
        \vspace{-1mm}
        \textbf{\textsc{\# Problem}}: I give you a sequence of words. Each word has four characters plus a middle, words are separated by spaces. Start with the leftmost word. Output its neighbor. 
        Then, match the last two characters of the current word (i.e. not the neighbor) to the word starting with those two characters. Again, output the neighbor. Do this until your current word (not the neighbor) has no match anymore. \\
        \vspace{-1mm}
        \textbf{\textsc{Sequence}}: oj ce rm3eo in7ds uy0az ds4ki hk5wo ee4bs hh5ve sl4in yl3om ce4qu wv4gq ds3ki mj7wv sl0in gq4kj zk2sl lu1nt in3ds nt0ye ki7ee oi7mm ki4ee sj7rg bs3fw mp6je ce2qu wo0mj qu6ei zy7ou ei2zk ve4mp qu7ei je0uy ee3bs eo5ow gv5sv ou1yl le3mz om0oa tq2zt rg5oi op3jm tn6hk pv1bp mm2tn zw0bd xq3lu zk6sl az6xq bs7fw oa0sj xv1js ow5zy yy2qb oj3hh ei4zk \\
        \vspace{-1mm}
        \textbf{\textsc{\# Instructions}}: \\
        \#\# Relevant Problems: \\
        Recall three examples of algorithmic problems that are relevant to the initial problem. Your problems should be distinct from each other and from the initial problem (e.g., involving different numbers and names and instructions). For each problem:\\
         - After "Q: ", describe the problem\\
         - After "A: ", explain the solution and enclose the ultimate answer in \verb|\boxed{}|. \\
        \vspace{-1mm}
        \#\# Solve the Initial Problem: \\
        Q: Copy and paste the initial problem here. \\
        A: Explain the solution and enclose the ultimate answer in \verb|\boxed{}| here.\\
        \bottomrule
    \end{tabular}
    \caption{GPT4 and Gemini prompt for PEN with the prompting technique introduced by \citet{new-with-context-prompting}.}
    \label{fig:Analogical_CoT_prompt}
\end{table}
\endgroup
\vfill
\clearpage

We further improved the level of help in the prompt by providing few-shot chain-of-thought examples and sub-task few-shot chain-of-thought examples in Table \ref{fig:FS_CoT_prompt} and \ref{fig:ST_CoT_prompt}, respectively.
\begingroup
\begin{table}[h!]
    \centering
    \tiny
    \begin{tabular}{p{\linewidth}}
        \toprule
        \vspace{-2mm}
        I give you a sequence of words. Each word has four characters plus a middle, words are separated by spaces. Start with the leftmost word. Output its neighbor. 
        Then, match the last two characters of the current word (i.e. not the neighbor) to the word starting with those two characters. Again, output the neighbor. Do this until your current word (not the neighbor) has no match anymore. \\
        \vspace{1mm}
        \textbf{\textsc{Example}}: kt gu vv1vk xd4sp ra3kn fc6cd aq2oa co4wn wb7rr fl4tg zc1rk la7kb di1hn by3kj rk1fy zu6qf un6nb fe6fw fb4wb gu3fl ug5fb wn2kz hn5vv qy2cp nb7di ku6ah zh7qm ob1co rr6ti hi4ob fy1ra lc2sh oa6bw nk3pr ti2aq kz1nk kt2ug tg1hi op0zc ln0jz vk4op qk3ev bw4zh pr4ju \\
        \vspace{1mm}
        The leftmost word is "kt". Its right neighbor is "gu", so the first output word is "gu". \\
        Now, we need to find a word that starts with "kt". The word is "kt2ug". Its right neighbor is "tg1hi", so the next output word is "tg1hi".\\
        Now, we need to find a word that starts with "ug". The word is "ug5fb". Its right neighbor is "wn2kz", so the next output word is "wn2kz".\\
        Now, we need to find a word that starts with "fb". The word is "fb4wb". Its right neighbor is "gu3fl", so the next output word is "gu3fl".\\
        Now, we need to find a word that starts with "wb". The word is "wb7rr". Its right neighbor is "fl4tg", so the next output word is "fl4tg".\\
        Now, we need to find a word that starts with "rr". The word is "rr6ti". Its right neighbor is "hi4ob", so the next output word is "hi4ob".\\
        Now, we need to find a word that starts with "ti". The word is "ti2aq". Its right neighbor is "kz1nk", so the next output word is "kz1nk".\\
        Now, we need to find a word that starts with "aq". The word is "aq2oa". Its right neighbor is "co4wn", so the next output word is "co4wn".\\
        Now, we need to find a word that starts with "oa". The word is "oa6bw". Its right neighbor is "nk3pr", so the next output word is "nk3pr".\\
        Now, we need to find a word that starts with "bw". The word is "bw4zh". Its right neighbor is "pr4ju", so the next output word is "pr4ju".\\
        Now, we need to find a word that starts with "zh". The word is "zh7qm". Its right neighbor is "ob1co", so the next output word is "ob1co".\\
        There is no word that starts with "qm", so we are done with the matching.\\
        Therefore the answer is: "gu tg1hi wn2kz gu3fl fl4tg hi4ob kz1nk co4wn nk3pr pr4ju ob1co"\\ 
        \vspace{1mm}
        <7 MORE EXAMPLES>
        \vspace{1mm}
        \textbf{\textsc{Your question}}: jm cq nu7nr th6cc du5ij ki7am nr0ad cq6th md4lk am4rw iu0lq bj7mu rp3ja rk1cs od3dm se3vv iw1hz si4rf dd5du mu4of wd0rt en2yt vw0al nb0ir qp6do ni0ff rt0ik sw2ji do5gn sh1bk kf0iu mm5mi ij2md cc4pq ad6dd rw0mm gn0rp ox5uc al2qp bo1fq hz3wd la1rb dm5vw de0ko jm7kf mi5bj wr1iw lu7lj lq7nu pq3ki ik6od da5gp \\
        Reason step by step. Clearly mark your answer by writing 'Answer: <your answer>' as last line. \\
        \bottomrule
    \end{tabular}
    \caption{GPT4 and Gemini prompt for the Pointer Execution's neighbor task with few-shot chain of thought (Few-shot CoT) examples and a description.}
    \label{fig:FS_CoT_prompt}
\end{table}
\endgroup

\begingroup
\begin{table}[h!]
    \centering
    \tiny
    \begin{tabular}{p{\linewidth}}
        \toprule
        \vspace{-2mm}
        I give you a sequence of words. Each word has four characters plus a middle, words are separated by spaces. Start with the leftmost word. Output its neighbor. 
        Then, match the last two characters of the current word (i.e. not the neighbor) to the word starting with those two characters. Again, output the neighbor. Do this until your current word (not the neighbor) has no match anymore. \\
        \vspace{1mm}
        \textbf{\textsc{Example}}: xa wo ri1ky kg3vt ky1pa mx6yl xa5fs yl6et cn0ry wo7du vx2ot tu7il ai1cn kg0vt ei7ex yl1et gk2qm mx4yl ot0ob fd2mx sn0ag fy7gn cw7hz du3kg ci5vx wo6du fs2ai et6fy pa6cw et2fy qm7gj du6kg xw6bn vt5fd bn7ei gn7tu zt7bl tu1il ag2xw mo7kd bl3gk vt6fd hz0sn nt4vr ob6ri ez2bj ry7dd fy6gn wi2zt fd1mx dd1wi gn3tu \\
        \vspace{1mm}
        First we find the matching sequence of words, then we find the neighbors. \\
        The first word is "xa".\\
        Now we need to find a word that starts with "xa". The word is "xa5fs".\\
        Now we need to find a word that starts with "fs". The word is "fs2ai".\\
        Now we need to find a word that starts with "ai". The word is "ai1cn".\\
        Now we need to find a word that starts with "cn". The word is "cn0ry".\\
        Now we need to find a word that starts with "ry". The word is "ry7dd".\\
        Now we need to find a word that starts with "dd". The word is "dd1wi".\\
        Now we need to find a word that starts with "wi". The word is "wi2zt".\\
        Now we need to find a word that starts with "zt". The word is "zt7bl".\\
        Now we need to find a word that starts with "bl". The word is "bl3gk".\\
        Now we need to find a word that starts with "gk". The word is "gk2qm".\\
        Now we need to find a word that starts with "qm". The word is "qm7gj".\\
        There is no word that starts with "gj", so we are done with the matching.\\
        Now we need to find the neighbors of the matched words.\\
        The right neighbor of "xa" is "wo".\\
        The right neighbor of "xa5fs" is "yl6et".\\
        The right neighbor of "fs2ai" is "et6fy".\\
        The right neighbor of "ai1cn" is "kg0vt".\\
        The right neighbor of "cn0ry" is "wo7du".\\
        The right neighbor of "ry7dd" is "fy6gn".\\
        The right neighbor of "dd1wi" is "gn3tu".\\
        The right neighbor of "wi2zt" is "fd1mx".\\
        The right neighbor of "zt7bl" is "tu1il".\\
        The right neighbor of "bl3gk" is "vt6fd".\\
        The right neighbor of "gk2qm" is "mx4yl".\\
        The right neighbor of "qm7gj" is "du6kg".\\
        Therefore the answer is: "wo yl6et et6fy kg0vt wo7du fy6gn gn3tu fd1mx tu1il vt6fd mx4yl du6kg"\\
        \vspace{1mm}
        <7 MORE EXAMPLES>\\
        \vspace{1mm}
        \textbf{\textsc{Your question}}: zg kt tm2wb sj1re lg1at pb4sx la3ve qd3wa sc0iu my2sb wb6lg my1sb uq3ms zj1qd da2tm sx7sj xy6ny kt0dq ny7tn re4my os5xu sj2re tx5kp dq3zp yx5da sb2zj na3sc re5my gs5na sb4zj ly4qx qd0wa tn2ma dq6zp ma3gp zp3pb du2tx zj6qd fa4ly tz3cc vp3xy ry7ie ve6gs pb7sx do1ti ow5yy at6uq yk3wc gp7fa gg5eb ti5vp cv0dh ms6do bz0ut qx5bc jd3qb zg6os sx0sj iu6du kt6dq xu6la zp2pb bc4oz yb6vn oe4yx li5fz \\
        Reason step by step. Clearly mark your answer by writing 'Answer: <your answer>' as last line. \\
        \bottomrule
    \end{tabular}
    \caption{GPT4 and Gemini prompt for the Pointer Execution's neighbor task with sub-task chain of thought (Sub-task CoT) examples and a description.}
    \label{fig:ST_CoT_prompt}
\end{table}
\endgroup
\vfill
\newpage

In Table \ref{fig:PERM_ST_CoT}, we include the advanced sub-task few-shot CoT for the PERM task.
\begingroup
\begin{table}[h!]
    \centering
    \scriptsize
    \begin{tabular}{p{\linewidth}}
        \toprule
        \vspace{-2mm}
        I give you a sequence of words. The last word (after the "|") is the word to start with. Now match match the last two characters of the current word to the word starting with those two characters. If this match was going to the left, i.e. the matched word is left of the current word in the sequence, increase a variable counting the number of left matchings. Do this until your current word has no match anymore. \\
        Finally, output this sequence of words, in reverse order in the format word.x where x is the number of left matchings until the output word times the number of matchings until the output word. Example answer: abcd.4 efab.1 ghef.0 \\
        \vspace{1mm}
        \textbf{\textsc{Example}}: kp0ms gg0hy pk0tq go0ey mf0kp ms0jd hl0go vu0vu vl0gg bn0vl ar0pk tq0bn jd0hl hy0jm ey0oy oy0mf gy0do | ar0pk \\
        First, let's enumerate the words: \\
        1:kp0ms\\
        2:gg0hy\\
        3:pk0tq\\
        4:go0ey\\
        5:mf0kp\\
        6:ms0jd\\
        7:hl0go\\
        8:vu0vu\\
        9:vl0gg\\
        10:bn0vl\\
        11:ar0pk\\
        12:tq0bn\\
        13:jd0hl\\
        14:hy0jm\\
        15:ey0oy\\
        16:oy0mf\\
        17:gy0do\\
        \vspace{-1mm}
        Starting with "ar0pk", let's match and calculate:\\
        "ar0pk" matches with "pk0tq". The word "ar0pk" is 11th and "pk0tq" is 3th, so 1 left matches so far.\\
        "pk0tq" matches with "tq0bn". The word "pk0tq" is 3th and "tq0bn" is 12th, so 1 left matches so far.\\
        "tq0bn" matches with "bn0vl". The word "tq0bn" is 12th and "bn0vl" is 10th, so 2 left matches so far.\\
        "bn0vl" matches with "vl0gg". The word "bn0vl" is 10th and "vl0gg" is 9th, so 3 left matches so far.\\
        "vl0gg" matches with "gg0hy". The word "vl0gg" is 9th and "gg0hy" is 2th, so 4 left matches so far.\\
        "gg0hy" matches with "hy0jm". The word "gg0hy" is 2th and "hy0jm" is 14th, so 4 left matches so far.\\
        There are no further matches for "hy0jm", so we end the sequence here.\\
        \vspace{-1mm}
        Finally, we calculate the number of left matches times the number of matches for each word and get:\\
        ar0pk: 0*0=0\\
        pk0tq: 1*1=1\\
        tq0bn: 1*2=2\\
        bn0vl: 2*3=6\\
        vl0gg: 3*4=12\\
        gg0hy: 4*5=20\\
        hy0jm: 4*6=24\\
        \vspace{-1mm}
        Thus, the answer is: "hy0jm.24 gg0hy.20 vl0gg.12 bn0vl.6 tq0bn.2 pk0tq.1 ar0pk.0".\\
        \vspace{1mm}
        <7 MORE EXAMPLES>\\
        \vspace{1mm}
        Your question: tb0tb sz0sp ls0tf vs0ch ek0in zg0ek ut0sl sp0sv um0ls rh0sz hy0kt sl0vs nh0ut in0hy tf0um sv0nh kt0zg nu0cb | rh0sz \\
        Reason step by step. Clearly mark your answer by writing 'Answer: <your answer>' as last line. \\
        \bottomrule
    \end{tabular}
    \caption{GPT4 and Gemini prompt for Pointer Execution Reverse Multicount with description and sub-task few-shot chain of thought (Sub-task CoT). This chain-of-thought makes sure that the language model can easily execute all operations. In our experiments, for GPT4 to reliably determine whether in which order two words occur in a sequence needed enumeration.}
    \label{fig:PERM_ST_CoT}
\end{table}
\endgroup

In Table \ref{fig:PEN_code_prompt}, we include the prompt that was used for the code interepreter.
\begingroup
\begin{table}[h!]
    \centering
    \scriptsize
    \begin{tabular}{p{\linewidth}}
        \toprule
        \vspace{-2mm}
        I give you a sequence of words. Each word has four characters plus a middle, words are separated by spaces. Start with the leftmost word. Output its neighbor. 
        Then, match the last two characters of the current word (i.e. not the neighbor) to the word starting with those two characters. Again, output the neighbor. Do this until your current word (not the neighbor) has no match anymore. \\
        \vspace{1mm}
        \textbf{\textsc{Example}}: bn oi fe6zt vj4ks yk3us fi6mk lo0ox yd5pt dc2rk mk3vj wj2fe wk0bm yo6kt ks5wk ox1yk wk6bm gc7kj oi1fi ah7gc ks0wk vq6ah gv7yd as4vq bm3gv vs3ho vj0ks kj0au yd6pt bn7dd mk2vj qg5lx pt4xv ho7lo oi5fi au7qg fi0mk zt3as nd4zd kt1vs gv0yd lx2dc fa0ut dd5ry pt3xv ry7yo bm0gv
        \textbf{\textsc{Answer}}: oi mk2vj pt3xv bm0gv ks5wk gv0yd vj0ks oi5fi yd5pt wk6bm fi6mk
        \vspace{1mm}
        \textbf{\textsc{Your question}}: bc ek vq7ze pe7mu vu5pf nu1sb zg2uf vd5vn ep3qt ek2oj ng6ep ig7is bc4sx is2nu sx3lq pe2mu lq4oo sb1vd wh0lw oj7ax qb2vq mu4xm hq6eu nu0sb pf3qb ax2pe yh7bm sb0vd gz5vu vd1vn fa5sj oj0ax eu0zg ek4oj ze3gx xm3ig oo5hq mu7xm bp7gz is7nu lw3ng vf7ko sj3jq xm4ig xp0fa ax6pe gx3wh qc6tc uf5xp ig3is qt5yh rs5lk \\
        \vspace{-1mm}
        Reason step by step. Then, use the code interpreter to solve the task. Clearly mark your answer by writing 'Answer: <your answer>' as last line. \\
        \bottomrule
    \end{tabular}
    \caption{GPT4 prompt for the Pointer Execution's neighbor task with code interpreter.}
    \label{fig:PEN_code_prompt}
\end{table}
\endgroup

\newpage

\begingroup
\begin{table}[h!]
    \centering
    \scriptsize
    \begin{tabular}{p{\linewidth}}
        \toprule
        \vspace{-2mm}
        I give you a sequence of words. The last word (after the "|") is the word to start with. Now match match the last two characters of the current word to the word starting with those two characters. If this match was going to the left, i.e. the matched word is left of the current word in the sequence, increase a variable counting the number of left matchings. Do this until your current word has no match anymore. 
        Finally, output this sequence of words, in reverse order in the format word.x where x is the number of left matchings until the output word times the number of matchings until the output word. 
        Example answer: abcd.4 efab.1 ghef.0 \\
        \vspace{1mm}        
        \textbf{\textsc{Example}}: ud0xg wp0mr yy0uo xg0yy sr0mw pg0yg oq0zt mw0oq uo0bt ep0ep rs0av bt0oi oi0ud mr0pg oc0wp av0oc tz0tb yg0tz tb0rs vk0sx | sr0mw \\
        Starting with "sr0mw", let's match and calculate:\\
        "sr0mw" matches with "mw0oq". No left match, 0 left matches so far.\\
        "mw0oq" matches with "oq0zt". This is a left match, 1 left matches so far.\\
        There are no further matches for "oq0zt", so we end the sequence here.\\
        Reversing the sequence and formatting it as per the instructions, we get:\\
        oq0zt.2\\
        mw0oq.0\\
        sr0mw.0\\
        Thus, the answer is: "oq0zt.2 mw0oq.0 sr0mw.0". \\
        \vspace{1mm}
        \textbf{\textsc{Example}}: vs0ep xv0tx me0xt xx0ds ds0re re0rm tx0tg sh0vs xt0bw pf0ss gh0sh ep0me bw0xv rm0gh ss0pf kh0of | xx0ds\\
        Starting with "xx0ds", let's match and calculate:\\
        "xx0ds" matches with "ds0re". No left match, 0 left matches so far.\\
        "ds0re" matches with "re0rm". No left match, 0 left matches so far.\\
        "re0rm" matches with "rm0gh". No left match, 0 left matches so far.\\
        "rm0gh" matches with "gh0sh". This is a left match, 1 left matches so far.\\
        "gh0sh" matches with "sh0vs". This is a left match, 2 left matches so far.\\
        "sh0vs" matches with "vs0ep". This is a left match, 3 left matches so far.\\
        "vs0ep" matches with "ep0me". No left match, 3 left matches so far.\\
        "ep0me" matches with "me0xt". This is a left match, 4 left matches so far.\\
        "me0xt" matches with "xt0bw". No left match, 4 left matches so far.\\
        "xt0bw" matches with "bw0xv". No left match, 4 left matches so far.\\
        "bw0xv" matches with "xv0tx". This is a left match, 5 left matches so far.\\
        "xv0tx" matches with "tx0tg". No left match, 5 left matches so far.\\
        There are no further matches for "tx0tg", so we end the sequence here.\\
        \vspace{-1mm}
        Reversing the sequence and formatting it as per the instructions, we get:\\
        tx0tg.60\\
        xv0tx.55\\
        bw0xv.40\\
        xt0bw.36\\
        me0xt.32\\
        ep0me.21\\
        vs0ep.18\\
        sh0vs.10\\
        gh0sh.4\\
        rm0gh.0\\
        re0rm.0\\
        ds0re.0\\
        xx0ds.0\\
        \vspace{-1mm}
        Thus, the answer is: "tx0tg.60 xv0tx.55 bw0xv.40 xt0bw.36 me0xt.32 ep0me.21 vs0ep.18 sh0vs.10 gh0sh.4 rm0gh.0 re0rm.0 ds0re.0 xx0ds.0".\\
        \vspace{1mm}
        <7 MORE EXAMPLES> \\
        \vspace{1mm}
        \textbf{\textsc{Your question}}: lw0ws gs0gs eq0eq yz0pt ws0lw um0qd ea0ea tf0um df0df uo0tf fl0uo pt0yz po0ec | fl0uo \\ 
        \vspace{-1mm}
        Reason step by step, Clearly mark your answer by writing 'Answer: <your answer>' as last line. \\
        \bottomrule
    \end{tabular}
    \caption{GPT4 and Gemini prompt for Pointer Execution Reverse Multicount with description and fewshot-chain-of-thought we found to be similar to the chain of thought the model chooses naturally. The 0s in the input words are inserted to make sure, GPT-4 encodes each word start and word end as one token and therefore can match more easily.}
\end{table}
\endgroup

\newpage
\section*{NeurIPS Paper Checklist}

\begin{enumerate}

\item {\bf Claims}
    \item[] Question: Do the main claims made in the abstract and introduction accurately reflect the paper's contributions and scope?
    \item[] Answer: \answerYes{}
    \item[] Justification: The abstract describes testing LLMs on compositional tasks and describes the most important results. It also mentions theoretical contributions. By describing the key points of the paper, the scope is reflected.
    \item[] Guidelines:
    \begin{itemize}
        \item The answer NA means that the abstract and introduction do not include the claims made in the paper.
        \item The abstract and/or introduction should clearly state the claims made, including the contributions made in the paper and important assumptions and limitations. A No or NA answer to this question will not be perceived well by the reviewers. 
        \item The claims made should match theoretical and experimental results, and reflect how much the results can be expected to generalize to other settings. 
        \item It is fine to include aspirational goals as motivation as long as it is clear that these goals are not attained by the paper. 
    \end{itemize}

\item {\bf Limitations}
    \item[] Question: Does the paper discuss the limitations of the work performed by the authors?
    \item[] Answer:\answerYes{}
    \item[] Justification: The paper explicitly discusses the limitations in section \ref{sec:conclusion-limitations-futurework} and tries to lay out all information for critical reading.
    \item[] Guidelines:
    \begin{itemize}
        \item The answer NA means that the paper has no limitation while the answer No means that the paper has limitations, but those are not discussed in the paper. 
        \item The authors are encouraged to create a separate "Limitations" section in their paper.
        \item The paper should point out any strong assumptions and how robust the results are to violations of these assumptions (e.g., independence assumptions, noiseless settings, model well-specification, asymptotic approximations only holding locally). The authors should reflect on how these assumptions might be violated in practice and what the implications would be.
        \item The authors should reflect on the scope of the claims made, e.g., if the approach was only tested on a few datasets or with a few runs. In general, empirical results often depend on implicit assumptions, which should be articulated.
        \item The authors should reflect on the factors that influence the performance of the approach. For example, a facial recognition algorithm may perform poorly when image resolution is low or images are taken in low lighting. Or a speech-to-text system might not be used reliably to provide closed captions for online lectures because it fails to handle technical jargon.
        \item The authors should discuss the computational efficiency of the proposed algorithms and how they scale with dataset size.
        \item If applicable, the authors should discuss possible limitations of their approach to address problems of privacy and fairness.
        \item While the authors might fear that complete honesty about limitations might be used by reviewers as grounds for rejection, a worse outcome might be that reviewers discover limitations that aren't acknowledged in the paper. The authors should use their best judgment and recognize that individual actions in favor of transparency play an important role in developing norms that preserve the integrity of the community. Reviewers will be specifically instructed to not penalize honesty concerning limitations.
    \end{itemize}

\item {\bf Theory Assumptions and Proofs}
    \item[] Question: For each theoretical result, does the paper provide the full set of assumptions and a complete (and correct) proof?
    \item[] Answer: \answerYes{} 
    \item[] Justification: The paper provides the proofs and assumptions in Appendix \ref{appendix:theory}.
    \item[] Guidelines:
    \begin{itemize}
        \item The answer NA means that the paper does not include theoretical results. 
        \item All the theorems, formulas, and proofs in the paper should be numbered and cross-referenced.
        \item All assumptions should be clearly stated or referenced in the statement of any theorems.
        \item The proofs can either appear in the main paper or the supplemental material, but if they appear in the supplemental material, the authors are encouraged to provide a short proof sketch to provide intuition. 
        \item Inversely, any informal proof provided in the core of the paper should be complemented by formal proofs provided in appendix or supplemental material.
        \item Theorems and Lemmas that the proof relies upon should be properly referenced. 
    \end{itemize}

    \item {\bf Experimental Result Reproducibility}
    \item[] Question: Does the paper fully disclose all the information needed to reproduce the main experimental results of the paper to the extent that it affects the main claims and/or conclusions of the paper (regardless of whether the code and data are provided or not)?
    \item[] Answer: \answerYes{} 
    \item[] Justification: We provide extensive information to allow the replication of our experiments in Appendix \ref{appendix:tasks}, Appendix \ref{appendix:llama-training}, and Appendix \ref{appendix:prompts}.
    \item[] Guidelines:
    \begin{itemize}
        \item The answer NA means that the paper does not include experiments.
        \item If the paper includes experiments, a No answer to this question will not be perceived well by the reviewers: Making the paper reproducible is important, regardless of whether the code and data are provided or not.
        \item If the contribution is a dataset and/or model, the authors should describe the steps taken to make their results reproducible or verifiable. 
        \item Depending on the contribution, reproducibility can be accomplished in various ways. For example, if the contribution is a novel architecture, describing the architecture fully might suffice, or if the contribution is a specific model and empirical evaluation, it may be necessary to either make it possible for others to replicate the model with the same dataset, or provide access to the model. In general. releasing code and data is often one good way to accomplish this, but reproducibility can also be provided via detailed instructions for how to replicate the results, access to a hosted model (e.g., in the case of a large language model), releasing of a model checkpoint, or other means that are appropriate to the research performed.
        \item While NeurIPS does not require releasing code, the conference does require all submissions to provide some reasonable avenue for reproducibility, which may depend on the nature of the contribution. For example
        \begin{enumerate}
            \item If the contribution is primarily a new algorithm, the paper should make it clear how to reproduce that algorithm.
            \item If the contribution is primarily a new model architecture, the paper should describe the architecture clearly and fully.
            \item If the contribution is a new model (e.g., a large language model), then there should either be a way to access this model for reproducing the results or a way to reproduce the model (e.g., with an open-source dataset or instructions for how to construct the dataset).
            \item We recognize that reproducibility may be tricky in some cases, in which case authors are welcome to describe the particular way they provide for reproducibility. In the case of closed-source models, it may be that access to the model is limited in some way (e.g., to registered users), but it should be possible for other researchers to have some path to reproducing or verifying the results.
        \end{enumerate}
    \end{itemize}

\item {\bf Open access to data and code}
    \item[] Question: Does the paper provide open access to the data and code, with sufficient instructions to faithfully reproduce the main experimental results, as described in supplemental material?
    \item[] Answer: \answerYes{} 
    \item[] Justification: We open source our code at {\url{https://github.com/IBM/limitations-lm-algorithmic-compositional-learning}}.
    \item[] Guidelines:
    \begin{itemize}
        \item The answer NA means that paper does not include experiments requiring code.
        \item Please see the NeurIPS code and data submission guidelines (\url{https://nips.cc/public/guides/CodeSubmissionPolicy}) for more details.
        \item While we encourage the release of code and data, we understand that this might not be possible, so “No” is an acceptable answer. Papers cannot be rejected simply for not including code, unless this is central to the contribution (e.g., for a new open-source benchmark).
        \item The instructions should contain the exact command and environment needed to run to reproduce the results. See the NeurIPS code and data submission guidelines (\url{https://nips.cc/public/guides/CodeSubmissionPolicy}) for more details.
        \item The authors should provide instructions on data access and preparation, including how to access the raw data, preprocessed data, intermediate data, and generated data, etc.
        \item The authors should provide scripts to reproduce all experimental results for the new proposed method and baselines. If only a subset of experiments are reproducible, they should state which ones are omitted from the script and why.
        \item At submission time, to preserve anonymity, the authors should release anonymized versions (if applicable).
        \item Providing as much information as possible in supplemental material (appended to the paper) is recommended, but including URLs to data and code is permitted.
    \end{itemize}

\item {\bf Experimental Setting/Details}
    \item[] Question: Does the paper specify all the training and test details (e.g., data splits, hyperparameters, how they were chosen, type of optimizer, etc.) necessary to understand the results?
    \item[] Answer: \answerYes{} 
    \item[] Justification: Details on the training procedure of our models are included in Appendix \ref{appendix:llama-training-details}.
    \item[] Guidelines:
    \begin{itemize}
        \item The answer NA means that the paper does not include experiments.
        \item The experimental setting should be presented in the core of the paper to a level of detail that is necessary to appreciate the results and make sense of them.
        \item The full details can be provided either with the code, in appendix, or as supplemental material.
    \end{itemize}

\item {\bf Experiment Statistical Significance}
    \item[] Question: Does the paper report error bars suitably and correctly defined or other appropriate information about the statistical significance of the experiments?
    \item[] Answer: \answerNo{} 
    \item[] Justification: To make our statements, a certain trend is shown (high data inefficiency). Multiple runs could enhance the precision of our data points, but it seems irrelevant to our main statements, showing trends of data inefficiency and is also handled this way in other papers training LLMs from scratch.
    \item[] Guidelines:
    \begin{itemize}
        \item The answer NA means that the paper does not include experiments.
        \item The authors should answer "Yes" if the results are accompanied by error bars, confidence intervals, or statistical significance tests, at least for the experiments that support the main claims of the paper.
        \item The factors of variability that the error bars are capturing should be clearly stated (for example, train/test split, initialization, random drawing of some parameter, or overall run with given experimental conditions).
        \item The method for calculating the error bars should be explained (closed form formula, call to a library function, bootstrap, etc.)
        \item The assumptions made should be given (e.g., Normally distributed errors).
        \item It should be clear whether the error bar is the standard deviation or the standard error of the mean.
        \item It is OK to report 1-sigma error bars, but one should state it. The authors should preferably report a 2-sigma error bar than state that they have a 96\% CI, if the hypothesis of Normality of errors is not verified.
        \item For asymmetric distributions, the authors should be careful not to show in tables or figures symmetric error bars that would yield results that are out of range (e.g. negative error rates).
        \item If error bars are reported in tables or plots, The authors should explain in the text how they were calculated and reference the corresponding figures or tables in the text.
    \end{itemize}

\item {\bf Experiments Compute Resources}
    \item[] Question: For each experiment, does the paper provide sufficient information on the computer resources (type of compute workers, memory, time of execution) needed to reproduce the experiments?
    \item[] Answer: \answerNo{} 
    \item[] Justification: This information is inferable by looking at our hyperparameter settings, e.g. in \ref{appendix:llama-training-details} and depends on the precise framework used (e.g. Pytorch Lightning), which we refer to other resources.
    \item[] Guidelines:
    \begin{itemize}
        \item The answer NA means that the paper does not include experiments.
        \item The paper should indicate the type of compute workers CPU or GPU, internal cluster, or cloud provider, including relevant memory and storage.
        \item The paper should provide the amount of compute required for each of the individual experimental runs as well as estimate the total compute. 
        \item The paper should disclose whether the full research project required more compute than the experiments reported in the paper (e.g., preliminary or failed experiments that didn't make it into the paper). 
    \end{itemize}
    
\item {\bf Code Of Ethics}
    \item[] Question: Does the research conducted in the paper conform, in every respect, agree with the NeurIPS Code of Ethics \url{https://neurips.cc/public/EthicsGuidelines}?
    \item[] Answer: \answerYes{} 
    \item[] Justification: To the best of our knowledge, our work has no societal impact besides advancing knowledge on compositionality in LLMs. 
    \item[] Guidelines:
    \begin{itemize}
        \item The answer NA means that the authors have not reviewed the NeurIPS Code of Ethics.
        \item If the authors answer No, they should explain the special circumstances that require a deviation from the Code of Ethics.
        \item The authors should make sure to preserve anonymity (e.g., if there is a special consideration due to laws or regulations in their jurisdiction).
    \end{itemize}

\item {\bf Broader Impacts}
    \item[] Question: Does the paper discuss both potential positive societal impacts and negative societal impacts of the work performed?
    \item[] Answer: \answerNA{} 
    \item[] Justification: To the best of our knowledge, our work has no societal impact besides advancing knowledge on compositionality in LLMs. 
    \item[] Guidelines:
    \begin{itemize}
        \item The answer NA means that there is no societal impact of the work performed.
        \item If the authors answer NA or No, they should explain why their work has no societal impact or why the paper does not address societal impact.
        \item Examples of negative societal impacts include potential malicious or unintended uses (e.g., disinformation, generating fake profiles, surveillance), fairness considerations (e.g., deployment of technologies that could make decisions that unfairly impact specific groups), privacy considerations, and security considerations.
        \item The conference expects that many papers will be foundational research and not tied to particular applications, let alone deployments. However, if there is a direct path to any negative applications, the authors should point it out. For example, it is legitimate to point out that an improvement in the quality of generative models could be used to generate deepfakes for disinformation. On the other hand, it is not needed to point out that a generic algorithm for optimizing neural networks could enable people to train models that generate Deepfakes faster.
        \item The authors should consider possible harms that could arise when the technology is being used as intended and functioning correctly, harms that could arise when the technology is being used as intended but gives incorrect results, and harms following from (intentional or unintentional) misuse of the technology.
        \item If there are negative societal impacts, the authors could also discuss possible mitigation strategies (e.g., gated release of models, providing defenses in addition to attacks, mechanisms for monitoring misuse, mechanisms to monitor how a system learns from feedback over time, improving the efficiency and accessibility of ML).
    \end{itemize}
    
\item {\bf Safeguards}
    \item[] Question: Does the paper describe safeguards that have been put in place for responsible release of data or models that have a high risk for misuse (e.g., pretrained language models, image generators, or scraped datasets)?
    \item[] Answer: \answerNA{}. 
    \item[] Justification: We work with randomly generated synthetic textual data which does not contain any information which would make it susceptible to misuse.
    \item[] Guidelines:
    \begin{itemize}
        \item The answer NA means that the paper poses no such risks.
        \item Released models that have a high risk for misuse or dual-use should be released with necessary safeguards to allow for controlled use of the model, for example by requiring that users adhere to usage guidelines or restrictions to access the model or implementing safety filters. 
        \item Datasets that have been scraped from the Internet could pose safety risks. The authors should describe how they avoided releasing unsafe images.
        \item We recognize that providing effective safeguards is challenging, and many papers do not require this, but we encourage authors to take this into account and make a best faith effort.
    \end{itemize}

\item {\bf Licenses for existing assets}
    \item[] Question: Are the creators or original owners of assets (e.g., code, data, models), used in the paper, properly credited and are the license and terms of use explicitly mentioned and properly respected?
    \item[] Answer: \answerYes{} 
    \item[] Justification: We cite every work that contributed to our work.
    \item[] Guidelines:
    \begin{itemize}
        \item The answer NA means that the paper does not use existing assets.
        \item The authors should cite the original paper that produced the code package or dataset.
        \item The authors should state which version of the asset is used and, if possible, include a URL.
        \item The name of the license (e.g., CC-BY 4.0) should be included for each asset.
        \item For scraped data from a particular source (e.g., website), the copyright and terms of service of that source should be provided.
        \item If assets are released, the license, copyright information, and terms of use in the package should be provided. For popular datasets, \url{paperswithcode.com/datasets} has curated licenses for some datasets. Their licensing guide can help determine the license of a dataset.
        \item For existing datasets that are re-packaged, both the original license and the license of the derived asset (if it has changed) should be provided.
        \item If this information is not available online, the authors are encouraged to reach out to the asset's creators.
    \end{itemize}

\item {\bf New Assets}
    \item[] Question: Are new assets introduced in the paper well documented and is the documentation provided alongside the assets?
    \item[] Answer: \answerYes{} 
    \item[] Justification: The data generation hyperparameters are given, and the code is simple to use.
    \item[] Guidelines:
    \begin{itemize}
        \item The answer NA means that the paper does not release new assets.
        \item Researchers should communicate the details of the dataset/code/model as part of their submissions via structured templates. This includes details about training, license, limitations, etc. 
        \item The paper should discuss whether and how consent was obtained from people whose asset is used.
        \item At submission time, remember to anonymize your assets (if applicable). You can either create an anonymized URL or include an anonymized zip file.
    \end{itemize}

\item {\bf Crowdsourcing and Research with Human Subjects}
    \item[] Question: For crowdsourcing experiments and research with human subjects, does the paper include the full text of instructions given to participants and screenshots, if applicable, as well as details about compensation (if any)? 
    \item[] Answer: \answerNA{} 
    \item[] Justification: We do not use crowdsourcing or human experiments.
    \item[] Guidelines:
    \begin{itemize}
        \item The answer NA means that the paper does not involve crowdsourcing nor research with human subjects.
        \item Including this information in the supplemental material is fine, but if the main contribution of the paper involves human subjects, then as much detail as possible should be included in the main paper. 
        \item According to the NeurIPS Code of Ethics, workers involved in data collection, curation, or other labor should be paid at least the minimum wage in the country of the data collector. 
    \end{itemize}

\item {\bf Institutional Review Board (IRB) Approvals or Equivalent for Research with Human Subjects}
    \item[] Question: Does the paper describe potential risks incurred by study participants, whether such risks were disclosed to the subjects, and whether Institutional Review Board (IRB) approvals (or an equivalent approval/review based on the requirements of your country or institution) were obtained?
    \item[] Answer: \answerNA{} 
    \item[] Justification: We do not use crowdsourcing or human experiments.
    \item[] Guidelines:
    \begin{itemize}
        \item The answer NA means that the paper does not involve crowdsourcing nor research with human subjects.
        \item Depending on the country in which research is conducted, IRB approval (or equivalent) may be required for any human subjects research. If you obtained IRB approval, you should clearly state this in the paper. 
        \item We recognize that the procedures for this may vary significantly between institutions and locations, and we expect authors to adhere to the NeurIPS Code of Ethics and the guidelines for their institution. 
        \item For initial submissions, do not include any information that would break anonymity (if applicable), such as the institution conducting the review.
    \end{itemize}
\end{enumerate}

\end{document}